\newcommand{\defeq}{\mathrel{\mathop:}=}
\newcommand{\lb}{\bar{\lambda}}
\newcommand{\lh}{\widehat{\lambda}}
\newcommand{\xhatL}{\widehat{x}_{\frac{1}{2L}}}
\newcommand{\Ltilde}{\widehat{L}}
\newcommand{\ghat}{\widehat{g}}
\newcommand{\cU}{\mathcal{U}}
\newcommand{\tx}{\widehat{x}}
\newcommand{\lambdahat}{\widehat{\lambda}}
\newcommand{\xsl}{x^*(\lambda)}
\newcommand{\xslh}{x^*(\lambdahat)}
\newcommand{\epsilontilde}{\widetilde{\epsilon}}
\newcommand{\otilde}[1]{\widetilde{O}\left(#1\right)}
\newcommand{\tepsilon}{\widehat{\epsilon}}
\newcommand{\vect}[1]{\ensuremath{\mathbf{#1}}}
\newcommand{\grad}{\nabla}
\newcommand{\hess}{\nabla^2}
\newcommand{\argmin}{\mathop{\mathrm{argmin}}}
\newcommand{\argmax}{\mathop{\mathrm{argmax}}}
\newcommand{\iprod}[2]{\langle #1, #2 \rangle}
\newcommand{\set}[1]{\left\{{#1}\right\}}
\newcommand{\abs}[1]{\left|{#1}\right|}
\newcommand{\norm}[1]{\|{#1} \|}
\newcommand{\poly}{\mathrm{poly}}
\newcommand{\cP}{\mathcal{P}}
\newcommand{\R}{\mathbb{R}}
\newcommand{\E}{\mathbb{E}}
\newcommand{\xtilde}{\vect{\widetilde{x}}}
\newcommand{\yhat}{{\widehat{y}}}
\newcommand{\cD}{\mathcal{D}}
\newcommand{\cA}{\mathcal{A}}
\newcommand{\cX}{\mathcal{X}}
\newcommand{\cY}{\mathcal{Y}}
\newcommand{\order}[1]{O\left(#1\right)}
\newtheorem{theorem}{Theorem}
\newtheorem{assumption}{Assumption}
\newtheorem{definition}{Definition}
\newtheorem{lemma}{Lemma}
\newtheorem{proposition}{Proposition}
\newcommand{\ty}{\tilde{y}}
\newcommand{\hG}{\widehat{G}}
\title{\fontsize{15pt}{15pt}\selectfont
 \textbf{Minimax Optimization with Smooth Algorithmic Adversaries}}
\newcommand\blfootnote[1]{%
  \begingroup
  \renewcommand\thefootnote{}\footnote{#1}%
  \addtocounter{footnote}{-1}%
  \endgroup
}
\author{
 Tanner Fiez\footnotemark[1] 
  \and
  Chi Jin\footnotemark[2] 
  \and
  Praneeth Netrapalli\footnotemark[3] 
  \and
  Lillian J.~Ratliff\footnotemark[1]\blfootnote{The author emails are \textup{fiezt@uw.edu}, \textup{chij@princeton.edu}, \textup{pnetrapalli@google.com}, and \textup{ratliffl@uw.edu}.}
}
\date{\footnotemark[1] University of Washington, \footnotemark[2] Princeton University, \footnotemark[3] Google Research, India}
\begin{document}

\maketitle

\begin{abstract}
This paper considers minimax optimization $\min_x \max_y f(x, y)$ in the challenging setting where $f$ can be both nonconvex in $x$ and nonconcave in $y$. Though such optimization problems arise in many machine learning paradigms including training generative adversarial networks (GANs) and adversarially robust models, many fundamental issues remain in theory, such as the absence of efficiently computable optimality notions, and cyclic or diverging behavior of existing algorithms.
Our framework sprouts from the practical consideration that under a computational budget, the max-player can not fully maximize $f(x,\cdot)$ since nonconcave maximization is NP-hard in general. So, we propose a new algorithm for the min-player to play against \emph{smooth algorithms} deployed by the adversary (i.e., the max-player) instead of against full maximization. Our algorithm is guaranteed to make monotonic progress (thus having no limit cycles), and to find an appropriate ``stationary point'' in a polynomial number of iterations. Our framework covers practical settings where the smooth algorithms deployed by the adversary are multi-step stochastic gradient ascent, and its accelerated version. We further provide complementing experiments that confirm our theoretical findings and demonstrate the effectiveness of the proposed approach in practice.
\end{abstract}


\section{Introduction}
This paper considers minimax optimization $\min_x \max_y f(x,y)$ in the context of two-player zero-sum games, where 
the min-player (controlling $x$) tries to minimize objective $f$ assuming a worst-case opponent (controlling $y$) that acts so as to maximize it.
Minimax optimization naturally arises in a variety of important machine learning paradigms, with the most prominent examples being the training of generative adversarial networks (GANs)~\cite{goodfellow2014generative} and adversarially robust models~\cite{madry2017towards}. These applications commonly engage deep neural networks with various techniques such as convolution, recurrent layers, and batch normalization. As a result, the objective function $f$ is highly \emph{nonconvex} in $x$ and \emph{nonconcave} in $y$.

Theoretically, minimax optimization has been extensively studied starting from the seminal work of von Neumann~\cite{neumann1928theorie}, with many efficient algorithms proposed for solving it~\cite{robinson1951iterative,korpelevich1976extragradient,nemirovski2004prox}. A majority of these classical results have been focused on \emph{convex-concave} functions, and heavily rely on the minimax theorem, i.e., $\min_x \max_y f(x,y) = \max_y \min_x f(x,y)$, which no longer holds beyond the convex-concave setting. Recent line of works~\cite{lin2020gradient,nouiehed2019solving,thekumparampil2019efficient,lin2020near,ostrovskii2020efficient} address the \emph{nonconvex-concave} setting where $f$ is nonconvex in $x$ but concave in $y$ by proposing meaningful optimality notions and designing computationally efficient algorithms to find such points. A crucial property heavily exploited in this setting is that 
the inner maximization over $y$ given a fixed $x$ can be computed efficiently, which unfortunately does not extend to the \emph{nonconvex-nonconcave} setting.

Consequently, nonconvex-nonconcave optimization remains challenging, and many fundamental issues persist: it remains open what is an appropriate notion of optimality that can be computed efficiently; it is also unsettled on how to eliminate the  cyclic or diverging behavior of existing algorithms. Practitioners often use simple and popular algorithms such as gradient descent ascent (GDA) and other variants for solving these challenging optimization problems. While these algorithm seem to perform well in some cases of adversarial training, they are highly unstable in other scenarios such as training GANs. Indeed the instability of GDA and other empirically popular methods is not surprising since they are known to not converge even in very simple settings~\cite{daskalakis2018limit,bailey2020finite}. This current state of affairs strongly motivates the need to understand nonconvex-nonconcave minimax optimization more thoroughly and to design better algorithms for solving them.

\textbf{This work} considers the challenging nonconvex-nonconcave setting. Our framework sprouts from the practical consideration that under a computational budget, the max-player can not fully maximize $f(x,\cdot)$ since nonconcave maximization is NP-hard in general. Instead, we assume that the max-player has a toolkit of multiple (potentially randomized) algorithms $\cA_1,\cA_2,\cdots,\cA_k$ in an attempt to solve the maximization problem given fixed $x$, and picks the best solution among these algorithms. This motivates us to study the surrogate of the minimax optimization problem as
\begin{align}\label{eqn:minimax-reform}
    \textstyle \min_x \max_{i \in [k]} f(x,\cA_i(x)) = \min_x \max_{\lambda \in \Delta_k} \sum_{i=1}^{k} \lambda_i f(x,\cA_i(x)),
\end{align}
where $\Delta_k$ denotes the $k$-dimensional simplex, and $\cA_i(x)$ denotes the output of algorithm $\cA_i$ for a given $x$. 
When both the objective function $f$ and the algorithms $\{\cA_i\}_{i=1}^k$ are smooth (defined formally in Section~\ref{sec:prelims}), 
we can show that \eqref{eqn:minimax-reform} becomes a \emph{smooth nonconvex-concave} minimax optimization problem, where recent advances can be leveraged in solving such problems.


In particular, given the smooth algorithms deployed by the adversary (i.e. the max-player), this paper proposes two algorithms for solving problems in \eqref{eqn:minimax-reform}. The first algorithm is based on stochastic gradient descent (SGD), which is guaranteed to find an appropriate notion of ``$\epsilon$-approximate stationary point'' in $\mathcal{O}(\epsilon^{-4})$ gradient computations. The second algorithm is based on proximal algorithm, in the case of \emph{deterministic} adversarial algorithms $\{\cA_i\}_{i=1}^k$, this algorithm has an improved gradient complexity $\mathcal{O}(\epsilon^{-3})$ or $\tilde{\mathcal{O}}(\poly(k)/\epsilon^2)$ depending on the choice of subroutine within the algorithm. All our algorithms are guaranteed to make monotonic progress, thus having no limit cycles.

Our second set of results show that, many popular algorithms deployed by the adversary such as multi-step stochastic gradient ascent, and multi-step stochastic Nesterov's accelerated gradient ascent are in fact smooth. Therefore, our framework readily applies to those settings in practice. 

Finally, we present complementing experimental results using our theoretical framework and algorithms for generative adversarial network problems and adversarial training. The results highlight the benefits of our approach in terms of the stable monotonic improvement during training and also underscore the importance of optimizing through the algorithm of the adversary.

\section{Related Work}
We now cover related work on several relevant topics. Further details are provided in Appendix~\ref{app:related_work}

\textbf{Nonconvex-Nonconcave Zero-Sum Games.} The existing work on nonconvex-nonconcave zero-sum games has generally focused on (1) defining and characterizing local equilibrium solution concepts~\cite{ratliff2013allerton, ratliff:2016aa, jin2020local, fiez2020implicit, zhang2020optimality, farnia2020gans}, (2) designing gradient-based learning algorithms with local convergence guarantees to only a desired local equilibrium concept~\cite{adolphs2019local, mazumdar2019finding, jin2020local, fiez2020implicit, wang2019solving, zhang2020newton}, and (3) characterizing the local convergence behavior of gradient descent-ascent (GDA)~\cite{nagarajan2017gradient, mescheder2018training, mazumdar2020gradient,daskalakis:2018aa,jin2020local, fiez2020gradient}, since this is known to be computationally hard in general and GDA can get stuck in limit cycles~\cite{daskalakis2020complexity,hsieh2020limits, letcher2020impossibility}.
The closest set of works to ours in this direction propose relaxed equilibrium notions that are shown to exist and be computable in polynomial time~\cite{keswani2020gans,mangoubi2020second}. The aforementioned works are similar to this paper in the sense that the min-player faces the max-player with computational restrictions, but are different from ours in terms of the model of the max-player and the algorithms to solve the problem.

\textbf{Nonconvex-Concave Zero-Sum Games.} 
The structure presented in nonconvex-concave zero-sum games makes it possible to achieve global finite-time convergence guarantees to $\epsilon$--approximate stationary points of the objective function $f(\cdot, \cdot)$ and the best-response function $\Phi(\cdot)=\max_{y}f(\cdot, y)$. A significant number of papers in the past few years investigate the rates of convergence that can be obtained for this problem~\cite{jin2020local, rafique2018non, lu2020hybrid, lin2020gradient, lin2020near, luo2020stochastic, nouiehed2019solving, ostrovskii2020efficient, thekumparampil2019efficient, zhao2020primal, kong2019accelerated}. 
The best known existing results in the deterministic setting show that $\epsilon$--approximate stationary points of the functions $f(\cdot, \cdot)$ and $\Phi(\cdot)$  can be obtained with gradient complexities of $\widetilde{O}(\epsilon^{-2.5})$~\cite{ostrovskii2020efficient, lin2020near} and $\widetilde{O}(\epsilon^{-3})$~\cite{thekumparampil2019efficient, zhao2020primal, kong2019accelerated, lin2020near}, respectively. Moreover, the latter notion of an $\epsilon$--approximate stationarity point can be obtained using $\widetilde{O}(\epsilon^{-6})$ gradient calls in the stochastic setting of nonconvex-concave zero-sum games~\cite{rafique2018non}. We build on the advances in nonconvex-concave problems to obtain our results.

\textbf{Gradient-Based Learning with Opponent Modeling.} A number of gradient-based learning schemes have been derived in various classes of games based on modeling opponent behavior and adjusting the gradient updates based on this prediction~\cite{zhang2010multi, foerster2018learning, letcher2018stable, fiez2020implicit, chasnov2020opponent, metz2016unrolled}. In particular, several works model the opponent as doing a gradient step and derive a learning rule by plugging in the predicted endpoint into the objective, evaluating a Taylor expansion around the last strategies to form an augmented objective, and then computing the gradient of this augmented objective~\cite{zhang2010multi, foerster2018learning, letcher2018stable}. In contrast, we directly compute the derivative of the objective function of the min-players through the model of the opponent. The only work that is similar in this manner is unrolled generative adversarial networks~\cite{metz2016unrolled}. A key conceptual distinction of our framework is its sequential nature with the opponent initializing from scratch at each interaction. Moreover, we give provable finite-time convergence guarantees which do not appear in past work in this realm.

\section{Preliminaries}\label{sec:prelims}
In this section, we present problem formulation and preliminaries. We consider function $f$ satisfying
\begin{assumption}\label{ass:func-smooth}
	We denote $w = (x, y)$, and assume $f:\R^{d_1} \times \R^{d_2} \rightarrow \R$ is: 
	\begin{enumerate}
	\vspace{-1ex}
	\item[(a)] $B$-bounded i.e., $\abs{f(w)} \leq B$,
    \item[(b)] $G$-Lipschitz i.e., $\abs{f(w_1)-f(w_2)} \leq G\norm{w_1-w_2}$,
	\item[(c)]	$L$-gradient Lipschitz i.e., $\norm{\nabla f(w_1)- \nabla f(w_2)} \leq L\norm{w_1-w_2}$,
	\item[(d)] $\rho$-Hessian Lipschitz i.e., $\norm{\nabla^2 f(w_1)- \nabla^2 f(w_2)} \leq \rho \norm{w_1-w_2}$.
	\end{enumerate}
where $\norm{\cdot}$ denotes Euclidean norm for vectors and operator norm for matrices.
\end{assumption}
We aim to solve $\min_{x \in \R^{d_1}} \max_{y \in \R^{d_2}} f(x,y)$. Since $\max_{y \in \R^{d_2}} f(x,y)$ involves non-concave maximization and hence is NP-hard in the worst case, we intend to play against algorithm(s) that $y$-player uses to compute her strategy. Concretely, given $x \in \R^{d_1}$, we assume that the $y$-player chooses her (potentially random) strategy $\yhat_z(x) = \cA_{i^*(x)}(x,z_{i^*(x)})$, where we use shorthand $z\defeq (z_1,\cdots,z_k)$, as
$\textstyle 	i^*(x) = \argmax_{i\in[k]} f(x,\cA_i(x,z_i))$,
where $\cA_1,\cdots,\cA_k$ are $k$ \emph{deterministic} algorithms that take as input $x$ and a random seed $z_i \in \R^\ell$, where $z_i$ are all independent. Note that the framework captures randomized algorithms e.g., $\cA$ could be stochastic gradient ascent on $f(x,\cdot)$, with initialization, minibatching etc. determined by the random seed $z$. This also incorporates running the same algorithm multiple times, with different seeds and then choosing the best strategy. We now reformulate the minimax objective function to:
\begin{align}\label{eqn:minimax-mod}
\textstyle	\min_{x \in \R^{d_1}} g(x) \quad   \mbox{ where } \quad g(x) \defeq \E_{z}\left[f(x,\yhat_z(x))\right].
\end{align}
For general algorithms $\cA_i$, the functions $f(x,\cA_i(x,z_i))$ need not be continuous even when $f$ satisfies Assumption~\ref{ass:func-smooth}. However, if the algorithms $\cA_i$ are smooth as defined below, the functions $f(x,\cA_i(x,z_i))$ behave much more nicely.
\begin{definition}[Algorithm Smoothness]\label{def:smooth}
A randomized algorithm $\cA:\R^{d_1} \times \R^{\ell}\rightarrow \R^{d_2}$ is:
\begin{enumerate}
\item[(a)] $G$-Lipschitz, if $\norm{\cA(x_1, z)-\cA(x_2, z)} \leq G\norm{x_1-x_2}$ for any $z$.
\item[(b)] $L$-gradient Lipschitz, if $\norm{D\cA(x_1, z)-D\cA(x_2, z)} \leq L\norm{x_1-x_2}$ for any $z$.
\end{enumerate}
\end{definition}

Here $D\cA(x, z) \in \R^{d_1} \times \R^{d_2}$ is the Jacobian of the function $\cA(\cdot, z)$ for a fixed $z$.
The following lemma tells us that $f(x,\cA(x,z))$ behaves nicely whenever $\cA$ is a Lipschitz and gradient Lipschitz algorithm.
For deterministic algorithms, we also use the shortened notation $\cA(x)$ and $D\cA(x)$.
\begin{lemma}\label{lem:smooth-func}
	Suppose $\cA$ is $G'$-Lipschitz and $L'$-gradient Lipschitz and $f$ satisfies Assumption~\ref{ass:func-smooth}. Then, for a fixed $z$, function $f(\cdot,\cA(\cdot,z))$ is $G(1+G')$-Lipschitz and $L(1+G')^2 + GL'$-gradient Lipschitz.
\end{lemma}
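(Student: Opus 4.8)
The plan is to view $h(x) \defeq f(x,\cA(x,z))$, for the fixed seed $z$, as the composition $h = f\circ\Phi$ where $\Phi(x)\defeq (x,\cA(x,z))$ maps $\R^{d_1}$ into $\R^{d_1}\times\R^{d_2}$. Both assertions then reduce to controlling $\Phi$ and its Jacobian and feeding the result into the smoothness of $f$ guaranteed by Assumption~\ref{ass:func-smooth}. First, for the Lipschitz claim I would avoid the chain rule entirely and use a single add-and-subtract:
\[
\abs{h(x_1)-h(x_2)} \le \abs{f(x_1,\cA(x_1,z))-f(x_2,\cA(x_1,z))} + \abs{f(x_2,\cA(x_1,z))-f(x_2,\cA(x_2,z))}.
\]
The first term is at most $G\norm{x_1-x_2}$ and the second at most $G\norm{\cA(x_1,z)-\cA(x_2,z)}\le GG'\norm{x_1-x_2}$, using that $f$ being jointly $G$-Lipschitz is in particular $G$-Lipschitz in each block and that $\cA$ is $G'$-Lipschitz. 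Summing yields the stated constant $G(1+G')$.

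For gradient Lipschitzness I would differentiate through $\Phi$. Its Jacobian is the vertical block stack
\[
D\Phi(x) = \begin{pmatrix} \I \\ D\cA(x,z) \end{pmatrix},
\]
so by the chain rule $\nabla h(x) = D\Phi(x)\trans \nabla f(\Phi(x))$. To bound $\norm{\nabla h(x_1)-\nabla h(x_2)}$ I would insert a cross term and split into
\[
\nabla h(x_1)-\nabla h(x_2) = D\Phi(x_1)\trans\bigl[\nabla f(\Phi(x_1))-\nabla f(\Phi(x_2))\bigr] + \bigl[D\Phi(x_1)-D\Phi(x_2)\bigr]\trans \nabla f(\Phi(x_2)).
\]
For the first term, $\norm{D\Phi(x_1)}\le \sqrt{1+G'^2}\le 1+G'$ since $\norm{D\cA(x_1,z)}\le G'$, the $L$-gradient Lipschitzness of $f$ contributes a factor $L$, and $\norm{\Phi(x_1)-\Phi(x_2)}\le (1+G')\norm{x_1-x_2}$ exactly as in the first part; together these give $L(1+G')^2\norm{x_1-x_2}$. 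For the second term, only the $\cA$-block of $D\Phi$ depends on $x$, so $\norm{D\Phi(x_1)-D\Phi(x_2)} = \norm{D\cA(x_1,z)-D\cA(x_2,z)}\le L'\norm{x_1-x_2}$, while $\norm{\nabla f(\Phi(x_2))}\le G$ because $f$ is $G$-Lipschitz; this gives $GL'\norm{x_1-x_2}$. Adding the two recovers the claimed constant $L(1+G')^2+GL'$.

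The computation is essentially routine, so I do not expect a serious obstacle; the points that need care are purely bookkeeping. The chief one is tracking the block structure of $D\Phi$ — recognizing that its identity block is constant (hence drops out of $D\Phi(x_1)-D\Phi(x_2)$) while the $\cA$-block simultaneously carries the operator-norm bound $G'$ used in the first term and the gradient-Lipschitz bound $L'$ used in the second. The other is the two elementary estimates $\norm{D\Phi}\le\sqrt{1+G'^2}\le 1+G'$ and $\norm{\nabla f}\le G$: replacing $\sqrt{1+G'^2}$ by $1+G'$ is precisely what converts the tight Euclidean factors into the clean stated constants. Finally, differentiability for the chain rule is justified because $f$ is $L$-gradient Lipschitz (hence $C^1$) and $\cA(\cdot,z)$ is $L'$-gradient Lipschitz.
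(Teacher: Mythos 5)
Your proposal is correct and arrives at exactly the stated constants; it differs from the paper's proof only in how the computation is packaged. For the Lipschitz claim, the paper differentiates: it writes $\nabla f(x,\cA(x)) = \nabla_x f(x,\cA(x)) + D\cA(x)\cdot\nabla_y f(x,\cA(x))$ by the chain rule and bounds its norm by $G + G'G$, whereas you bound the increment of function values directly by add-and-subtract, which is marginally more elementary since it never invokes differentiability for this half. For the gradient-Lipschitz claim, both arguments are the same chain-rule-plus-cross-term estimate, just grouped differently: the paper keeps the blocks $\nabla_x f(x,\cA(x))$ and $D\cA(x)\nabla_y f(x,\cA(x))$ separate, bounding the difference of the first by $L(1+G')$ and splitting the second with a cross term into $LG'(1+G') + GL'$, while you stack the identity and $D\cA$ into the single Jacobian $D\Phi$ and perform one cross-term split, obtaining $L(1+G')^2 + GL'$ in one stroke. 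Since $L(1+G') + LG'(1+G') = L(1+G')^2$, the two groupings give identical totals. Your unified treatment is slightly cleaner in that the constant identity block visibly cancels in $D\Phi(x_1)-D\Phi(x_2)$, while the paper's block-wise split mirrors the explicit gradient formula \eqref{eqn:grad-form} that it reuses later for computing subgradients; nothing of substance separates the two proofs.
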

While $g(x)$ defined in~\eqref{eqn:minimax-mod} is not necessarily gradient Lipschitz, it can be shown to be weakly-convex as defined below. Note that an $L$-gradient Lipschitz function is $L$-weakly convex.
\begin{definition}\label{def:weak-convex}
	A function $g:\R^{d_1} \to \R$ is {\em $L$-weakly convex} if $\forall \; x$, there exists a vector $u_x$ satisfying:
	\begin{align}
		g(x') \geq g(x) + \iprod{u_x}{x'-x} - \tfrac{L}{2} \|x' - x\|^2  \;\; \forall \, \; x'.
		\label{eq:weakly-cvx}
	\end{align}
	Any vector $u_x$ satisfying this property is called the subgradient of $g$ at $x$ and is denoted by $\nabla g(x)$.
\end{definition}
An important property of weakly convex function is that the maximum over a finite number of weakly convex function is still a weakly convex function.
\begin{lemma}\label{lem:argmax-weakconvex}
	Given $L$-weakly convex functions $g_1,\cdots,g_k: \R^d \rightarrow \R$, the maximum function $g(\cdot) \defeq \max_{i \in [k]} g_i(\cdot)$ is also $L$-weakly convex and the set of subgradients of $g(\cdot)$ at $x$ is given by:
	\begin{align*}
		\textstyle \partial g(x) = \{\sum_{j \in S(x)} \lambda_j \nabla g_j(x) : \lambda_j \geq 0, \; \sum_{j \in S(x)} \lambda_j = 1\},    \text{where} \  S(x) \defeq \argmax_{i \in [k]} g_i(x).
	\end{align*}
\end{lemma}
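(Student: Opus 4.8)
The plan is to reduce the entire statement to ordinary convexity via the standard quadratic shift. First I would define $h_i(x) \defeq g_i(x) + \frac{L}{2}\norm{x}^2$ and observe that, directly from Definition~\ref{def:weak-convex}, each $g_i$ being $L$-weakly convex is equivalent to $h_i$ being convex: using the identity $\norm{x'}^2 - \norm{x}^2 = \norm{x'-x}^2 + 2\iprod{x}{x'-x}$, the defining inequality for $g_i$ with subgradient $u_x$ is seen to be identical to the ordinary subgradient inequality for $h_i$ with subgradient $u_x + Lx$. Since the pointwise maximum of finitely many convex functions is convex, $h \defeq \max_{i\in[k]} h_i$ is convex. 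But $h(x) = \max_{i\in[k]}\bigl(g_i(x) + \frac{L}{2}\norm{x}^2\bigr) = g(x) + \frac{L}{2}\norm{x}^2$, so applying the same equivalence in reverse shows that $g$ is $L$-weakly convex, which settles the first claim.

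For the subgradient formula, I would carry the correspondence $u \mapsto u + Lx$ through. From the equivalence above, $u \in \partial g(x) \iff u + Lx \in \partial h(x)$, and likewise $v \in \partial g_j(x) \iff v + Lx \in \partial h_j(x)$; moreover the active set is unchanged by the shift, $S(x) = \argmax_i g_i(x) = \argmax_i h_i(x)$, since the added term $\frac{L}{2}\norm{x}^2$ is common to all indices at the fixed point $x$. Hence it suffices to establish the max-rule for the convex functions $h_i$, namely $\partial h(x) = \mathrm{conv}\,\bigcup_{j\in S(x)}\partial h_j(x)$, and then translate back by subtracting $Lx$ (translation commutes with taking convex hulls), which yields exactly $\partial g(x) = \mathrm{conv}\,\bigcup_{j\in S(x)}\partial g_j(x)$, i.e.\ the asserted set.

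It remains to prove the convex max-rule. The easy inclusion ($\supseteq$) is elementary and needs no convexity: for any active $j\in S(x)$ and any $x'$, weak convexity of $g_j$ together with $g(x') \geq g_j(x')$ and $g_j(x) = g(x)$ gives $g(x') \geq g(x) + \iprod{\nabla g_j(x)}{x'-x} - \frac{L}{2}\norm{x'-x}^2$; taking a convex combination over $j$ with weights $\lambda_j$ preserves this inequality, so every element of the claimed set is a genuine subgradient of $g$. The hard part will be the reverse inclusion ($\subseteq$), that every subgradient decomposes as such a convex combination, and I expect this to be the main obstacle. I would handle it by invoking the standard subdifferential calculus for the maximum of finitely many convex functions, or prove it directly via a separating-hyperplane argument: if some $u + Lx \in \partial h(x)$ lay outside the closed convex set $\mathrm{conv}\,\bigcup_{j\in S(x)}\partial h_j(x)$, one separates it by a direction $d$ and contradicts the directional-derivative formula $h'(x;d) = \max_{j\in S(x)}\max_{v\in\partial h_j(x)}\iprod{v}{d}$ for a max of convex functions. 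Combining the two inclusions with the shift correspondence then completes the proof.
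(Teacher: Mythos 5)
Your proposal is correct, and it in fact does strictly more than the paper's own proof. The paper's entire argument is what you call the easy inclusion: for weights $\lambda_j \geq 0$ with $\sum_{j \in S(x)} \lambda_j = 1$, it chains $g(y) = \max_{j} g_j(y) \geq \sum_{j \in S(x)} \lambda_j g_j(y)$ with the weak-convexity inequality of each active $g_j$ at $x$, concluding in a single display that $\sum_{j \in S(x)} \lambda_j \nabla g_j(x)$ satisfies the subgradient inequality for $g$ --- which simultaneously yields $L$-weak convexity of $g$ and the containment of the claimed set inside $\partial g(x)$. The reverse inclusion $\partial g(x) \subseteq \{\sum_{j \in S(x)} \lambda_j \nabla g_j(x)\}$, which the lemma's set equality formally asserts and which you correctly flag as the hard part, is never proved in the paper; it is also never used downstream, since the results of Section~\ref{sec:results} only require weak convexity of $g$ and one computable subgradient as in \eqref{eqn:stoch-subgrad}. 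Your route --- shifting to $h_i(x) = g_i(x) + \frac{L}{2}\norm{x}^2$ to reduce to genuine convexity, establishing the classical max-rule $\partial h(x) = \mathrm{conv}\bigcup_{j \in S(x)} \partial h_j(x)$ by separation against the directional-derivative formula, and translating back via the affine correspondence $u \mapsto u + Lx$ (which indeed preserves active sets and convex hulls) --- is the standard way to obtain the full equality, and every step you outline goes through for finite-valued convex functions on $\R^d$ (each $\partial h_j(x)$ is nonempty and compact, so the convex hull of their union is closed and separation applies). What each approach buys: the paper's direct verification is three lines and suffices for everything the paper needs; your argument delivers the equality actually stated, and in greater generality --- it shows $\partial g(x) = \mathrm{conv}\bigcup_{j \in S(x)} \partial g_j(x)$ even when the $g_j$ have non-singleton subdifferentials, whereas the paper's notation $\nabla g_j(x)$ tacitly assumes each $g_j$ has a unique subgradient (true in its application, where the $g_{z,i}$ are gradient Lipschitz by Lemma~\ref{lem:smooth-func}).
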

Consequently under Assumption~\ref{ass:func-smooth} and the assumption that $\cA_i$ are all $G'$-Lipschitz and $L'$-gradient Lipschitz, we have that $g(\cdot)$ defined in~\eqref{eqn:minimax-mod} is $L(1+G')^2 + GL'$-weakly convex. The standard notion of optimality for weakly-convex functions is that of approximate first order stationary point~\cite{davis2018stochastic}.

\textbf{Approximate first-order stationary point for weakly convex functions}:
In order to define approximate stationary points, we also need the notion of Moreau envelope.
\begin{definition}\label{def:Moreau}
	The \emph{Moreau envelope} of a function $g:\R^{d_1} \to \R$ and parameter $\lambda$ is:
	\begin{eqnarray}
	\textstyle	g_{\lambda}(x) \;\; = \;\; \min_{x' \in \R^{d_1}} g(x') + \left({2\lambda}\right)^{-1} \|x - x'\|^2\;. 
		\label{eq:envelope}
	\end{eqnarray}
\end{definition}
The following lemma
provides useful properties of the Moreau envelope.
\begin{lemma}\label{lem:moreau-properties}
	For an $L$-weakly convex function $g:\R^{d_1} \to \R$ and $\lambda < 1/L$, we have:
	\begin{enumerate}
		\item[(a)] The minimizer $\hat{x}_{\lambda}(x) = \arg\min_{x' \in \R^{d_1}}  g(x') + (2\lambda)^{-1} \|x - x'\|^2$ is unique and $g(\hat{x}_{\lambda}(x)) \leq g_{\lambda}(x) \leq g(x)$. Furthermore, $\arg\min_x g(x) = \arg\min_x g_{\lambda} (x)$.
		\item[(b)] $g_{\lambda}$ is $\lambda^{-1}(1+(1 - \lambda L)^{-1})$-smooth and thus differentiable, and
		\item[(c)] $ \min_{u \in \partial g(\hat{x}_{\lambda}(x))} \|u\| \leq  \lambda^{-1} \| \hat{x}_{\lambda}(x) - x\|  =  \| \nabla g_{\lambda}(x) \|$.
	\end{enumerate}
\end{lemma}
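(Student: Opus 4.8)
The plan is to prove the three parts in order, with the bulk of the work going into establishing the gradient identity $\nabla g_\lambda(x) = \lambda^{-1}(x - \hat{x}_\lambda(x))$, from which (b) and (c) follow quickly.

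For part (a), I would first observe that $L$-weak convexity is equivalent to $x' \mapsto g(x') + \frac{L}{2}\norm{x'}^2$ being convex; this follows by rearranging the defining inequality in Definition~\ref{def:weak-convex} via $\norm{x'}^2 = \norm{x}^2 + 2\iprod{x}{x'-x} + \norm{x'-x}^2$. Consequently the prox objective $h_x(x') \defeq g(x') + (2\lambda)^{-1}\norm{x-x'}^2$ splits as a convex function plus a quadratic whose Hessian is $(\lambda^{-1}-L)\I = \frac{1-\lambda L}{\lambda}\I \succ 0$ since $\lambda < 1/L$; hence $h_x$ is $\frac{1-\lambda L}{\lambda}$-strongly convex and admits a unique minimizer $\hat{x}_\lambda(x)$. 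The sandwich $g(\hat{x}_\lambda(x)) \le g_\lambda(x) \le g(x)$ is then immediate: the right inequality uses $x'=x$ as a feasible point, and the left drops the nonnegative penalty $(2\lambda)^{-1}\norm{x-\hat{x}_\lambda(x)}^2$. For the identity of minimizers I would note that $g_\lambda(x) \ge \min_{x'} g(x')$ for every $x$, while any global minimizer $x^*$ of $g$ forces $\hat{x}_\lambda(x^*) = x^*$ and hence $g_\lambda(x^*) = g(x^*) = \min g$; a short argument using the sandwich shows conversely that $g_\lambda(\bar x) = \min g$ implies $\bar x \in \arg\min g$.

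The crux of part (b) is the optimality condition $\lambda^{-1}(x - \hat{x}_\lambda(x)) \in \partial g(\hat{x}_\lambda(x))$ combined with a Lipschitz bound on the proximal map. Writing $p_i \defeq \hat{x}_\lambda(x_i)$ and $s_i \defeq \lambda^{-1}(x_i - p_i) \in \partial g(p_i)$, I would add the two weak-convexity inequalities (Definition~\ref{def:weak-convex}) evaluated at the pair $(p_1,p_2)$ to get $\iprod{s_1 - s_2}{p_1 - p_2} \ge -L\norm{p_1-p_2}^2$. Substituting $s_i = \lambda^{-1}(x_i-p_i)$ and rearranging yields $\iprod{x_1-x_2}{p_1-p_2} \ge (1-\lambda L)\norm{p_1-p_2}^2$, so Cauchy--Schwarz gives the key estimate $\norm{\hat{x}_\lambda(x_1) - \hat{x}_\lambda(x_2)} \le (1-\lambda L)^{-1}\norm{x_1-x_2}$. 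Next I would establish differentiability by a two-sided quadratic sandwich: using $\hat{x}_\lambda(x_0)$ as a suboptimal point for $g_\lambda(x)$ and completing the square gives $g_\lambda(x) - g_\lambda(x_0) - \iprod{v}{x-x_0} \le (2\lambda)^{-1}\norm{x-x_0}^2$ with $v \defeq \lambda^{-1}(x_0 - \hat{x}_\lambda(x_0))$, and symmetrically, using $\hat{x}_\lambda(x)$ for $g_\lambda(x_0)$ together with the prox-Lipschitz bound yields a matching quadratic lower bound. Together these show $g_\lambda$ is differentiable with $\nabla g_\lambda(x) = \lambda^{-1}(x - \hat{x}_\lambda(x))$, and the smoothness constant follows from $\norm{\nabla g_\lambda(x_1) - \nabla g_\lambda(x_2)} = \lambda^{-1}\norm{(x_1-x_2) - (p_1-p_2)} \le \lambda^{-1}(\norm{x_1-x_2} + \norm{p_1-p_2}) \le \lambda^{-1}(1 + (1-\lambda L)^{-1})\norm{x_1-x_2}$. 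Part (c) is then immediate: the equality $\lambda^{-1}\norm{\hat{x}_\lambda(x) - x} = \norm{\nabla g_\lambda(x)}$ is just the gradient formula, and since the optimality condition exhibits $\lambda^{-1}(x - \hat{x}_\lambda(x))$ as a particular element of $\partial g(\hat{x}_\lambda(x))$, the minimum-norm subgradient can only be smaller.

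I expect the main obstacle to be the rigorous derivation of differentiability and the gradient identity in part (b): the envelope is a pointwise \emph{infimum} rather than supremum of smooth quadratics, so smoothness is not automatic and must be extracted from the two-sided quadratic sandwich, which in turn relies essentially on the Lipschitz continuity of the proximal map established through the weak-convexity inequalities. Everything else reduces to strong convexity of the prox objective and elementary algebra.
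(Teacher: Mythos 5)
Your proposal is correct, and parts (a) and (c) are in the same spirit as the paper's, but your treatment of part (b) takes a genuinely different route. The paper proves smoothness by Fenchel duality: it rewrites $g_\lambda(x) = \frac{\|x\|^2}{2\lambda} - \frac{1}{\lambda}\bigl(\lambda g(\cdot) + \frac{\|\cdot\|^2}{2}\bigr)^*(x)$, notes that $\lambda g + \frac{1}{2}\|\cdot\|^2$ is $(1-\lambda L)$-strongly convex, and invokes the standard theorem that the conjugate of a strongly convex function is smooth; it then obtains the gradient identity for part (c) separately, via the first-order optimality condition combined with Danskin's theorem. You instead argue directly: you first show the proximal map is $(1-\lambda L)^{-1}$-Lipschitz by adding the two weak-convexity (subgradient monotonicity) inequalities at the pair of prox points, then establish differentiability and the identity $\nabla g_\lambda(x) = \lambda^{-1}(x - \hat{x}_\lambda(x))$ through a two-sided quadratic sandwich, from which the smoothness constant $\lambda^{-1}(1+(1-\lambda L)^{-1})$ and part (c) both drop out by the triangle inequality. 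The trade-off: the paper's argument is shorter but leans on two cited external results (conjugate duality and Danskin), whereas yours is self-contained and elementary, produces the explicit gradient formula and the prox-map Lipschitz constant as reusable byproducts, and unifies (b) and (c) into a single computation -- while arriving at exactly the same smoothness constant. Both proofs are valid; yours would read as a complete from-scratch proof, the paper's as a reduction to known convex-analytic facts.
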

First order stationary points of a non-smooth nonconvex function are well-defined, i.e.,  $x^*$ is a {\em first order stationary point (FOSP)} of a  function $g(x)$ if, $0 \in \partial f(x^*)$. However, unlike smooth functions, it is nontrivial to define an \emph{approximate} FOSP.
For example, if we define an $\varepsilon$-FOSP as the point $x$ with $\min_{u \in \partial g(x)} \| u \| \leq \varepsilon$, where $\partial g(x)$ denotes the subgradients of $g$ at $x$, there may never exist such a point for sufficiently small $\varepsilon$, unless $x$ is exactly a FOSP. In contrast, by using above properties of the Moreau envelope of a weakly convex function, it's approximate FOSP can be defined as \cite{davis2018stochastic}: 
\begin{definition}\label{def:eps_fosp}
	Given an $L$-weakly convex function $g$, we say that $x^*$ is an $\varepsilon$-first order stationary point ($\varepsilon$-FOSP) if, $\| \nabla g_{1/2L}(x^*) \| \leq \varepsilon$, where $g_{1/2L}$ is the Moreau envelope with parameter $1/2L$.
\end{definition}
\noindent
Using Lemma \ref{lem:moreau-properties}, we can show that for any $\varepsilon$-FOSP $x^*$, there exists $\hat{x}$ such that $\|\hat{x} - x^* \| \leq \varepsilon/2L$ and $\min_{u\in \partial g(\hat{x})} \| u\| \leq \varepsilon$. In other words, an $\varepsilon$-FOSP is $O(\varepsilon)$ close to a point $\hat{x}$ which has a subgradient smaller than $\varepsilon$. Other notions of FOSP proposed recently such as in~\cite{nouiehed2019solving} can be shown to be a strict generalization of the above definition.

\section{Main Results}\label{sec:results}
In this section, we present our main results. Assuming that the adversary employs Lipschitz and gradient-Lipschitz algorithms (Assumption~\ref{ass:smoothalgo}), Section~\ref{sec:subgrad} shows how to compute (stochastic) subgradients of $g(\cdot)$ (defined in~\eqref{eqn:minimax-mod}) efficiently. Section~\ref{sec:conv-sgd} further shows that stochastic subgradient descent (SGD) on $g(\cdot)$ can find an $\epsilon$-FOSP in $\order{\epsilon^{-4}}$ iterations while for the deterministic setting, where the adversary uses only deterministic algorithms, Section~\ref{sec:prox} provides a proximal algorithm that can find an $\epsilon$-FOSP faster than SGD.
For convenience, we denote $g_{z,i}(x) \defeq f(x,\cA_i(x,z_i))$ and recall $g(x) \defeq \E_{z}\left[\max_{i \in [k]} g_{z,i}(x)\right]$. For deterministic $\cA_i$, we drop $z$ and just use $g_i(x)$.


\subsection{Computing stochastic subgradients of $g(x)$}\label{sec:subgrad}
In this section, we give a characterization of subgradients of $g(x)$ and show how to compute stochastic subgradients efficiently under the following assumption.
\begin{assumption}\label{ass:smoothalgo}
	Algorithms $\cA_i$ in \eqref{eqn:minimax-mod} are $G'$-Lipschitz and $L'$-gradient Lipschitz as per Definition~\ref{def:smooth}.
\end{assumption}
Under Assumptions~\ref{ass:func-smooth} and~\ref{ass:smoothalgo}, Lemma~\ref{lem:smooth-func} tells us that $g_{z,i}(x)$ is a $G(1+G')$-Lipschitz and $L(1+G')^2 + GL'$-gradient Lipschitz function for every $i \in [k]$ with
\begin{align}\label{eqn:grad-form}
	\nabla g_{z,i}(x) = \nabla_x f(x,\cA_i(x,z_i)) + D\cA_i(x,z_i) \cdot \nabla_y f(x,\cA_i(x,z_i)),
\end{align}
where we recall that $D\cA_i(x,z_i) \in \R^{d_1 \times d_2}$ is the Jacobian matrix of $\cA_i(\cdot,z_i): \R^{d_1} \rightarrow \R^{d_2}$ at $x$ and $\nabla_x f(x,\cA_i(x,z_i))$ denotes the partial derivative of $f$ with respect to the first variable at $(x,\cA_i(x,z_i))$. While there is no known general recipe for computing $D\cA_i(x,z_i)$ for an arbitrary algorithm $\cA_i$, most algorithms used in practice such as stochastic gradient ascent (SGA), stochastic Nesterov accelerated gradient (SNAG), ADAM, admit efficient ways of computing these derivatives e.g., \emph{higher} package in PyTorch~\cite{grefenstette2019generalized}. For concreteness, we obtain expression for gradients of SGA and SNAG in Section~\ref{sec:smoothness} but
the principle behind the derivation holds much more broadly and can be extended to most algorithms used in practice~\cite{grefenstette2019generalized}. In practice, the cost of computing $\nabla g_{z,i}(x)$ in~\eqref{eqn:grad-form} is at most twice the cost of evaluating $g_{z,i}(x)$---it consists a forward pass for evaluating $g_{z,i}(x)$ and a backward pass for evaluating its gradient~\cite{grefenstette2019generalized}. 

Lemma~\ref{lem:argmax-weakconvex} shows that $g(x) \defeq \E_{z}\left[\max_{i\in[k]} g_{z,i}(x)\right]$ is a weakly convex function and a stochastic subgradient of $g(\cdot)$ can be computed by generating a random sample of $z_1,\cdots,z_k$ as:
\begin{equation}\label{eqn:stoch-subgrad}
\widehat{\nabla} g(x) = \sum_{j \in S(x)} \lambda_j \nabla g_{z,i}(x) \text{~for any~} \lambda \in \Delta_k, \text{~where~} S(x) \defeq \argmax_{i \in [k]} g_{z,i}(x)
\end{equation}
Here $\Delta_k$ is the $k$-dimensional probability simplex. It can be seen that $\E_{z}[\hat{\nabla} g(x)] \in \partial g(x)$. Furthermore, if all $\cA_i$ are deterministic algorithms, then the above is indeed a subgradient of $g$.
\subsection{Convergence rate of SGD}\label{sec:conv-sgd}
\begin{algorithm}[t]
	\DontPrintSemicolon 
	\KwIn{initial point $x_0$, step size $\eta$}
	\For{$s = 0, 1, \ldots, S$}{
		Sample $z_1,\cdots,z_k$ and compute $\widehat{\nabla} g(x_s)$ according to eq. \eqref{eqn:stoch-subgrad}.

		$x_{s+1} \leftarrow x_s - \eta \widehat{\nabla} g(x_s)$.
	}
	
	\textbf{return} $\bar{x} \leftarrow x_s$, where $s$ is uniformly sampled from $\{0,\cdots,S\}$.
	\caption{Stochastic subgradient descent (SGD)}
	\label{algo:sgd}
\end{algorithm}

The SGD algorithm to solve~\eqref{eqn:minimax-mod} is given in Algorithm~\ref{algo:sgd}. The following theorem shows that Algorithm~\ref{algo:sgd} finds an $\epsilon$-FOSP of $g(\cdot)$ in $S=\order{\epsilon^{-4}}$ iterations. Since each iteration of Algorithm~\ref{algo:sgd} requires computing the gradient of $g_{z,i}(x)$ for each $i\in[k]$ at a single point $x_s$, this leads to a total of $S=\order{\epsilon^{-4}}$ gradient computations for each $g_{z,i}(x)$.

\begin{theorem}\label{thm:main-sgd}
	Under Assumptions~\ref{ass:func-smooth} and~\ref{ass:smoothalgo}, if $S \geq 16 B \Ltilde \hG^2 \epsilon^{-4}$ and learning rate $\eta = (2B/[\Ltilde \hG^2 ({S+1})])^{1/2}$ then output of Algorithm~\ref{algo:sgd} satisfies $\E[\norm{\nabla g_{1/2\Ltilde}(\bar{x})}^2] \leq \epsilon^2$,  where $\Ltilde \defeq L(1+G')^2 + GL'$ and $\hG\defeq G (1+G')$.  
\end{theorem}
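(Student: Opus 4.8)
The plan is to run the standard Moreau-envelope potential analysis for stochastic subgradient descent on weakly convex functions (in the style of \cite{davis2018stochastic}), using the Moreau envelope $g_{1/2\Ltilde}$ as a Lyapunov function. Three facts from the preliminaries do the heavy lifting. First, by Lemma~\ref{lem:smooth-func} each $g_{z,i}$ is $\hG$-Lipschitz, so the stochastic subgradient in \eqref{eqn:stoch-subgrad}, being a convex combination of the $\nabla g_{z,i}(x_s)$, satisfies $\norm{\widehat{\nabla} g(x_s)} \le \hG$ and hence $\E[\norm{\widehat{\nabla} g(x_s)}^2 \mid x_s] \le \hG^2$. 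Second, the update is unbiased: setting $u_s \defeq \E[\widehat{\nabla} g(x_s)\mid x_s]$, we have $u_s \in \partial g(x_s)$. Third, $g$ is $\Ltilde$-weakly convex, so with $\lambda = 1/(2\Ltilde) < 1/\Ltilde$ Lemma~\ref{lem:moreau-properties} applies: the proximal point $\hat{x}_s \defeq \hat{x}_{1/2\Ltilde}(x_s)$ is well defined, $\nabla g_{1/2\Ltilde}(x_s) = 2\Ltilde(x_s - \hat{x}_s)$, and $g_{1/2\Ltilde}(x_s) = g(\hat{x}_s) + \Ltilde\norm{x_s - \hat{x}_s}^2$.

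The core is a one-step descent bound on $\E[g_{1/2\Ltilde}(x_{s+1})\mid x_s]$. First I would use the comparison point $\hat{x}_s$ inside the Moreau minimization to write $g_{1/2\Ltilde}(x_{s+1}) \le g(\hat{x}_s) + \Ltilde\norm{x_{s+1}-\hat{x}_s}^2$. Expanding $x_{s+1} = x_s - \eta\widehat{\nabla} g(x_s)$ and taking conditional expectation, unbiasedness turns the cross term into $-2\eta\,\iprod{u_s}{x_s - \hat{x}_s}$ and the quadratic term into at most $\eta^2\hG^2$. To control the cross term I would invoke weak convexity of $g$ at $x_s$ against $\hat{x}_s$, which lower bounds $\iprod{u_s}{x_s-\hat{x}_s}$ by $g(x_s)-g(\hat{x}_s)-\tfrac{\Ltilde}{2}\norm{x_s-\hat{x}_s}^2$. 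Finally, since $\hat{x}_s$ minimizes the $\Ltilde$-strongly convex map $x' \mapsto g(x') + \Ltilde\norm{x_s-x'}^2$, its quadratic growth gives $g(x_s)-g(\hat{x}_s) \ge \tfrac{3\Ltilde}{2}\norm{x_s-\hat{x}_s}^2$. Substituting $g_{1/2\Ltilde}(x_s) = g(\hat{x}_s)+\Ltilde\norm{x_s-\hat{x}_s}^2$ and $\norm{\nabla g_{1/2\Ltilde}(x_s)}^2 = 4\Ltilde^2\norm{x_s-\hat{x}_s}^2$ collapses everything into
\begin{equation*}
\E[g_{1/2\Ltilde}(x_{s+1})\mid x_s] \;\le\; g_{1/2\Ltilde}(x_s) - \tfrac{\eta}{2}\norm{\nabla g_{1/2\Ltilde}(x_s)}^2 + \eta^2\Ltilde\hG^2 .
\end{equation*}

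To finish, I would take total expectations, telescope from $s=0$ to $S$, and bound the potential gap by $g_{1/2\Ltilde}(x_0)-\E[g_{1/2\Ltilde}(x_{S+1})] \le 2B$ using $\abs{g}\le B$ together with $g(\hat{x}_\lambda(x)) \le g_\lambda(x) \le g(x)$ from Lemma~\ref{lem:moreau-properties}(a). Because $\bar{x}$ is drawn uniformly from $\{x_0,\dots,x_S\}$, the left side becomes $(S+1)\,\E[\norm{\nabla g_{1/2\Ltilde}(\bar{x})}^2]$, yielding $\E[\norm{\nabla g_{1/2\Ltilde}(\bar{x})}^2] \le \tfrac{4B}{\eta(S+1)} + 2\eta\Ltilde\hG^2$. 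Balancing the two terms forces exactly $\eta = (2B/[\Ltilde\hG^2(S+1)])^{1/2}$ as in the statement, and leaves a bound of order $\sqrt{B\Ltilde\hG^2/(S+1)}$; requiring this to be at most $\epsilon^2$ gives $S = \order{B\Ltilde\hG^2\epsilon^{-4}}$, matching Theorem~\ref{thm:main-sgd} up to the absolute constant.

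I expect the main obstacle to be the descent step, specifically getting the net coefficient of $\norm{x_s - \hat{x}_s}^2$ to be strictly negative. After multiplication by the Moreau factor $\Ltilde$, the slack from weak convexity contributes a $+\eta\Ltilde^2\norm{x_s-\hat{x}_s}^2$ term with the wrong sign, and it is only the quadratic-growth inequality $g(x_s)-g(\hat{x}_s)\ge\tfrac{3\Ltilde}{2}\norm{x_s-\hat{x}_s}^2$ that overwhelms it to leave genuine descent; the weaker minimizer inequality $g(x_s)-g(\hat{x}_s)\ge \Ltilde\norm{x_s-\hat{x}_s}^2$ still works but loses a constant. A secondary subtlety worth flagging is that the comparison point must be the proximal point of $x_s$ (not $x_{s+1}$), since this is exactly what makes the $g_{1/2\Ltilde}(x_s)$ terms telescope cleanly; the stochasticity is otherwise handled entirely through unbiasedness and the uniform $\hG^2$ second-moment bound, with no martingale concentration needed because we only track expectations.
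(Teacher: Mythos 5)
Your proof is correct and follows essentially the same route as the paper: the paper verifies exactly the same two preconditions (that $g$ is $\Ltilde$-weakly convex via Lemmas~\ref{lem:smooth-func} and~\ref{lem:argmax-weakconvex}, and that the stochastic subgradients are bounded by $\hG$) and then invokes \cite[Corollary 2.2]{davis2018stochastic} as a black box, whose content is precisely the Moreau-envelope one-step descent argument you re-derive. The only discrepancy is the absolute constant you flag: with the potential gap bounded by $2B$, both your derivation and the paper's own invocation of the cited corollary with $\eta = (2B/[\Ltilde \hG^2 (S+1)])^{1/2}$ yield a bound of $4\sqrt{2}\,(B\Ltilde\hG^2/(S+1))^{1/2}$, which requires $S+1 \geq 32\, B \Ltilde \hG^2 \epsilon^{-4}$ rather than the stated $16\, B \Ltilde \hG^2 \epsilon^{-4}$, so this factor-of-two slack is present in the paper's statement itself and is not a defect of your argument.
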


Theorem~\ref{thm:main-sgd} claims that the expected norm squared of the Moreau envelope gradient of the output point satisfies the condition for being an $\epsilon$-FOSP. This, together with Markov's inequality, implies that at least half of $x_0,\cdots,x_S$ are $2\epsilon$-FOSPs, so that the probability of outputting a $2\epsilon$-FOSP is at least $0.5$. Proposition~\ref{prop:postproc} in Appendix~\ref{app:sgd} shows how to use an efficient postprocessing mechanism to output a $2\epsilon$-FOSP with high probability.
Theorem~\ref{thm:main-sgd} essentially follows from the results of~\cite{davis2018stochastic}, where the key insight is that, in expectation, the SGD procedure in Algorithm~\ref{algo:sgd} almost monotonically decreases the Moreau envelope evaluated at $x_s$ i.e., $\E \left[g_{1/2\Ltilde}(x_s)\right]$ is almost monotonically decreasing. This shows that Algorithm~\ref{algo:sgd} makes (almost) monotonic progress in a precise sense and hence does not have limit cycles. In contrast, none of the other existing algorithms for nonconvex-nonconcave minimax optimization enjoy such a guarantee.
\subsection{Proximal algorithm with faster convergence for deterministic algorithms}\label{sec:prox}
\begin{algorithm}[t]
	\DontPrintSemicolon 
	\KwIn{initial point $x_0$, target accuracy $\epsilon$, smoothness parameter $\Ltilde$}
	$\tepsilon \leftarrow \epsilon^2/(64\Ltilde)$\;
	\For{$t = 0, 1, \ldots, S$} {
		Find $x_{s+1}$ such that $\max_{i \in [k]} g_i(x_{s+1}) + \Ltilde \norm{x_s - x_{s+1}}^2 \leq \left(\min_x \max_{i \in [k]} g_i(x) + \Ltilde \norm{x_s - x}^2\right) + \tepsilon/4$\label{step:3}
		
		\If{$\max_{i \in [k]} g_i(x_{s+1}) + \Ltilde \norm{x_s - x_{s+1}}^2 \geq \max_{i \in [k]} g_i(x_s) - 3\tepsilon/4$} {
			\KwRet{$x_{s}$}
		} 
	}
	\caption{Proximal algorithm}
	\label{algo:prox}
\end{algorithm}
While the rate achieved by SGD is the best known for weakly-convex optimization with a stochastic subgradient oracle, faster algorithms exist for functions which can be written as \emph{maximum over a finite number of smooth functions} with \emph{access to exact subgradients of these component functions}. These conditions are satisfied when $\cA_i$ are all deterministic and satisfy Assumption~\ref{ass:smoothalgo}. A pseudocode of such a fast proximal algorithm, inspired by~\cite[Algorithm 3]{thekumparampil2019efficient}, is presented in Algorithm~\ref{algo:prox}. However, in contrast to the results of~\cite{thekumparampil2019efficient}, the following theorem provides two alternate ways of implementing Step $3$ of Algorithm~\ref{algo:prox}, resulting in two different (and incomparable) convergence rates.
\begin{theorem}\label{thm:prox}
	Under Assumptions~\ref{ass:func-smooth} and~\ref{ass:smoothalgo}, if $\Ltilde \defeq L(1+G')^2 + GL' + k G(1+G')$ and $S \geq 200 \Ltilde B \epsilon^{-2}$ then Algorithm~\ref{algo:prox} returns $x_s$ satisfying $\norm{\nabla g_{1/2 \Ltilde}(x_s)} \leq \epsilon$. Depending on whether we use~\cite[Algorithm 1]{thekumparampil2019efficient} or cutting plane method~\cite{lee2015faster} for solving Step $3$ of Algorithm~\ref{algo:prox}, the total number of gradient computations of each $g_i$ is $\otilde{\frac{\Ltilde^2 B}{\epsilon^3}}$ or $\order{\frac{\Ltilde B}{\epsilon^2} \cdot \poly(k)\log \frac{\Ltilde}{\epsilon}}$ respectively.
\end{theorem}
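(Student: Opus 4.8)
The plan is to treat Algorithm~\ref{algo:prox} as an inexact proximal-point method on the weakly convex function $g(x)=\max_{i\in[k]}g_i(x)$ and to establish three things separately: (i) whenever the algorithm returns, the returned $x_s$ is an $\epsilon$-FOSP; (ii) a return is triggered within $S=\order{\Ltilde B/\epsilon^2}$ outer iterations; and (iii) each execution of Step~3 admits the claimed number of gradient computations. Throughout I write $\phi_s(x)\defeq\max_{i\in[k]}g_i(x)+\Ltilde\norm{x_s-x}^2$ for the prox objective, $\hat x\defeq\argmin_x\phi_s(x)$ for its exact minimizer, and $L_0\defeq L(1+G')^2+GL'$ for the common gradient-Lipschitz (hence weak-convexity) constant of the $g_i$ from Lemma~\ref{lem:smooth-func}. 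Two structural facts drive everything. By Lemma~\ref{lem:argmax-weakconvex}, $g$ is $L_0$-weakly convex, so $\phi_s$ is $(2\Ltilde-L_0)$-strongly convex with $2\Ltilde-L_0\ge\Ltilde$ since $\Ltilde\ge L_0$; and $\min_x\phi_s(x)=g_{1/2\Ltilde}(x_s)$ is exactly the Moreau envelope of Definition~\ref{def:Moreau} with $\hat x=\hat x_{1/2\Ltilde}(x_s)$, so Lemma~\ref{lem:moreau-properties}(c) gives the identity $\norm{\nabla g_{1/2\Ltilde}(x_s)}=2\Ltilde\norm{x_s-\hat x}$.

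For (i), I would combine the Step~3 accuracy $\phi_s(x_{s+1})\le\phi_s(\hat x)+\tepsilon/4$ with the return condition $\phi_s(x_{s+1})\ge\phi_s(x_s)-3\tepsilon/4$ (using $\phi_s(x_s)=g(x_s)$) to conclude $\phi_s(x_s)-\phi_s(\hat x)\le\tepsilon$. Strong convexity of $\phi_s$ together with $\nabla\phi_s(\hat x)=0$ then yields $\tfrac{\Ltilde}{2}\norm{x_s-\hat x}^2\le\phi_s(x_s)-\phi_s(\hat x)\le\tepsilon$, so $\norm{x_s-\hat x}^2\le 2\tepsilon/\Ltilde=\epsilon^2/(32\Ltilde^2)$; substituting into the Moreau-gradient identity gives $\norm{\nabla g_{1/2\Ltilde}(x_s)}\le 2\epsilon/\sqrt{32}\le\epsilon$. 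The choice of the constant $64$ in $\tepsilon=\epsilon^2/(64\Ltilde)$ is exactly what I would reverse-engineer to make this bound close.

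For (ii), I would note that whenever the return condition fails at step $s$ we have $\phi_s(x_{s+1})<\phi_s(x_s)-3\tepsilon/4$, and since $\Ltilde\norm{x_s-x_{s+1}}^2\ge0$ this forces $g(x_{s+1})<g(x_s)-3\tepsilon/4$. Because $\abs{g}\le B$ by Assumption~\ref{ass:func-smooth}(a), the total decrease of $g$ is at most $2B$, so there can be fewer than $8B/(3\tepsilon)<200\Ltilde B/\epsilon^2\le S$ non-returning iterations; hence the loop must trigger a return. This monotone decrease of $g$ along non-returning steps is the precise sense in which the method avoids limit cycles.

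The main obstacle is (iii), bounding the cost of Step~3. Here I would use the saddle reformulation $\min_x\phi_s(x)=\min_x\max_{\lambda\in\Delta_k}\big[\sum_i\lambda_i g_i(x)+\Ltilde\norm{x_s-x}^2\big]$, which is strongly convex in $x$ (constant $\ge\Ltilde$) and linear—hence concave—in $\lambda$ over the simplex (of $\order{1}$ diameter). The key quantitative point, and where the extra $k\hG$ summand in $\Ltilde$ is used, is that $\Ltilde$ dominates every relevant smoothness constant: the $x$-smoothness is $L_0+2\Ltilde=\order{\Ltilde}$, while the $x$–$\lambda$ coupling block has columns $\nabla g_i(x)$ with $\norm{\nabla g_i}\le\hG$ and hence operator norm $\order{\sqrt{k}\,\hG}\le\Ltilde$, making the problem $\order{1}$-conditioned. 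Requiring sub-accuracy $\tepsilon/4=\Theta(\epsilon^2/\Ltilde)$, I would then invoke the two solvers: \cite[Algorithm~1]{thekumparampil2019efficient} solves each strongly-convex–concave subproblem in $\otilde{\Ltilde/\epsilon}$ gradient evaluations (the $1/\sqrt{\text{accuracy}}$ dependence arising from the merely concave $\lambda$-block), which over $S=\order{\Ltilde B/\epsilon^2}$ outer steps gives $\otilde{\Ltilde^2 B/\epsilon^3}$; alternatively, the cutting-plane method of \cite{lee2015faster} applied to the $k$-dimensional dual $d(\lambda)=\min_x[\sum_i\lambda_i g_i(x)+\Ltilde\norm{x_s-x}^2]$—whose gradient is recovered by solving the $\order{1}$-conditioned strongly convex inner problem to logarithmic accuracy—costs $\poly(k)\log(\Ltilde/\epsilon)$ per outer step, for a total of $\order{\Ltilde B\epsilon^{-2}\poly(k)\log(\Ltilde/\epsilon)}$. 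The delicate bookkeeping is to verify that accuracy $\tepsilon/4$ really suffices for (i) and to track how it, the conditioning, and the simplex diameter enter each solver's rate; these are what make both complexities come out as stated.
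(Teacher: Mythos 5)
Your parts (i) and (ii), together with Option I of (iii), are correct and essentially reproduce the paper's own proof: the paper splits each iteration into the same two cases (a non-returning step, where $g(x_{s+1}) \leq \max_i g_i(x_{s+1}) + \Ltilde\norm{x_s-x_{s+1}}^2 \leq g(x_s) - 3\tepsilon/4$ and boundedness $\abs{g}\leq B$ caps the number of such steps by $8B/(3\tepsilon) \approx 171\,\Ltilde B/\epsilon^2 \leq S$; and a returning step, where the returned $x_s$ is shown to be an $\epsilon$-FOSP). Your Case-(i) argument is in fact slightly more direct than the paper's: you use that the prox objective \emph{is} the Moreau objective, so $\hat{x} = \hat{x}_{1/2\Ltilde}(x_s)$, and strong convexity plus the identity in Lemma~\ref{lem:moreau-properties}(c) give $\norm{\nabla g_{1/2\Ltilde}(x_s)} = 2\Ltilde\norm{x_s-\hat{x}} \leq 2\sqrt{2\Ltilde\tepsilon} = \epsilon/(2\sqrt{2})$, whereas the paper detours through a ball-containment argument (its inequality \eqref{eq:moreau-ball-lb-smooth}) to get the weaker bound $\norm{x_s - \hat{x}_{1/2\Ltilde}(x_s)} \leq 4\sqrt{\tepsilon/\Ltilde}$ before applying Lemma~\ref{lem:moreau-properties}(c); both close with the constant $64$. (One pedantic fix: $\phi_s$ is nonsmooth, so write $0 \in \partial\phi_s(\hat{x})$ rather than $\nabla\phi_s(\hat{x})=0$; the strong-convexity inequality at the minimizer holds regardless.) Option I also matches the paper: the subproblem is $\Ltilde$-strongly-convex--concave and $O(\Ltilde)$-smooth, so accuracy $\tepsilon/4 = \Theta(\epsilon^2/\Ltilde)$ costs $m = \otilde{\Ltilde/\epsilon}$ gradient calls per outer step, hence $\otilde{\Ltilde^2 B/\epsilon^3}$ in total.

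The genuine gap is in Option II. The cutting-plane method applied to the dual $u(\lambda) \defeq \min_x h(x,\lambda) + \Ltilde\norm{x-x_s}^2$ outputs an approximately optimal \emph{multiplier} $\lambdahat$, but Step 3 of Algorithm~\ref{algo:prox} demands a \emph{primal} point $x_{s+1}$ with $\max_i g_i(x_{s+1}) + \Ltilde\norm{x_s-x_{s+1}}^2 \leq \min_x \max_i \bigl(g_i(x) + \Ltilde\norm{x_s-x}^2\bigr) + \tepsilon/4$, and your sketch silently identifies the two. This conversion is not free: the paper devotes a separate statement, Proposition~\ref{prop:dual-primal}, to showing that if $\lambdahat$ is $\epsilontilde$-optimal for the dual, then the induced primal point $\xslh \defeq \argmin_x h(x,\lambdahat)$ has primal suboptimality only $O\bigl(\tfrac{L}{\mu}\epsilontilde + \tfrac{L D_\cU}{\sqrt{\mu}}\sqrt{\epsilontilde}\bigr)$ --- note the square-root degradation, which is proved via $\norm{x^*-\xslh} \leq \sqrt{2\epsilontilde/\mu}$, concavity in $\lambda$, and $\nabla_\lambda h(x^*,\lambda^*)=0$. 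Because of this loss, the dual must be solved to accuracy $\epsilontilde = \tepsilon^2/\poly(k,\Ltilde/\mu)$ rather than $\tepsilon/4$; the stated complexity survives only because the cutting-plane and inner gradient-descent costs are logarithmic in $1/\epsilontilde$. Without this dual-to-primal lemma, your Option II does not produce an admissible $x_{s+1}$ at all, so what you defer as ``delicate bookkeeping'' is in fact a missing idea that the paper has to prove separately.
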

Ignoring the parameters $L,G,L'$ and $G'$, the above theorem tells us that Algorithm~\ref{algo:prox} outputs an $\epsilon$-FOSP using $\order{k\epsilon^{-3}}$ or $\order{\poly(k) \epsilon^{-2} \log \frac{1}{\epsilon}}$ gradient queries to each $g_i$ depending on whether \cite[Algorithm 3]{thekumparampil2019efficient} or cutting plane method~\cite{lee2015faster} was used for implementing Step $3$. While the proximal algorithm itself works even when $\cA_i$ are randomized algorithms, there are no known algorithms that can implement Step (3) with fewer than $\order{\epsilon^{-2}}$ stochastic gradient queries. Hence, this does not improve upon the $\order{\epsilon^{-4}}$ guarantee for Algorithm~\ref{algo:sgd} when $\cA_i$ are randomized algorithms. The proof of Theorem~\ref{thm:prox} shows that the iterates $x_s$ monotonically decrease the value $g(x_s)$, guaranteeing that there are no limit cycles for Algorithm~\ref{algo:prox} as well.

\section{Smoothness of Popular Algorithms}\label{sec:smoothness}
In this section, we show that two popular algorithms---namely, $T$-step
stochastic gradient ascent (SGA) and $T$-step stochastic Nesterov's accelerated
gradient ascent (SNAG)---are both Lipschitz and gradient-Lipschitz satisfying Assumption~\ref{ass:smoothalgo} and hence are captured by our results in Section~\ref{sec:results}.
Consider the setting $f(x,y)=\frac{1}{n}\sum_{j \in [n]} f_j(x,y)$. Let $z$ be a
random seed that captures the randomness in the initial point as well as
minibatch order in SGA and SNAG. We first provide the
smoothness results on $T$-step SGA for different assumptions on the shape of the
function $f$ and for $T$-step SNAG. After giving these results, we make remarks
interpreting their significance and the implications. 

\paragraph{$T$-step SGA:} For a given $x$ and random seed $z$,
the $T$-step SGA update is given by:
\begin{equation*}
\textstyle y_{t+1} = y_t + \eta \nabla_y f_{\sigma(t)}(x,y_t)
\end{equation*}
where $\sigma:[T]\to [N]$ is a sample selection function and $\eta$ is the
stepsize. 
Observe that with the same randomness $z$, the initial point does not depend on $x$ i.e., $y_0(x) = y_0(x')$, so $Dy_0 = 0$.
The following theorems provide the Lipschitz and gradient Lipschitz constants of
$y_T(x)$ (as generated by $T$-step SGA) for the general nonconvex-nonconcave
setting as well as the settings in which the function $f$ is
nonconvex-concave and nonconvex-strongly concave. 

\begin{restatable}[General Case]{theorem}{sgsmooth}
    Suppose for all $j \in [n]$, $f_j$ satisfies
    Assumption~\ref{ass:func-smooth}. Then, for any fixed randomness $z$, $T$-step SGA is $(1+\eta L)^T$-Lipschitz and $4(\rho/L) \cdot (1+\eta L)^{2T}$-gradient Lipschitz.
    \label{thm:sg-smooth}
\end{restatable}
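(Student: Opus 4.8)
The plan is to view the output $y_T(x)$ of $T$-step SGA as the algorithm $\cA(x,z)$ of Definition~\ref{def:smooth} and to control both $\norm{y_T(x_1)-y_T(x_2)}$ and $\norm{Dy_T(x_1)-Dy_T(x_2)}$ by propagating estimates through the iteration index $t$. First I would differentiate the update $y_{t+1}=y_t+\eta\nabla_y f_{\sigma(t)}(x,y_t)$ in $x$ to get the matrix-valued recursion $J_{t+1}=J_t(\I+\eta\nabla^2_{yy}f_{\sigma(t)})+\eta\nabla^2_{xy}f_{\sigma(t)}$, where $J_t\defeq Dy_t(x)$ and the Hessian blocks are evaluated at $(x,y_t(x))$. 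Because each $f_j$ is $L$-gradient Lipschitz, every sub-block of its Hessian has operator norm at most $L$ (a sub-block's operator norm is dominated by that of the full matrix), so $\norm{\I+\eta\nabla^2_{yy}f_{\sigma(t)}}\le 1+\eta L$ and $\norm{\nabla^2_{xy}f_{\sigma(t)}}\le L$. Together with $J_0=Dy_0=0$, the scalar recursion $\norm{J_{t+1}}\le(1+\eta L)\norm{J_t}+\eta L$ telescopes into a geometric sum giving $\norm{Dy_T}\le(1+\eta L)^T-1\le(1+\eta L)^T$; since a uniform Jacobian bound implies Lipschitzness with the same constant, this settles the first-order claim.

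For the gradient-Lipschitz claim I would run two trajectories initialized from $x_1$ and $x_2$ (with the same seed $z$, hence the same sample-selection $\sigma$ and the same initial point), and track $\Delta_t\defeq\norm{Dy_t(x_1)-Dy_t(x_2)}$. Subtracting the two Jacobian recursions and regrouping yields $\Delta_{t+1}\le(1+\eta L)\Delta_t+\eta\norm{Dy_t(x_2)}\cdot\norm{H_t-\tilde H_t}+\eta\norm{M_t-\tilde M_t}$, where $H_t,M_t$ and $\tilde H_t,\tilde M_t$ denote the blocks $\nabla^2_{yy}f_{\sigma(t)}$ and $\nabla^2_{xy}f_{\sigma(t)}$ along the two trajectories. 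Each block-difference is bounded, via $\rho$-Hessian Lipschitzness, by $\rho(\norm{x_1-x_2}+\norm{y_t(x_1)-y_t(x_2)})$; invoking the first-order estimate $\norm{y_t(x_1)-y_t(x_2)}\le(1+\eta L)^t\norm{x_1-x_2}$ and $1\le(1+\eta L)^t$ bounds this by $2\rho(1+\eta L)^t\norm{x_1-x_2}$. Substituting this together with $\norm{Dy_t(x_2)}\le(1+\eta L)^t$ produces the scalar recursion $\Delta_{t+1}\le(1+\eta L)\Delta_t+4\eta\rho(1+\eta L)^{2t}\norm{x_1-x_2}$ with $\Delta_0=0$.

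Finally I would unroll this recursion to $\Delta_T\le 4\eta\rho\norm{x_1-x_2}\sum_{t=0}^{T-1}(1+\eta L)^{T-1-t}(1+\eta L)^{2t}$, factor out $(1+\eta L)^{T-1}$, and sum the geometric series $\sum_{t=0}^{T-1}(1+\eta L)^t=((1+\eta L)^T-1)/(\eta L)$; the factor $\eta L$ in the denominator cancels the leading $\eta$ and converts $L$ into $1/L$, leaving exactly $\norm{Dy_T(x_1)-Dy_T(x_2)}\le 4(\rho/L)(1+\eta L)^{2T}\norm{x_1-x_2}$. The main obstacle I anticipate is the exponent bookkeeping: the inhomogeneous term of the $\Delta_t$ recursion already grows like $(1+\eta L)^{2t}$ because it couples one factor $(1+\eta L)^t$ from the Hessian-difference bound with another from the Jacobian magnitude $\norm{Dy_t(x_2)}$, and one must relax the genuinely lower-order mixed-partial contribution (which carries only a single $(1+\eta L)^t$) into the same $(1+\eta L)^{2t}$ envelope so that the two inhomogeneous terms combine into the clean constant $4$ without inflating the final exponent beyond $2T$. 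A recurring minor point is that all the operator-norm estimates on Hessian sub-blocks and their differences must be justified by domination under the full-Hessian operator norm controlled in Assumption~\ref{ass:func-smooth}.
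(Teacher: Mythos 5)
Your proposal is correct and follows essentially the same route as the paper's proof: differentiate the SGA update to get the Jacobian recursion, bound $\norm{Dy_t}$ geometrically for the Lipschitz claim, then track the Jacobian difference along two trajectories, control the Hessian-block differences via $\rho$-Hessian Lipschitzness together with the first-order bounds, and unroll the resulting recursion. The only cosmetic difference is that you obtain the constant $4$ as a sum of two separately relaxed inhomogeneous terms ($2+2$) whereas the paper groups them as a product $(1+\norm{Dy_t(x_2)})(\norm{x_1-x_2}+\norm{y_t(x_1)-y_t(x_2)})$ giving $2\times 2$; both yield the identical recursion and final bound.
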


\begin{restatable}[Concave Case]{theorem}{sgconcave}
    Suppose for all $j \in [n]$, $f_j$ satisfies
    Assumption~\ref{ass:func-smooth} and $f_j(x, \cdot)$ is concave for any $x$.
    Then, for any fixed randomness $z$, $T$-step SGA is $\eta L T$-Lipschitz and $(\rho/L)\cdot(1+\eta L T)^3 $-gradient Lipschitz.
    \label{thm:sg-concave}
\end{restatable}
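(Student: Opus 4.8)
The plan is to treat $T$-step SGA as a smooth map $x \mapsto y_T(x)$ (for the fixed seed $z$, hence a fixed sample order $\sigma$ and a fixed $x$-independent initialization, so $Dy_0 = 0$) and to control its Jacobian $Dy_t \in \R^{d_1 \times d_2}$ and the variation of that Jacobian by a sensitivity (``differentiate through the iterates'') argument. Differentiating the update $y_{t+1} = y_t + \eta \nabla_y f_{\sigma(t)}(x, y_t)$ with respect to $x$ and applying the chain rule gives the linear recursion
\begin{equation*}
Dy_{t+1} = Dy_t\, M_t + N_t, \qquad M_t \defeq I + \eta \nabla_y^2 f_{\sigma(t)}(x, y_t), \quad N_t \defeq \eta\, \nabla_x\nabla_y f_{\sigma(t)}(x, y_t),
\end{equation*}
with $Dy_0 = 0$. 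The single structural fact I would extract from concavity is a spectral bound on $M_t$: since each $f_j(x,\cdot)$ is concave we have $\nabla_y^2 f_{\sigma(t)} \preceq 0$, and since $f_j$ is $L$-gradient Lipschitz we also have $\nabla_y^2 f_{\sigma(t)} \succeq -L I$; assuming the standard step-size condition $\eta L \le 1$ (so that the eigenvalues of $M_t$ lie in $[1-\eta L, 1] \subseteq [0,1]$) this yields $\norm{M_t} \le 1$. This replaces the bound $\norm{M_t} \le 1 + \eta L$ used in the general case (Theorem~\ref{thm:sg-smooth}) and is exactly what collapses the exponential-in-$T$ growth there into polynomial growth. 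The cross term is controlled by $\norm{N_t} \le \eta L$, using that $\nabla_x\nabla_y f_{\sigma(t)}$ is an off-diagonal block of the Hessian and hence has operator norm at most $L$.

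For the Lipschitz constant I would simply unroll the recursion, $Dy_T = \sum_{t=0}^{T-1} N_t \prod_{s=t+1}^{T-1} M_s$, so that $\norm{Dy_T} \le \sum_{t=0}^{T-1} \norm{N_t}\prod_s \norm{M_s} \le \sum_{t=0}^{T-1} \eta L = \eta L T$, which is the claimed Lipschitz bound; the $\norm{M_s}\le 1$ estimate is what turns the geometric sum of the general case into a plain sum.

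For the gradient-Lipschitz constant I would run two copies of the algorithm from inputs $x_1$ and $x_2$ with the same seed, writing $y_t, Dy_t$ for the $x_1$-trajectory and $y_t', Dy_t'$ for the $x_2$-trajectory, and track the two coupled differences $\Delta_t \defeq y_t - y_t'$ and $E_t \defeq Dy_t - Dy_t'$; the goal is to bound $\norm{E_T}$ by a constant times $\norm{x_1 - x_2}$. The already-proved Lipschitz bound gives $\norm{\Delta_t} \le \eta L T \norm{x_1 - x_2}$ for every $t$. Subtracting the two Jacobian recursions and adding/subtracting $Dy_t' M_t$ yields
\begin{equation*}
E_{t+1} = E_t M_t + Dy_t'\,(M_t - M_t') + (N_t - N_t'),
\end{equation*}
where $M_t', N_t'$ are evaluated at $(x_2, y_t')$. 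I would bound $\norm{M_t - M_t'}$ and $\norm{N_t - N_t'}$ by $\eta \rho \norm{(x_1,y_t) - (x_2,y_t')} \le \eta\rho(1 + \eta L T)\norm{x_1 - x_2}$, using the $\rho$-Hessian Lipschitzness of $f_{\sigma(t)}$ together with the bound on $\norm{\Delta_t}$, and then use $\norm{M_t} \le 1$ and $\norm{Dy_t'} \le \eta L T$. This produces the additive recursion $\norm{E_{t+1}} \le \norm{E_t} + (1+\eta L T)\,\eta\rho(1+\eta L T)\norm{x_1 - x_2}$, and since $E_0 = 0$ summing over the $T$ steps gives $\norm{E_T} \le \eta \rho T (1 + \eta L T)^2 \norm{x_1 - x_2}$. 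The final bookkeeping step is to absorb the prefactor via $\eta \rho T = (\rho/L)\,\eta L T \le (\rho/L)(1 + \eta L T)$, upgrading the bound to $(\rho/L)(1+\eta L T)^3 \norm{x_1-x_2}$, matching the claim.

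The easy part is the Lipschitz estimate; the main work, and the main place to be careful, is the coupled difference recursion for $E_t$. The delicate point is that the perturbation enters $M_t - M_t'$ and $N_t - N_t'$ through \emph{both} the changed input $x$ and the drifted iterate $y_t$, so the Hessian-Lipschitz estimate must be fed the previously established bound on $\norm{\Delta_t}$; keeping the operator-norm algebra of the matrix products $E_t M_t$ and $Dy_t'(M_t - M_t')$ consistent, and verifying $E_0 = 0$ from the seed-shared, $x$-independent initialization, is where errors would most likely creep in. Throughout, the essential leverage from concavity is the single inequality $\norm{M_t} \le 1$, without which the same argument only recovers the exponential constants of the general case.
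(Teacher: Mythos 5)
Your proposal is correct and follows essentially the same route as the paper's proof: both differentiate through the SGA iterates to get the Jacobian recursion, use concavity plus $L$-gradient Lipschitzness to bound $\norm{I + \eta\nabla_{yy}f_{\sigma(t)}} \leq 1$ (collapsing the geometric growth of the general case), and then run the same coupled-difference recursion on $Dy_t(x_1) - Dy_t(x_2)$ with the Hessian-Lipschitz bound fed by the already-established $\eta L T$-Lipschitz estimate. The only notable difference is that you make the step-size condition $\eta L \leq 1$ explicit, whereas the paper leaves it implicit (its bound $\norm{(I+\eta\nabla_{yy}f)Dy_t} \leq \norm{Dy_t}$ silently requires $\eta L \leq 2$), so your version is if anything slightly more careful.
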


\begin{restatable}[Strongly-concave Case]{theorem}{sgstrconcave}
    Suppose for all $j \in [n]$, $f_j$ satisfies
    Assumption~\ref{ass:func-smooth} and $f_j(x, \cdot)$ is $\alpha$-strongly
    concave for any $x$. Then, for any fixed randomness $z$, $T$-step SGA is $\kappa$-Lipschitz and $4(\rho/L)\cdot \kappa^3$-gradient Lipschitz, where $\kappa = L/\alpha$ is the condition number.
    \label{thm:sg-strconcave}
\end{restatable}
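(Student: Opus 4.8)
The plan is to perform a sensitivity analysis of the map $x \mapsto y_T(x)$ by tracking how the Jacobian $Dy_t(x)$ of the $t$-th iterate evolves through the iterations, and then how it varies across two inputs $x, x'$. Differentiating the update $y_{t+1} = y_t + \eta \nabla_y f_{\sigma(t)}(x, y_t)$ with respect to $x$ and applying the chain rule gives the linear recursion
\begin{equation*}
Dy_{t+1} = (I + \eta A_t)\, Dy_t + \eta B_t, \qquad A_t \defeq \nabla^2_{yy} f_{\sigma(t)}(x, y_t), \quad B_t \defeq \nabla^2_{yx} f_{\sigma(t)}(x, y_t),
\end{equation*}
with $Dy_0 = 0$. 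Since each $f_{\sigma(t)}(x,\cdot)$ is $\alpha$-strongly concave and $L$-gradient Lipschitz, $A_t$ is symmetric with eigenvalues in $[-L,-\alpha]$, so for the standard step size $\eta \le 1/L$ we have $\norm{I + \eta A_t} \le 1 - \eta\alpha$; Assumption~\ref{ass:func-smooth}(c) also gives $\norm{B_t} \le L$ (off-diagonal blocks of $\nabla^2 f$ have operator norm at most $L$).

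First I would establish the Lipschitz bound. Setting $a_t \defeq \norm{Dy_t}$ and taking norms in the recursion yields $a_{t+1} \le (1-\eta\alpha) a_t + \eta L$ with $a_0 = 0$. Summing the geometric series gives $a_t \le (L/\alpha)(1 - (1-\eta\alpha)^t) \le \kappa$ for every $t$, bounding the Jacobian operator norm uniformly in $t$ and hence showing $T$-step SGA is $\kappa$-Lipschitz. The same recursion, applied to the trajectory difference $\Delta y_t \defeq y_t(x) - y_t(x')$ via the integral (mean-value) form of the gradient difference, gives $\norm{\Delta y_t} \le \kappa \norm{x - x'}$, a bound I will reuse.

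The main work is the gradient-Lipschitz (second-order) estimate. I would run the Jacobian recursion along two inputs $x, x'$ and subtract, obtaining for $\Delta Dy_t \defeq Dy_t(x) - Dy_t(x')$:
\begin{equation*}
\Delta Dy_{t+1} = (I + \eta A_t)\, \Delta Dy_t + \eta (A_t - A'_t)\, Dy_t(x') + \eta (B_t - B'_t),
\end{equation*}
where $A'_t, B'_t$ are the Hessian blocks evaluated along the $x'$-trajectory. The key estimates are $\norm{A_t - A'_t}, \norm{B_t - B'_t} \le \rho(\norm{x-x'} + \norm{\Delta y_t}) \le \rho(1+\kappa)\norm{x - x'}$, using the $\rho$-Hessian Lipschitzness of Assumption~\ref{ass:func-smooth}(d) together with the trajectory bound, combined with $\norm{Dy_t(x')}\le\kappa$. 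Writing $c_t \defeq \norm{\Delta Dy_t}$ then produces $c_{t+1} \le (1-\eta\alpha) c_t + \eta\rho(1+\kappa)^2 \norm{x-x'}$ with $c_0 = 0$, whose geometric sum gives $c_t \le \frac{\rho(1+\kappa)^2}{\alpha}\norm{x-x'}$. Finally, since $\alpha \le L$ forces $\kappa \ge 1$, we have $(1+\kappa)^2 \le 4\kappa^2$, and substituting $1/\alpha = \kappa/L$ yields $c_T \le 4(\rho/L)\kappa^3 \norm{x-x'}$, the claimed gradient-Lipschitz constant.

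I expect the main obstacle to be this second-order step: correctly deriving and bounding the perturbed Jacobian recursion, in particular isolating the three error sources (the contraction factor $I+\eta A_t$, the Hessian mismatch $A_t - A'_t$ scaled by the already-bounded Jacobian $Dy_t(x')$, and the mismatch $B_t - B'_t$), and verifying that the $T$-independence survives. The latter hinges entirely on the contraction $\norm{I+\eta A_t}\le 1-\eta\alpha$ furnished by strong concavity, which is precisely what makes every geometric series converge to a constant free of $T$ and $\eta$ — in contrast to the general and concave cases, where no such contraction is available and the constants grow with $T$.
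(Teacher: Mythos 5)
Your proposal is correct and follows essentially the same route as the paper's proof: the same Jacobian recursion $Dy_{t+1} = (I+\eta\nabla^2_{yy}f_{\sigma(t)})Dy_t + \eta\nabla^2_{yx}f_{\sigma(t)}$, the same contraction bound $\norm{I+\eta\nabla^2_{yy}f_{\sigma(t)}}\le 1-\eta\alpha$ from strong concavity, the same perturbed-recursion decomposition for the two-point difference, and the same geometric-series summation with $(1+\kappa)^2 \le 4\kappa^2$ yielding the $T$-independent constants $\kappa$ and $4(\rho/L)\kappa^3$. The only (welcome) refinement is that you state explicitly the step-size condition $\eta \le 1/L$ under which the contraction bound holds, which the paper leaves implicit.
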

\paragraph{$T$-step SNAG:}
For a given random seed $z$, the $T$-step SNAG update is given by:
\begin{align*}
\ty_{t}=&y_{t}+ (1-\theta) (y_{t}-y_{t-1})\\
y_{t+1}=&\ty_t+\eta \nabla_y\widehat{f}_{\sigma(t)}(x,\ty_t),
\end{align*}
where $\eta$ is the stepsize, $\theta\in[0,1]$ is the momentum parameter. The output of the algorithm is given by $\cA(x,z) = y_T(x)$. Furthermore, we have the following guarantee.

\begin{restatable}[General Case]{theorem}{agsmooth}
    Suppose for all $j \in [n]$, $f_j$ satisfies
    Assumption~\ref{ass:func-smooth}. Then, for any fixed seed $z$, $T$-step SNAG is $T (1+\eta L/\theta)^T$-Lipschitz and $50(\rho/L) \cdot T^3(1+\eta L/\theta)^{2T}$-gradient Lipschitz.
    \label{thm:ag-smooth}
\end{restatable}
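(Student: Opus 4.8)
The plan is to differentiate the SNAG recursion with respect to $x$ and track the resulting first- and second-order sensitivity recursions, bounding each by induction. Write $Y_t \defeq Dy_t(x)$ and $\widetilde Y_t \defeq D\ty_t(x)$; since the seed $z$ fixes the initialization independently of $x$, the inductions start from the clean base cases $Y_0 = Y_{-1} = 0$. Differentiating $\ty_t = y_t + (1-\theta)(y_t - y_{t-1})$ gives the momentum combination $\widetilde Y_t = (2-\theta) Y_t - (1-\theta) Y_{t-1}$, and applying the chain rule to $y_{t+1} = \ty_t + \eta \nabla_y \widehat f_{\sigma(t)}(x,\ty_t)$ gives
\begin{equation*}
Y_{t+1} = \widetilde Y_t\bigl(I + \eta \nabla^2_{yy}\widehat f_{\sigma(t)}(x,\ty_t)\bigr) + \eta\, \nabla^2_{xy}\widehat f_{\sigma(t)}(x,\ty_t),
\end{equation*}
where Assumption~\ref{ass:func-smooth}(c) bounds both Hessian blocks by $L$ in operator norm (they are submatrices of the full Hessian).

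For the Lipschitz constant, the key difficulty is that momentum turns this into a two-term (second-order) linear recursion in $Y_t$, rather than the one-term recursion underlying the SGA result of Theorem~\ref{thm:sg-smooth}. To extract the correct base I would pass to the velocity sensitivity $V_t \defeq Y_t - Y_{t-1}$: differentiating $v_{t+1} = (1-\theta) v_t + \eta \nabla_y \widehat f_{\sigma(t)}(x,\ty_t)$ yields $V_{t+1} = (1-\theta) V_t + \eta[\nabla^2_{xy}\widehat f + \widetilde Y_t \nabla^2_{yy}\widehat f]$, with $\widetilde Y_t = Y_t + (1-\theta)V_t$ and $Y_{t+1} = Y_t + V_{t+1}$. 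Each step contracts the velocity by $(1-\theta)$ while injecting a gradient term of size $\eta L(1+\|Y_t\|)$; summing the geometric series in $(1-\theta)$ produces the factor $1/\theta$ and hence an effective per-step amplification of $1+\eta L/\theta$, while accumulating $V$ into $Y$ over $T$ steps contributes the polynomial factor $T$. A clean induction then yields $\|Y_t\| \le t(1+\eta L/\theta)^t$, i.e.\ the claimed $T(1+\eta L/\theta)^T$ constant; a characteristic-root computation of the scalar recursion $a_{t+1}=(1+\eta L)[(2-\theta)a_t-(1-\theta)a_{t-1}]$ confirms that its dominant root is $1+\eta L/\theta + o(\eta L/\theta)$, which is what pins the base.

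For the gradient-Lipschitz constant I would differentiate the $Y$-recursion once more to get a recursion for the second-order sensitivity $Z_t \defeq D^2 y_t$. Its homogeneous part is identical to that of the $Y$-recursion (same momentum coupling, same $(I+\eta\nabla^2_{yy}\widehat f)$ amplification), so it grows at the same rate $1+\eta L/\theta$, but the forcing is now quadratic in the first-order sensitivity: the product rule on $\widetilde Y_t\,\nabla^2_{yy}\widehat f(x,\ty_t)$ together with the chain rule through the argument $(x,\ty_t)$ produces third-derivative terms of $\widehat f$, which Assumption~\ref{ass:func-smooth}(d) bounds by $\rho$, multiplied by $(1+\|\widetilde Y_t\|)^2$. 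Substituting the first-order bound makes the forcing of order $\eta\rho\,T^2(1+\eta L/\theta)^{2T}$; propagating it through the same geometric growth and $T$-step accumulation, and noting that the normalization against the $\nabla^2_{yy}\widehat f\approx L$ dynamics contributes a factor $1/L$, yields $\|Z_T\| \lesssim (\rho/L)\,T^3(1+\eta L/\theta)^{2T}$.

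The main obstacle throughout is the momentum coupling. Pinning down the base $1+\eta L/\theta$ rigorously — rather than settling for a loose $(1+\eta L)^{cT}$ — is the delicate step, and the velocity reformulation is what makes the bookkeeping tractable; one must still verify that the geometric sum of the $(1-\theta)$-contractions produces exactly the $1/\theta$ factor and that the $T$-accumulation is handled consistently in both inductions. Moreover, because the second-order recursion forces a term quadratic in the first-order sensitivity, any slack in the Lipschitz bound is squared, so the first-order estimate must be controlled tightly enough to land the $T^3$ and $(1+\eta L/\theta)^{2T}$ exponents and the $\rho/L$ prefactor exactly.
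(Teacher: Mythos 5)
Your proof of the Lipschitz constant is essentially the paper's own argument: the paper likewise passes to the increment $\delta_t = \|Dy_t - Dy_{t-1}\|$ (your $V_t$), derives the recursion $\delta_{t+1} \le \eta L\|Dy_t\| + (1+\eta L)(1-\theta)\delta_t + \eta L$, proves $\delta_t \le (1+\eta L/\theta)^t$ by induction (the induction step being exactly your geometric-sum-in-$(1-\theta)$ bookkeeping, via $\eta L\sum_{\tau \le t}(1+\eta L/\theta)^{\tau} = \theta[(1+\eta L/\theta)^{t+1}-1]$), and then sums increments to get $\|Dy_t\| \le t(1+\eta L/\theta)^t$. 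Where you diverge is the second-order bound: you differentiate the Jacobian recursion once more and bound the second derivative $D^2 y_t$, invoking ``third-derivative terms of $\widehat f$, which Assumption~\ref{ass:func-smooth}(d) bounds by $\rho$.'' That step has a genuine technical wrinkle: $\rho$-Lipschitzness of $\nabla^2 f$ does not imply that third derivatives exist everywhere, so a pointwise bound on $D^2 y_t$ is not directly available, and converting an almost-everywhere bound into the Lipschitz statement required by Definition~\ref{def:smooth}(b) needs an extra regularity argument. The paper sidesteps this entirely by working with the finite difference $w_t = Dy_t(x_1) - Dy_t(x_2)$ and its increments $\zeta_t = \|w_t - w_{t-1}\|$: the forcing terms are then differences of Hessians at two argument pairs, which Assumption~\ref{ass:func-smooth}(d) bounds directly by $\rho\left(\|x_1-x_2\| + \|\ty_t(x_1)-\ty_t(x_2)\|\right)$ times the relevant Jacobian norms, yielding forcing of size $24\eta\rho\, t^2(1+\eta L/\theta)^{2t}\|x_1-x_2\|$, the induction $\zeta_t \le 50(\rho/L)\, t^2(1+\eta L/\theta)^{2t}\|x_1-x_2\|$, and finally $\|w_t\| \le 50(\rho/L)\, t^3(1+\eta L/\theta)^{2t}\|x_1-x_2\|$. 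Your quantitative mechanism---same homogeneous recursion, forcing quadratic in the first-order bound, the $1/L$ factor emerging from the geometric sum---is exactly the paper's; only the finite-difference reformulation is needed to make the second half rigorous under the stated assumptions.
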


\paragraph{Remarks on the Impact of the Smoothness Results:}
First, the Lipschitz and gradient Lipschitz parameters for $T$-step
SGA and $T$-step SNAG in the
setting where $f$ is nonconvex-\emph{nonconcave} are all exponential in $T$,
the duration of the algorithm. In general, this seems unavoidable in the worst
case for the above algorithms and seems to be the case for most of the other
popular algorithms such as ADAM, RMSProp etc.~as well. 
On the other hand, in the nonconvex-\emph{concave} and nonconvex-\emph{strongly concave}
settings,  our results show that the smoothness parameters of $T$-step SGA are no longer
exponential in $T$. In particular, in the nonconvex-\emph{concave} the
Lipschitz parameter is linear in $T$ and the gradient Lipschitz parameter is
polynomial in $T$ while in the  nonconvex-\emph{strongly concave},
the analogous smoothness parameters are no longer dependent on $T$.  We conjecture this is also the case for $T$-step SNAG,
though the proof appears quite tedious and hence, we opted to leave that
for future work. 
For problems of practical importance, however, we believe that the smoothness
parameters are rarely exponential in $T$. Our experimental results confirm this
intuition -- see Figure~\ref{fig:grad_norm}.
Second, while we prove the Lipschitz and gradient Lipschitz properties only for SGA and SNAG, we believe that the same techniques could be used to prove similar results for several other popular algorithms such as ADAM, RMSProp etc. However, there are other algorithms, particularly those involving projection that are not gradient Lipschitz (see Proposition~\ref{prop:proj} in Appendix~\ref{sec:proj}).

\section{Empirical Results}
\label{sec:empirical}
This section presents empirical results evaluating our SGD algorithm (Algorithm~\ref{algo:sgd}) for generative adversarial networks~\cite{goodfellow2014generative} and adversarial training~\cite{madry2017towards}. 
We demonstrate that our framework results in stable monotonic improvement during training and that the optimization through the algorithm of the adversary is key to robustness and fast convergence. Finally, we show that in practice the gradient norms do not grow exponentially in the number of gradient ascent steps $T$ taken by the adversary.

\textbf{Generative Adversarial Networks.} 
A common and general formulation of generative adversarial networks is characterized by the following minimax optimization problem  (see, e.g.,~\cite{nagarajan2017gradient, mescheder2018training}):
\begin{equation}
\textstyle \min_{\theta}\max_{\omega}f(\theta,\omega)=\E_{x\sim p_{\mathcal{X}}}[\ell(D_{\omega}(x))]+\E_{z\sim p_{\mathcal{Z}}}[\ell(-D_{\omega}(G_{\theta}(z)))].
\label{eq:ganobjective}
\end{equation}
In this formulation $G_\theta: \mathcal{Z}\rightarrow \mathcal{X}$ is the generator network parameterized by $\theta$ that maps from the latent space $\mathcal{Z}$ to the input space $\mathcal{X}$, $D_\omega:\mathcal{X}\rightarrow \mathbb{R}$ is discriminator network parameterized by $\omega$ that maps from the input space $\mathcal{X}$ to real-valued logits, and $p_{\mathcal{X}}$ and $p_{\mathcal{Z}}$ are the distributions over the input space and the latent space. The loss function defines the objective where $\ell(w)=-\log(1+\exp(-w))$ recovers the original ``saturating'' generative adversarial networks formulation~\cite{goodfellow2014generative}.

\textbf{Dirac--GAN.} The Dirac-GAN~\cite{mescheder2018training} is a simple and common baseline for evaluating the efficacy of generative adversarial network training methods. 
In this problem, 
the generator distribution $G_{\theta}(z) = \delta_\theta$ is a Dirac distribution concentrated at $\theta$, the discriminator network 
$D_{\omega}(x) = -\omega x$ is linear, and the real data distribution $p_{\mathcal{X}}=\delta_0$ is a
Dirac distribution concentrated at zero.
The resulting objective after evaluating~\eqref{eq:ganobjective} with the loss function $\ell(w)=-\log(1+\exp(-w))$ is 
\[\textstyle \min_{\theta}\max_{\omega}f(\theta,\omega)=\ell(\theta\omega)+\ell(0)=-\log(1+e^{-\theta\omega})-\log(2).\]

To mimic the real data distribution, the generator parameter $\theta$ should converge to $\theta^{\ast}=0$. 
Notably, simultaneous and alternating gradient descent-ascent are known to cycle and fail to converge on this problem (see Figure~\ref{fig:dirac}). 
We consider an instantiation of our framework where 
the discriminator samples an initialization uniformly between $[-0.1, 0.1]$ and performs $T=10$ steps of gradient ascent between each generator update. 
The learning rates for both the generator and the discriminator are $\eta=0.01$. 
We present the results in Figure~\ref{fig:dirac}. Notably,
the generator parameter monotonically converges to the optimal $\theta^{\ast}=0$ and matches the real data distribution using our training method. We also show the performance when the generator descends using partial gradient $\nabla_\theta f(x, \mathcal{A}(\theta))$ instead of the total gradient $\nabla f(\theta, \mathcal{A}(\theta))$ in Algorithm~\ref{algo:sgd}. This method is able to converge to the optimal generator distribution but at a slower rate. Together, this example highlights that our method fixes the usual cycling problem by reinitializing the discriminator and also that optimizing through the discriminator algorithm is key to fast convergence. Additional results are given in Appendix~\ref{app:experiments}.

\begin{figure}[!tbp]
  \begin{minipage}[t]{.525\textwidth}
  \centering
  \subfloat{\includegraphics[width=\textwidth]{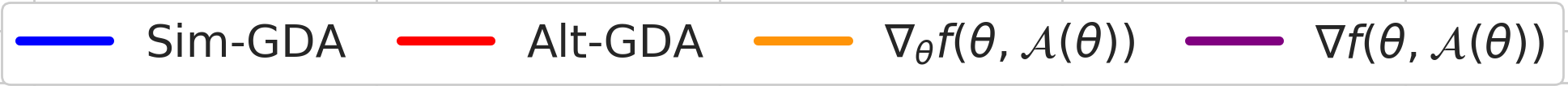}}

  \vspace{0pt}
  \subfloat{\includegraphics[width=.5\textwidth]{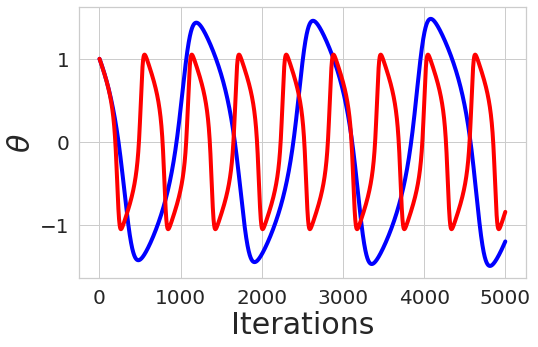}}
  \subfloat{\includegraphics[width=.5\textwidth]{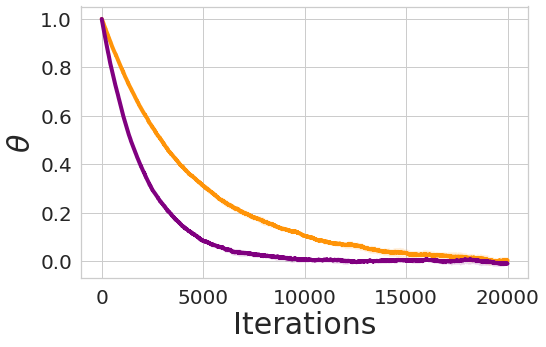}}
  \caption{\textbf{Dirac-GAN}: Generator parameters while training using simultaneous and alternating gradient descent-ascent (left), and our framework (right) with \& without optimizing through the discriminator. Under our framework, training is stable and converges to correct distribution. Further, differentiating through the discriminator results in faster convergence.}
    \label{fig:dirac}
  \end{minipage}%
  \hfill 
   \begin{minipage}[t]{.425\textwidth}
   \centering
  \vspace{0pt}
    \includegraphics[width=\textwidth]{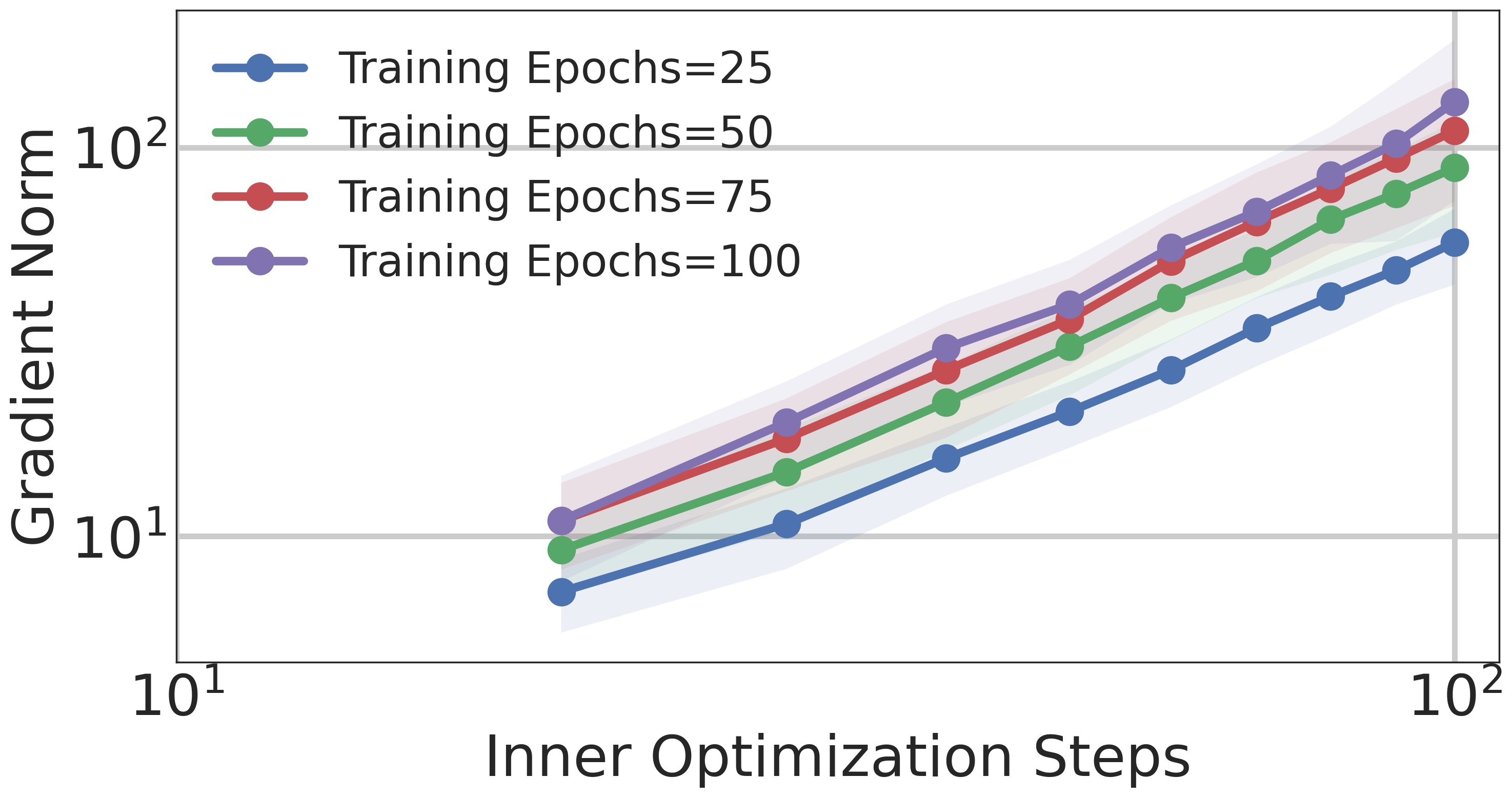}
    \caption{\textbf{Adversarial training}: $\|\nabla f(\theta, \mathcal{A}(\theta))\|$ as a function of number of steps $T$ taken by gradient ascent (GA) algorithm $\mathcal{A}$ evaluated at multiple points in the training procedure. The plot shows that, in practice, the Lipschitz parameter of GA does not grow exponentially in $T$.}
    \label{fig:grad_norm}
  \end{minipage}%
\end{figure}
\begin{figure}[t!]
  \centering
  \vspace{0pt}
  \subfloat[][Real Data]{\includegraphics[width=.14\textwidth]{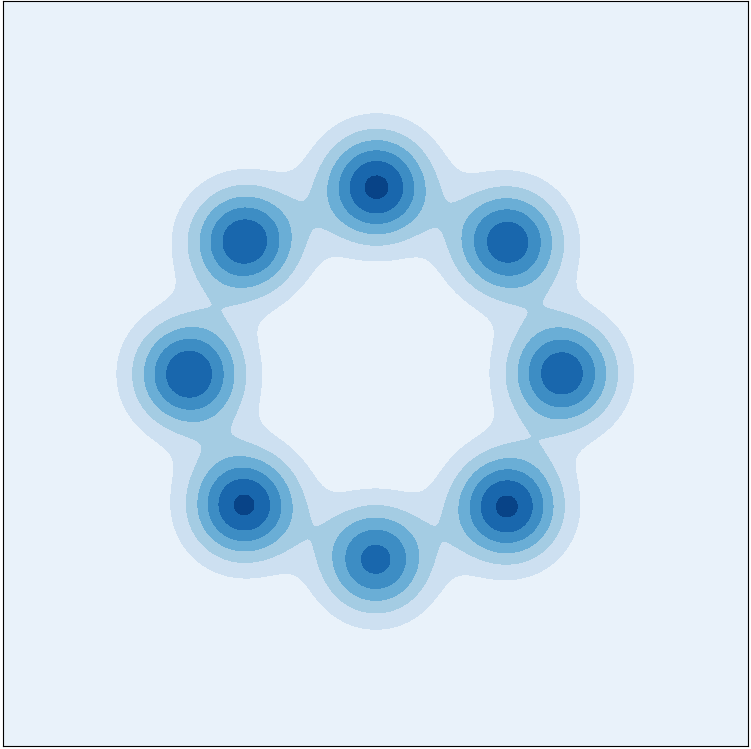}\label{fig:gan_real}}
    \subfloat[][10k]{\includegraphics[width=.14\textwidth]{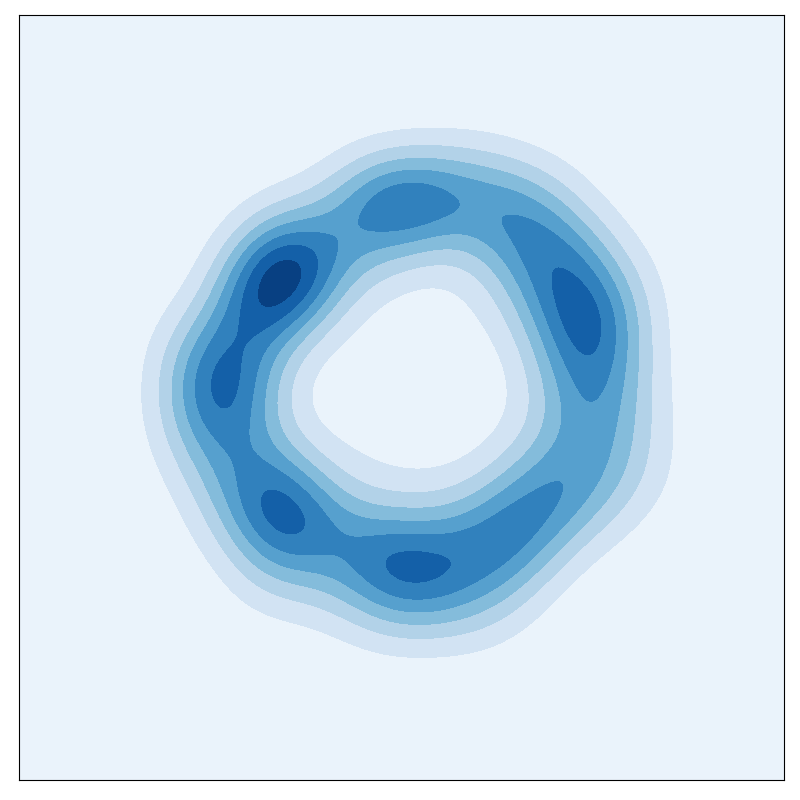}\label{fig:gan_results1}} 
    \subfloat[][40k]{\includegraphics[width=.14\textwidth]{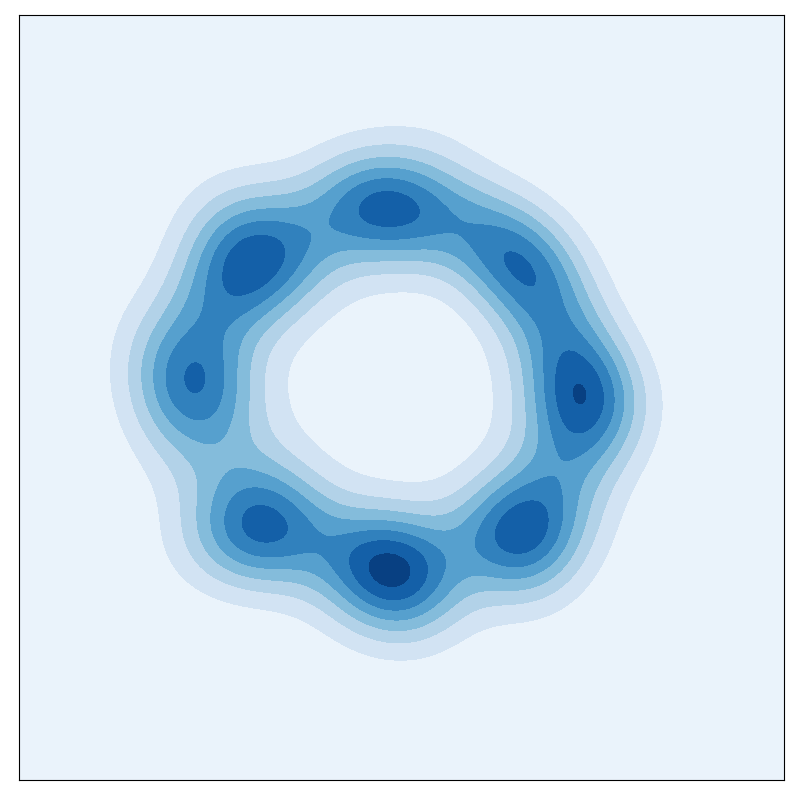}\label{fig:gan_results2}} 
    \subfloat[][70k]{\includegraphics[width=.14\textwidth]{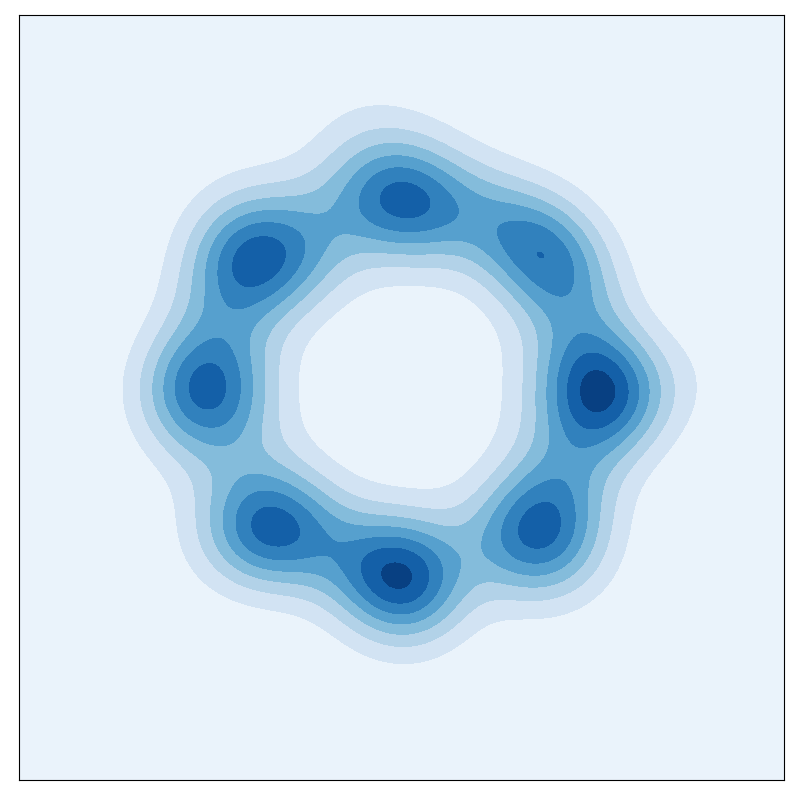}\label{fig:gan_results3}} 
    \subfloat[][100k]{\includegraphics[width=.14\textwidth]{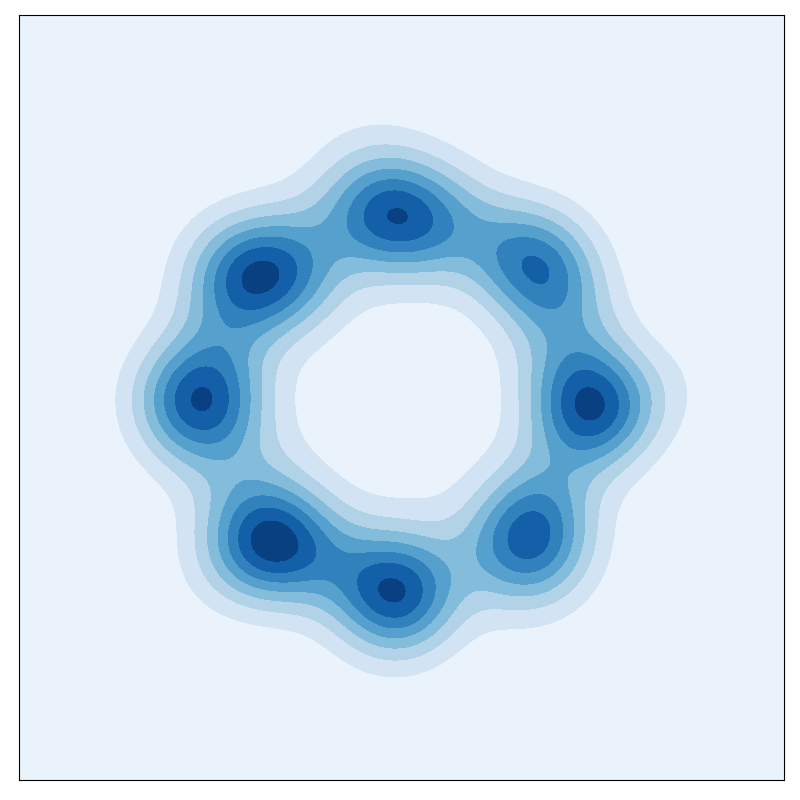}\label{fig:gan_results4}} 
    \subfloat[][130k]{\includegraphics[width=.14\textwidth]{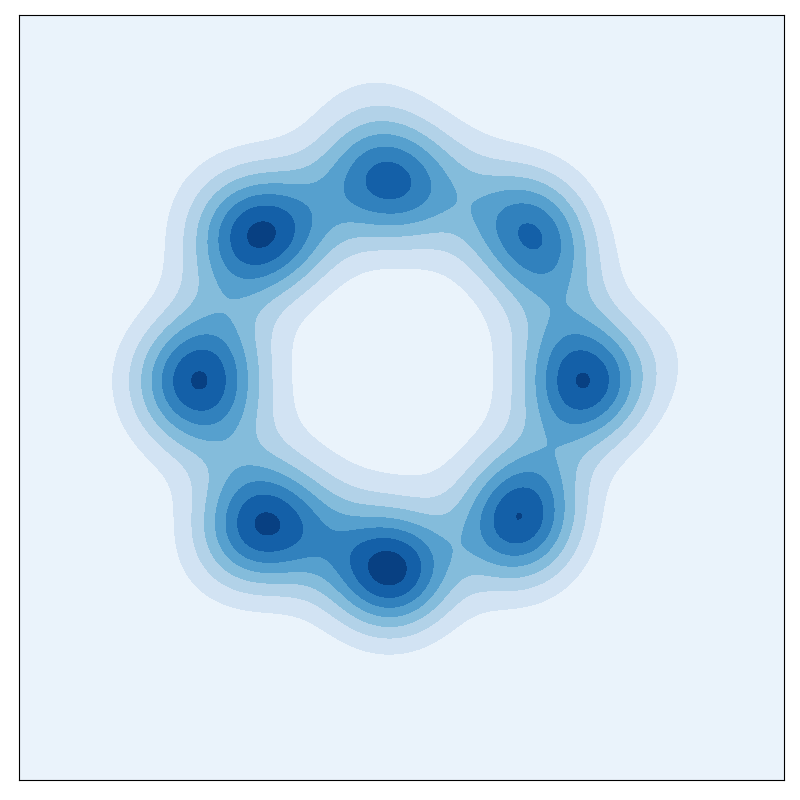}\label{fig:gan_results5}} 
    \subfloat[][150k]{\includegraphics[width=.14\textwidth]{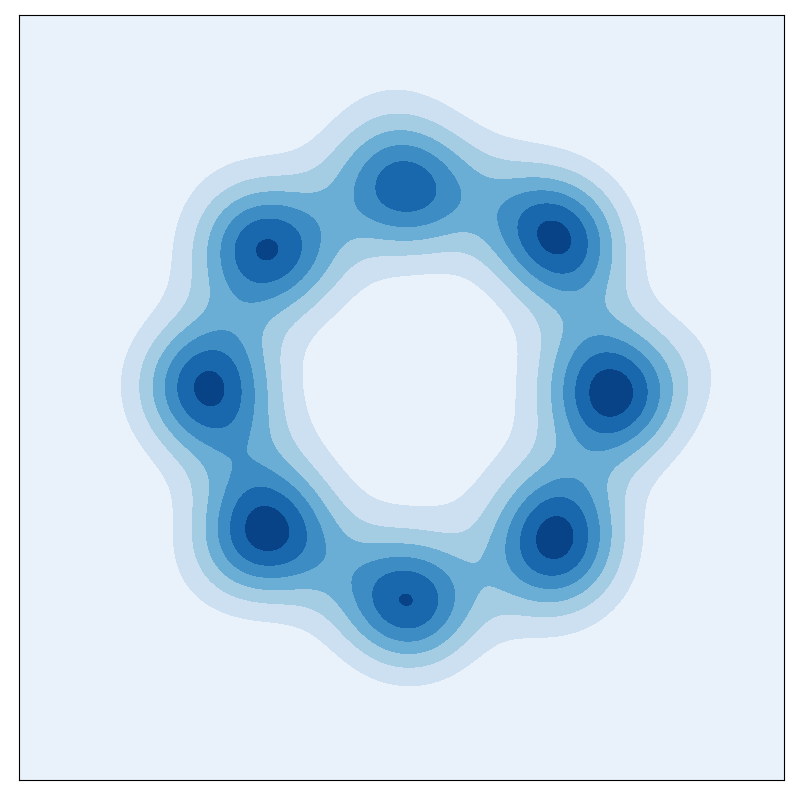}\label{fig:gan_results6}} 
  \caption{
 \textbf{Mixture of Gaussians}: Generated distribution at various steps during course of training. We see that training is stable and results in monotonic progress towards the true distribution.
  }
  \label{fig:gan}
\end{figure}

\textbf{Mixture of Gaussians.}
We now demonstrate that the insights we developed from the Dirac-GAN (stability and monotonic improvement) carry over to the more complex problem of learning a 2-dimensional mixture of Gaussians. This is a common example and a number of papers (see, e.g.,~\cite{metz2016unrolled, mescheder2017numerics, balduzzi2018mechanics}) show that standard training methods using simultaneous or alternating gradient descent-ascent can fail. 
The setup for the problem is as follows.
The real data distribution consists of 2-dimensional Gaussian distributions with means given by $\mu = [\sin(\phi), \cos(\phi)]$ for $\phi \in \{k\pi/4\}_{k=0}^7$ and each with covariance $\sigma^2 I$ where $\sigma^2=0.05$. For training, the real data $x\in \mathbb{R}^2$ 
is drawn at random
from the set of Gaussian distributions and the latent data $z \in \mathbb{R}^{16}$ 
is drawn from a standard normal distribution with batch sizes of 512. 
The network for the generator and discriminator contain two and one hidden layers respectively, each of which contain $32$ neurons and ReLU activation functions. 
We consider the objective from~\eqref{eq:ganobjective} with $\ell(w)=-\log(1+\exp(-w))$ which corresponds to the ``saturating'' generative adversarial networks formulation~\cite{goodfellow2014generative}. This objective is known to be difficult to train since with typical training methods the generator gradients saturate early in training.

 We show results using our framework in Figure~\ref{fig:gan} where the discriminator performs $T=15$ steps of gradient ascent and the initialization between each generator step is obtained by the default network initialization in Pytorch.
The generator and discriminator learning rates are both fixed to be $\eta = 0.5$. 
We see that our method has stable improvement during the course of training and recovers close to the real data distribution. We demonstrate in Appendix~\ref{app:experiments} that this result is robust by presenting the final output of 10 runs of the procedure. Notably, the training algorithm recovers all the modes of the distribution in each run. We also show results using Adam for the discriminator in Appendix~\ref{app:experiments}.

\begin{figure}[t!]
  \centering
  \includegraphics[width=\textwidth]{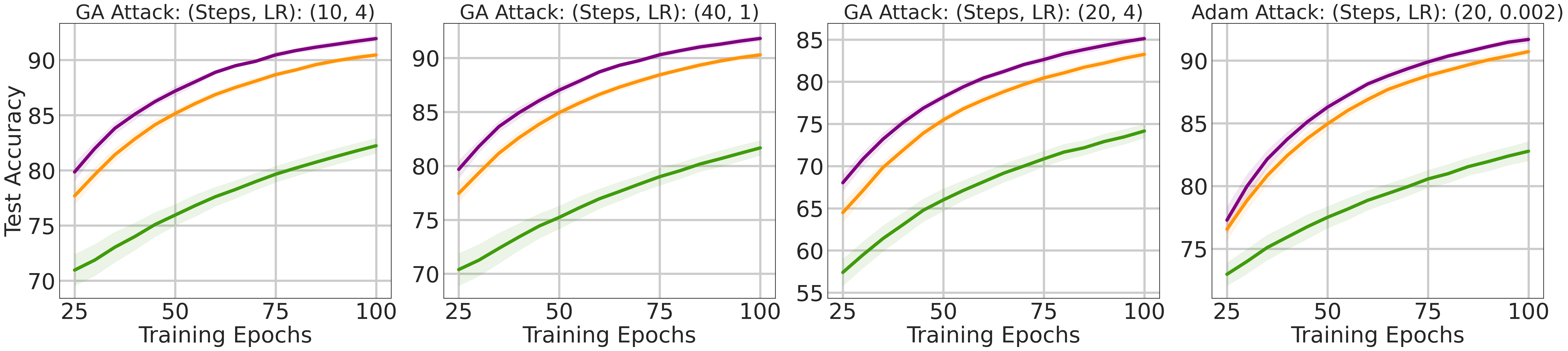}
\includegraphics[width=.55\textwidth]{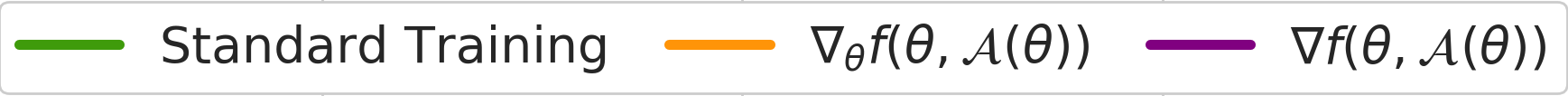}
  \caption{\textbf{Adversarial training}: Test accuracy during course of training where the attack used during training is gradient ascent (GA) with learning rate (LR) of $4$ and number of steps (Steps) of $10$ but evaluated against attacks with different Steps and LR. These plots show that training with a single attack gives more robustness to even other attacks with different parameters or algorithms, compared to standard training. Further, using total gradient $\nabla f(\theta,\cA(\theta))$ yields better robustness compared to using partial gradient $\nabla_{\theta}f(\theta, \cA(\theta))$ as is done in standard adversarial training~\cite{madry2017towards}.}
  \label{fig:attack}
\end{figure}

\textbf{Adversarial Training.} 
Given a data distribution $\mathcal{D}$ over pairs of examples $x\in \mathbb{R}^d$ and labels $y\in [k]$, parameters $\theta$ of a neural network, a set $\mathcal{S}\subset \mathbb{R}^d$ of allowable adversarial perturbations, and a loss function $\ell(\cdot, \cdot, \cdot)$ dependent on the network parameters and the data, adversarial training amounts to considering a minmax optimization problem of the form
\[\textstyle \min_{\theta}\mathbb{E}_{(x, y)\sim \mathcal{D}}[\max_{\delta\in \mathcal{S}}\ell(\theta, x+\delta, y)].\] In practice~\cite{madry2017towards}, the inner maximization problem $\max_{\delta\in \mathcal{S}}\ell(\theta, x+\delta, y)$ is solved using \emph{projected} gradient ascent. However, as described in Section~\ref{sec:smoothness}, this is not a smooth algorithm and does not fit our framework. So, we use gradient ascent, \emph{without projection}, for solving the inner maximization.

We run an adversarial training experiment with the MNIST dataset, a convolutional neural network, and the cross entropy loss function. We compare Algorithm~\ref{algo:sgd} with usual adversarial training~\cite{madry2017towards} which descends $\nabla_\theta f(\theta, \mathcal{A}(\theta))$ instead of $\nabla f(\theta, \mathcal{A}(\theta))$, and a baseline of standard training without adversarial training. For each algorithm, we train for 100 passes over the training set using a batch size of 50. The minimization procedure has a fixed learning rate of $\eta_1=0.0001$ and the maximization procedure runs for $T=10$ steps with a fixed learning rate of $\eta_2=4$. We evaluate the test classification accuracy during the course of training against gradient ascent or Adam optimization adversarial attacks. The results are presented in Figure~\ref{fig:attack} where the mean accuracies are reported over 5 runs and the shaded regions show one standard deviation around the means. We observe that the adversarial training procedure gives a significant boost in robustness compared to standard training. Moreover, consistent with the previous experiments, our algorithm which uses total gradient 
outperforms standard adversarial training which uses only partial gradient. We present results against more attacks in Appendix~\ref{app:experiments}. As suggested in Section~\ref{sec:smoothness}, we also find that in practice, the gradient norms $\|\nabla f(\theta, \mathcal{A}(\theta))\|$ do not grow exponentially in the number of gradient ascent steps $T$ in the adversary algorithm $\mathcal{A}$ (see Figure~\ref{fig:grad_norm}).
For further details and additional results see Appendix~\ref{app:experiments}.

\section{Conclusion}
In this paper, we presented a new framework for solving nonconvex-nonconcave minimax optimization problems based on the assumption that the $\min$ player has knowledge of the \emph{smooth} algorithms being used by $\max$ player,
proposed new efficient algorithms under this framework and verified the efficacy of these algorithms in practice on small-scale generative adversarial network and adversarial training problems.
There are several interesting directions for future work such as understanding the efficacy of these algorithms on large scale problems, developing new techniques to deal with nonsmooth algorithms such as projected gradient ascent and extending this framework to more general settings such as nonzero sum games.

\bibliographystyle{abbrvnat}
\bibliography{refs}

\begin{thebibliography}{64}
\providecommand{\natexlab}[1]{#1}
\providecommand{\url}[1]{\texttt{#1}}
\expandafter\ifx\csname urlstyle\endcsname\relax
  \providecommand{\doi}[1]{doi: #1}\else
  \providecommand{\doi}{doi: \begingroup \urlstyle{rm}\Url}\fi

\bibitem[Adolphs et~al.(2019)Adolphs, Daneshmand, Lucchi, and
  Hofmann]{adolphs2019local}
L.~Adolphs, H.~Daneshmand, A.~Lucchi, and T.~Hofmann.
\newblock Local saddle point optimization: A curvature exploitation approach.
\newblock In \emph{International Conference on Artificial Intelligence and
  Statistics}, pages 486--495, 2019.

\bibitem[Atkinson et~al.(2011)Atkinson, Han, and
  Stewart]{atkinson2011numerical}
K.~Atkinson, W.~Han, and D.~E. Stewart.
\newblock \emph{Numerical solution of ordinary differential equations}, volume
  108.
\newblock John Wiley \& Sons, 2011.

\bibitem[Bailey et~al.(2020)Bailey, Gidel, and Piliouras]{bailey2020finite}
J.~P. Bailey, G.~Gidel, and G.~Piliouras.
\newblock Finite regret and cycles with fixed step-size via alternating
  gradient descent-ascent.
\newblock In \emph{Conference on Learning Theory}, pages 391--407. PMLR, 2020.

\bibitem[Balduzzi et~al.(2018)Balduzzi, Racaniere, Martens, Foerster, Tuyls,
  and Graepel]{balduzzi2018mechanics}
D.~Balduzzi, S.~Racaniere, J.~Martens, J.~Foerster, K.~Tuyls, and T.~Graepel.
\newblock The mechanics of n-player differentiable games.
\newblock In \emph{International Conference on Machine Learning}, pages
  354--363, 2018.

\bibitem[Ba{\c{s}}ar and Olsder(1998)]{bacsar1998dynamic}
T.~Ba{\c{s}}ar and G.~J. Olsder.
\newblock \emph{Dynamic noncooperative game theory}.
\newblock SIAM, 1998.

\bibitem[Bertsekas(2009)]{bertsekas2009convex}
D.~P. Bertsekas.
\newblock \emph{Convex optimization theory}.
\newblock Athena Scientific Belmont, 2009.

\bibitem[Biggio et~al.(2013)Biggio, Corona, Maiorca, Nelson, {\v{S}}rndi{\'c},
  Laskov, Giacinto, and Roli]{biggio2013evasion}
B.~Biggio, I.~Corona, D.~Maiorca, B.~Nelson, N.~{\v{S}}rndi{\'c}, P.~Laskov,
  G.~Giacinto, and F.~Roli.
\newblock Evasion attacks against machine learning at test time.
\newblock In \emph{European conference on machine learning and knowledge
  discovery in databases}, pages 387--402, 2013.

\bibitem[Chasnov et~al.(2020)Chasnov, Fiez, and Ratliff]{chasnov2020opponent}
B.~Chasnov, T.~Fiez, and L.~J. Ratliff.
\newblock Opponent anticipation via conjectural variations.
\newblock In \emph{Smooth Games Optimization and Machine Learning Workshop at
  NeurIPS 2020: Bridging Game Theory and Deep Learning}, 2020.

\bibitem[Coddington and Levinson(1955)]{coddington1955theory}
E.~A. Coddington and N.~Levinson.
\newblock \emph{Theory of ordinary differential equations}.
\newblock Tata McGraw-Hill Education, 1955.

\bibitem[Daskalakis and Panageas(2018{\natexlab{a}})]{daskalakis2018limit}
C.~Daskalakis and I.~Panageas.
\newblock The limit points of (optimistic) gradient descent in min-max
  optimization.
\newblock In \emph{Proceedings of the 32nd International Conference on Neural
  Information Processing Systems}, pages 9256--9266, 2018{\natexlab{a}}.

\bibitem[Daskalakis and Panageas(2018{\natexlab{b}})]{daskalakis:2018aa}
C.~Daskalakis and I.~Panageas.
\newblock The limit points of (optimistic) gradient descent in min-max
  optimization.
\newblock In \emph{Advances in Neural Information Processing Systems}, pages
  9236--9246, 2018{\natexlab{b}}.

\bibitem[Daskalakis et~al.(2021)Daskalakis, Skoulakis, and
  Zampetakis]{daskalakis2020complexity}
C.~Daskalakis, S.~Skoulakis, and M.~Zampetakis.
\newblock The complexity of constrained min-max optimization.
\newblock In \emph{ACM Symposium on Theory of Computing}, 2021.

\bibitem[Davis and Drusvyatskiy(2018)]{davis2018stochastic}
D.~Davis and D.~Drusvyatskiy.
\newblock Stochastic subgradient method converges at the rate $ o (k^{-1/4}) $
  on weakly convex functions.
\newblock \emph{arXiv preprint arXiv:1802.02988}, 2018.

\bibitem[Deleu et~al.(2019)Deleu, W{\"u}rfl, Samiei, Cohen, and
  Bengio]{deleu2019torchmeta}
T.~Deleu, T.~W{\"u}rfl, M.~Samiei, J.~P. Cohen, and Y.~Bengio.
\newblock Torchmeta: A meta-learning library for pytorch.
\newblock \emph{arXiv preprint arXiv:1909.06576}, 2019.

\bibitem[Farnia and Ozdaglar(2020)]{farnia2020gans}
F.~Farnia and A.~Ozdaglar.
\newblock Do gans always have nash equilibria?
\newblock In \emph{International Conference on Machine Learning}, pages
  3029--3039, 2020.

\bibitem[Fiez and Ratliff(2021)]{fiez2020gradient}
T.~Fiez and L.~Ratliff.
\newblock Local convergence analysis of gradient descent ascent with finite
  timescale separation.
\newblock In \emph{International Conference on Learning Representations}, 2021.

\bibitem[Fiez et~al.(2020)Fiez, Chasnov, and Ratliff]{fiez2020implicit}
T.~Fiez, B.~Chasnov, and L.~Ratliff.
\newblock Implicit learning dynamics in stackelberg games: Equilibria
  characterization, convergence analysis, and empirical study.
\newblock In \emph{International Conference on Machine Learning}, pages
  3133--3144, 2020.

\bibitem[Foerster et~al.(2018)Foerster, Chen, Al-Shedivat, Whiteson, Abbeel,
  and Mordatch]{foerster2018learning}
J.~Foerster, R.~Y. Chen, M.~Al-Shedivat, S.~Whiteson, P.~Abbeel, and
  I.~Mordatch.
\newblock Learning with opponent-learning awareness.
\newblock In \emph{International Conference on Autonomous Agents and MultiAgent
  Systems}, pages 122--130, 2018.

\bibitem[Freund et~al.(1995)Freund, Kearns, Mansour, Ron, Rubinfeld, and
  Schapire]{freund1995efficient}
Y.~Freund, M.~Kearns, Y.~Mansour, D.~Ron, R.~Rubinfeld, and R.~E. Schapire.
\newblock Efficient algorithms for learning to play repeated games against
  computationally bounded adversaries.
\newblock In \emph{Proceedings of IEEE 36th Annual Foundations of Computer
  Science}, pages 332--341. IEEE, 1995.

\bibitem[Goodfellow et~al.(2014)Goodfellow, Pouget-Abadie, Mirza, Xu,
  Warde-Farley, Ozair, Courville, and Bengio]{goodfellow2014generative}
I.~J. Goodfellow, J.~Pouget-Abadie, M.~Mirza, B.~Xu, D.~Warde-Farley, S.~Ozair,
  A.~Courville, and Y.~Bengio.
\newblock Generative adversarial networks.
\newblock In \emph{Advances in Neural Information Processing Systems}, 2014.

\bibitem[Grefenstette et~al.(2019)Grefenstette, Amos, Yarats, Htut, Molchanov,
  Meier, Kiela, Cho, and Chintala]{grefenstette2019generalized}
E.~Grefenstette, B.~Amos, D.~Yarats, P.~M. Htut, A.~Molchanov, F.~Meier,
  D.~Kiela, K.~Cho, and S.~Chintala.
\newblock Generalized inner loop meta-learning.
\newblock \emph{arXiv preprint arXiv:1910.01727}, 2019.

\bibitem[Halpern et~al.(2014)Halpern, Pass, and Seeman]{halpern2014decision}
J.~Y. Halpern, R.~Pass, and L.~Seeman.
\newblock Decision theory with resource-bounded agents.
\newblock \emph{Topics in cognitive science}, 6\penalty0 (2):\penalty0
  245--257, 2014.

\bibitem[Hartman(2002)]{hartman2002odes}
P.~Hartman.
\newblock \emph{Ordinary Differential Equations}.
\newblock Society for Industrial and Applied Mathematics, second edition, 2002.
\newblock \doi{10.1137/1.9780898719222}.

\bibitem[Harvey et~al.(2019)Harvey, Liaw, and Randhawa]{harvey2019simple}
N.~J. Harvey, C.~Liaw, and S.~Randhawa.
\newblock Simple and optimal high-probability bounds for strongly-convex
  stochastic gradient descent.
\newblock \emph{arXiv preprint arXiv:1909.00843}, 2019.

\bibitem[Hsieh et~al.(2020)Hsieh, Mertikopoulos, and Cevher]{hsieh2020limits}
Y.-P. Hsieh, P.~Mertikopoulos, and V.~Cevher.
\newblock The limits of min-max optimization algorithms: Convergence to
  spurious non-critical sets.
\newblock \emph{arXiv preprint arXiv:2006.09065}, 2020.

\bibitem[Jin et~al.(2020)Jin, Netrapalli, and Jordan]{jin2020local}
C.~Jin, P.~Netrapalli, and M.~Jordan.
\newblock What is local optimality in nonconvex-nonconcave minimax
  optimization?
\newblock In \emph{International Conference on Machine Learning}, pages
  4880--4889, 2020.

\bibitem[Kakade et~al.(2009)Kakade, Shalev-Shwartz, Tewari,
  et~al.]{kakade2009duality}
S.~Kakade, S.~Shalev-Shwartz, A.~Tewari, et~al.
\newblock On the duality of strong convexity and strong smoothness: Learning
  applications and matrix regularization.
\newblock \emph{Unpublished Manuscript, http://ttic. uchicago.
  edu/shai/papers/KakadeShalevTewari09. pdf}, 2\penalty0 (1), 2009.

\bibitem[Keswani et~al.(2020)Keswani, Mangoubi, Sachdeva, and
  Vishnoi]{keswani2020gans}
V.~Keswani, O.~Mangoubi, S.~Sachdeva, and N.~K. Vishnoi.
\newblock Gans with first-order greedy discriminators.
\newblock \emph{arXiv preprint arXiv:2006.12376}, 2020.

\bibitem[Kong and Monteiro(2019)]{kong2019accelerated}
W.~Kong and R.~D. Monteiro.
\newblock An accelerated inexact proximal point method for solving
  nonconvex-concave min-max problems.
\newblock \emph{arXiv preprint arXiv:1905.13433}, 2019.

\bibitem[Korpelevich(1976)]{korpelevich1976extragradient}
G.~M. Korpelevich.
\newblock The extragradient method for finding saddle points and other
  problems.
\newblock \emph{Matecon}, 12:\penalty0 747--756, 1976.

\bibitem[Kurakin et~al.(2017)Kurakin, Goodfellow, and
  Bengio]{kurakin2017adversarial}
A.~Kurakin, I.~J. Goodfellow, and S.~Bengio.
\newblock Adversarial machine learning at scale.
\newblock In \emph{International Conference on Learning Representations}, 2017.

\bibitem[Lee et~al.(2015)Lee, Sidford, and Wong]{lee2015faster}
Y.~T. Lee, A.~Sidford, and S.~C.-w. Wong.
\newblock A faster cutting plane method and its implications for combinatorial
  and convex optimization.
\newblock In \emph{2015 IEEE 56th Annual Symposium on Foundations of Computer
  Science}, pages 1049--1065. IEEE, 2015.

\bibitem[Letcher(2021)]{letcher2020impossibility}
A.~Letcher.
\newblock On the impossibility of global convergence in multi-loss
  optimization.
\newblock In \emph{International Conference on Learning Representations}, 2021.

\bibitem[Letcher et~al.(2019)Letcher, Foerster, Balduzzi, Rockt{\"a}schel, and
  Whiteson]{letcher2018stable}
A.~Letcher, J.~Foerster, D.~Balduzzi, T.~Rockt{\"a}schel, and S.~Whiteson.
\newblock Stable opponent shaping in differentiable games.
\newblock In \emph{International Conference on Learning Representations}, 2019.

\bibitem[Li et~al.(2021)Li, Tian, Zhang, and Jadbabaie]{li2021complexity}
H.~Li, Y.~Tian, J.~Zhang, and A.~Jadbabaie.
\newblock Complexity lower bounds for nonconvex-strongly-concave min-max
  optimization.
\newblock \emph{arXiv preprint arXiv:2104.08708}, 2021.

\bibitem[Lin et~al.(2020{\natexlab{a}})Lin, Jin, and Jordan]{lin2020gradient}
T.~Lin, C.~Jin, and M.~Jordan.
\newblock On gradient descent ascent for nonconvex-concave minimax problems.
\newblock In \emph{International Conference on Machine Learning}, pages
  6083--6093, 2020{\natexlab{a}}.

\bibitem[Lin et~al.(2020{\natexlab{b}})Lin, Jin, Jordan, et~al.]{lin2020near}
T.~Lin, C.~Jin, M.~Jordan, et~al.
\newblock Near-optimal algorithms for minimax optimization.
\newblock In \emph{Conference on Learning Theory}, pages 2738--2779,
  2020{\natexlab{b}}.

\bibitem[Lu et~al.(2020)Lu, Tsaknakis, Hong, and Chen]{lu2020hybrid}
S.~Lu, I.~Tsaknakis, M.~Hong, and Y.~Chen.
\newblock Hybrid block successive approximation for one-sided non-convex
  min-max problems: algorithms and applications.
\newblock \emph{IEEE Transactions on Signal Processing}, 2020.

\bibitem[Luo et~al.(2020)Luo, Ye, Huang, and Zhang]{luo2020stochastic}
L.~Luo, H.~Ye, Z.~Huang, and T.~Zhang.
\newblock Stochastic recursive gradient descent ascent for stochastic
  nonconvex-strongly-concave minimax problems.
\newblock \emph{Advances in Neural Information Processing Systems}, 33, 2020.

\bibitem[Madry et~al.(2018)Madry, Makelov, Schmidt, Tsipras, and
  Vladu]{madry2017towards}
A.~Madry, A.~Makelov, L.~Schmidt, D.~Tsipras, and A.~Vladu.
\newblock Towards deep learning models resistant to adversarial attacks.
\newblock In \emph{International Conference on Learning Representations}, 2018.

\bibitem[Mangoubi and Vishnoi(2021)]{mangoubi2020second}
O.~Mangoubi and N.~K. Vishnoi.
\newblock Greedy adversarial equilibrium: An efficient alternative to
  nonconvex-nonconcave min-max optimization.
\newblock In \emph{ACM Symposium on Theory of Computing}, 2021.

\bibitem[Mazumdar et~al.(2020)Mazumdar, Ratliff, and
  Sastry]{mazumdar2020gradient}
E.~Mazumdar, L.~J. Ratliff, and S.~S. Sastry.
\newblock On gradient-based learning in continuous games.
\newblock \emph{SIAM Journal on Mathematics of Data Science}, 2\penalty0
  (1):\penalty0 103--131, 2020.

\bibitem[Mazumdar et~al.(2019)Mazumdar, Jordan, and
  Sastry]{mazumdar2019finding}
E.~V. Mazumdar, M.~I. Jordan, and S.~S. Sastry.
\newblock On finding local nash equilibria (and only local nash equilibria) in
  zero-sum games.
\newblock \emph{arXiv preprint arXiv:1901.00838}, 2019.

\bibitem[Mescheder et~al.(2017)Mescheder, Nowozin, and
  Geiger]{mescheder2017numerics}
L.~Mescheder, S.~Nowozin, and A.~Geiger.
\newblock The numerics of gans.
\newblock In \emph{Advances in Neural Information Processing Systems}, pages
  1823--1833, 2017.

\bibitem[Mescheder et~al.(2018)Mescheder, Geiger, and
  Nowozin]{mescheder2018training}
L.~Mescheder, A.~Geiger, and S.~Nowozin.
\newblock Which training methods for gans do actually converge?
\newblock In \emph{International Conference on Machine Learning}, pages
  3481--3490, 2018.

\bibitem[Metz et~al.(2017)Metz, Poole, Pfau, and
  Sohl-Dickstein]{metz2016unrolled}
L.~Metz, B.~Poole, D.~Pfau, and J.~Sohl-Dickstein.
\newblock Unrolled generative adversarial networks.
\newblock In \emph{International Conference on Learning Representations}, 2017.

\bibitem[Nagarajan and Kolter(2017)]{nagarajan2017gradient}
V.~Nagarajan and J.~Z. Kolter.
\newblock Gradient descent gan optimization is locally stable.
\newblock In \emph{Advances in Neural Information Processing Systems}, pages
  5591--5600, 2017.

\bibitem[Nemirovski(2004)]{nemirovski2004prox}
A.~Nemirovski.
\newblock Prox-method with rate of convergence o (1/t) for variational
  inequalities with lipschitz continuous monotone operators and smooth
  convex-concave saddle point problems.
\newblock \emph{SIAM Journal on Optimization}, 15\penalty0 (1):\penalty0
  229--251, 2004.

\bibitem[Neumann(1928)]{neumann1928theorie}
J.~v. Neumann.
\newblock Zur theorie der gesellschaftsspiele.
\newblock \emph{Mathematische annalen}, 100\penalty0 (1):\penalty0 295--320,
  1928.

\bibitem[Nouiehed et~al.(2019)Nouiehed, Sanjabi, Huang, Lee, and
  Razaviyayn]{nouiehed2019solving}
M.~Nouiehed, M.~Sanjabi, T.~Huang, J.~D. Lee, and M.~Razaviyayn.
\newblock Solving a class of non-convex min-max games using iterative first
  order methods.
\newblock In \emph{Advances in Neural Information Processing Systems}, pages
  14934--14942, 2019.

\bibitem[Ostrovskii et~al.(2020)Ostrovskii, Lowy, and
  Razaviyayn]{ostrovskii2020efficient}
D.~M. Ostrovskii, A.~Lowy, and M.~Razaviyayn.
\newblock Efficient search of first-order nash equilibria in nonconvex-concave
  smooth min-max problems.
\newblock \emph{arXiv preprint arXiv:2002.07919}, 2020.

\bibitem[Rafique et~al.(2021)Rafique, Liu, Lin, and Yang]{rafique2018non}
H.~Rafique, M.~Liu, Q.~Lin, and T.~Yang.
\newblock Weakly-convex--concave min--max optimization: provable algorithms and
  applications in machine learning.
\newblock \emph{Optimization Methods and Software}, pages 1--35, 2021.

\bibitem[{Ratliff} et~al.(2013){Ratliff}, {Burden}, and
  {Sastry}]{ratliff2013allerton}
L.~J. {Ratliff}, S.~A. {Burden}, and S.~S. {Sastry}.
\newblock {Characterization and computation of local Nash equilibria in
  continuous games}.
\newblock In \emph{Allerton Conference on Communication, Control, and
  Computing}, pages 917--924, 2013.

\bibitem[Ratliff et~al.(2016)Ratliff, Burden, and Sastry]{ratliff:2016aa}
L.~J. Ratliff, S.~A. Burden, and S.~S. Sastry.
\newblock {On the characterization of local Nash equilibria in continuous
  games}.
\newblock \emph{IEEE Transactions on Automatic Control}, 61\penalty0
  (8):\penalty0 2301--2307, 2016.

\bibitem[Robinson(1951)]{robinson1951iterative}
J.~Robinson.
\newblock An iterative method of solving a game.
\newblock \emph{Annals of mathematics}, pages 296--301, 1951.

\bibitem[Sandhlom and Lesser(1997)]{sandhlom1997coalitions}
T.~W. Sandhlom and V.~R. Lesser.
\newblock Coalitions among computationally bounded agents.
\newblock \emph{Artificial intelligence}, 94\penalty0 (1-2):\penalty0 99--137,
  1997.

\bibitem[Szegedy et~al.(2014)Szegedy, Zaremba, Sutskever, Bruna, Erhan,
  Goodfellow, and Fergus]{szegedy2013intriguing}
C.~Szegedy, W.~Zaremba, I.~Sutskever, J.~Bruna, D.~Erhan, I.~Goodfellow, and
  R.~Fergus.
\newblock Intriguing properties of neural networks.
\newblock In \emph{International Conference on Learning Representations}, 2014.

\bibitem[Thekumparampil et~al.(2019)Thekumparampil, Jain, Netrapalli, and
  Oh]{thekumparampil2019efficient}
K.~K. Thekumparampil, P.~Jain, P.~Netrapalli, and S.~Oh.
\newblock Efficient algorithms for smooth minimax optimization.
\newblock \emph{Advances in Neural Information Processing Systems},
  32:\penalty0 12680--12691, 2019.

\bibitem[Wang et~al.(2020)Wang, Zhang, and Ba]{wang2019solving}
Y.~Wang, G.~Zhang, and J.~Ba.
\newblock On solving minimax optimization locally: A follow-the-ridge approach.
\newblock In \emph{International Conference on Learning Representations}, 2020.

\bibitem[Zhang and Lesser(2010)]{zhang2010multi}
C.~Zhang and V.~Lesser.
\newblock Multi-agent learning with policy prediction.
\newblock In \emph{AAAI Conference on Artificial Intelligence}, 2010.

\bibitem[Zhang et~al.(2020{\natexlab{a}})Zhang, Poupart, and
  Yu]{zhang2020optimality}
G.~Zhang, P.~Poupart, and Y.~Yu.
\newblock Optimality and stability in non-convex-non-concave min-max
  optimization.
\newblock \emph{arXiv preprint arXiv:2002.11875}, 2020{\natexlab{a}}.

\bibitem[Zhang et~al.(2020{\natexlab{b}})Zhang, Wu, Poupart, and
  Yu]{zhang2020newton}
G.~Zhang, K.~Wu, P.~Poupart, and Y.~Yu.
\newblock Newton-type methods for minimax optimization.
\newblock \emph{arXiv preprint arXiv:2006.14592}, 2020{\natexlab{b}}.

\bibitem[Zhang et~al.(2021)Zhang, Yang, Guzm{\'a}n, Kiyavash, and
  He]{zhang2021complexity}
S.~Zhang, J.~Yang, C.~Guzm{\'a}n, N.~Kiyavash, and N.~He.
\newblock The complexity of nonconvex-strongly-concave minimax optimization.
\newblock \emph{arXiv preprint arXiv:2103.15888}, 2021.

\bibitem[Zhao(2020)]{zhao2020primal}
R.~Zhao.
\newblock A primal dual smoothing framework for max-structured nonconvex
  optimization.
\newblock \emph{arXiv preprint arXiv:2003.04375}, 2020.

\end{thebibliography}

\clearpage
\newpage


\appendix

\section{Detailed Related Work}
\label{app:related_work}

\textbf{Nonconvex-Nonconcave Zero-Sum Games.} The existing work on nonconvex-nonconcave zero-sum games has generally focused on (1) defining and characterizing local equilibrium solution concepts and (2) analyzing the local stability and convergence behavior of gradient-based learning algorithms around fixed points of the dynamics. 
The concentration on local analysis stems from the inherent challenges that arise in nonconvex-nonconcave zero-sum games from both a dynamical systems perspective and a computational perspective. In particular, it is know that broad classes of gradient-based learning dynamics can admit limit cycles and other non-trivial periodic orbits that are antithetical to any type of global convergence guarantee in this class of games~\cite{hsieh2020limits, letcher2020impossibility}. Moreover, on constrained domains, it has been shown that finding even a local equilibrium is computationally intractable~\cite{daskalakis2020complexity}.

A number of local equilibrium notions for nonconvex-nonconcave zero-sum games now exist with characterizations in terms of gradient-based conditions relevant to gradient-based learning. This includes the local Nash~\cite{ratliff2013allerton, ratliff:2016aa} and local minmax (Stackelberg)~\cite{jin2020local, fiez2020implicit} equilibrium concepts, which both amount to local refinements and characterizations of historically standard game-theoretic equilibrium notions. In terms of provable guarantees, algorithms incorporating higher-order gradient information have been proposed and analyzed that guarantee local convergence to only local Nash equilibria~\cite{adolphs2019local, mazumdar2019finding}  or local convergence to only local minmax equilibria~\cite{wang2019solving, zhang2020newton, fiez2020implicit} in nonconvex-nonconcave zero-sum games. Beyond the local Nash and minmax equilibrium, notions including the proximal equilibrium concept~\cite{farnia2020gans}, which is a class between the set of local Nash and local minmax equilibria, and the local robust equilibrium concept~\cite{zhang2020optimality}, which includes both local minmax and local maxmin equilibria, have been proposed and studied. It is worth noting that a shortcoming of each of the local equilibrium notions is that may fail to exist on unconstrained domains.

Significant attention has been given to the local stability and convergence of simultaneous gradient descent-ascent in nonconvex-nonconcave zero-sum games. This stems from the fact that it is the natural analogue of learning dynamics for zero-sum game optimization to gradient descent for function optimization. Moreover, simultaneous gradient descent-ascent is know to often perform reasonably well empirically and is ubiquitous in a number of applications such as in training generative adversarial networks and adversarial learning. 
However, it has been shown that while local Nash are guaranteed to be stable equilibria of simultaneous gradient descent-ascent~\cite{mazumdar2020gradient,daskalakis:2018aa,jin2020local}, local minmax may not be unless there is sufficient timescale separation between the minimizing and maximizing players~\cite{jin2020local, fiez2020gradient}. Specific to generative adversarial networks, it has been shown that simultaneous gradient descent-ascent locally converges to local equilibria under certain assumptions on the generator network and the data distribution~\cite{nagarajan2017gradient, mescheder2018training}.  Later in this section we discuss in further detail learning dynamics studied previously in games which bear resemblance to that which we consider in this paper depending on the model of the maximizing player. 

The challenges of nonconvex-nonconcave zero-sum games we have highlighted limit the types of provable guarantees that can be obtained and consequently motivate tractable relaxations including to nonconvex-concave zero-sum games and the general framework we formulate in this work. Before moving on, we mention that from a related perspective, a line of recent work~\cite{keswani2020gans,mangoubi2020second} in nonconvex-nonconcave zero-sum games proposes relaxed equilibrium notions that are shown to be computable in polynomial time and are guaranteed to exist. At a high level, the equilibria correspond to a joint strategy at which the maximizing player is at an approximate local maximum of the cost function and the minimizing player is at an approximate local minimum of a smoothed and relaxed best-response function of the maximizing player. The aforementioned works are similar to this paper in the sense that the minimizing player faces a maximizing player with computational restrictions, but diverge in terms of the model of the maximizing player and the algorithms for solving the problem.

\textbf{Nonconvex-Concave Zero-Sum Games.}
The past few years has witnessed a significant amount of work on gradient-based dynamics in nonconvex-concave zero-sum games. The focus of existing work on nonconvex-concave zero-sum games has key distinctions from that in nonconvex-nonconcave zero-sum games. Generally, the work on nonconvex-concave zero-sum games has analyzed dynamics on constrained domains, where typically the strategy space of the maximizing player is constrained to a closed convex set and occasionally the minimizing player also faces a constraint. In contrast, nonconvex-nonconcave zero-sum games have generally been analyzed on unconstrained domains. Moreover, instead of focusing on computing notions of game-theoretic equilibrium as is typical in nonconvex-nonconcave zero-sum games, the body of work on nonconvex-concave zero-sum games has focused on achieving stationarity of the game cost function $f(\cdot, \cdot)$ or the best-response function $\Phi(\cdot)=\max_{y}f(\cdot, y)$.

The structure present in nonconvex-concave zero-sum games has been shown to simplify the problem compared to nonconvex-nonconcave zero-sum games so that global finite-time convergence guarantees are achievable. Thus, work in this direction has focused on improving the rates of convergence in terms of the gradient complexity to find $\epsilon$--approximate stationary points of $f(\cdot, \cdot)$ or $\Phi(\cdot)$, both with deterministic and stochastic gradients. Guarantees on the former notion of stationarity can be translated to guarantees on the latter notion of stationarity with extra computational cost~\cite{lin2020gradient}.

For the the class of nonconvex-strongly-concave zero-sum games, a series of works design algorithms that are shown to obtain $\epsilon$--approximate stationary points of the functions $f(\cdot, \cdot)$ or $\Phi(\cdot)$ with a gradient complexity of $\widetilde{O}(\epsilon^{-2})$ in terms of $\epsilon$ in the deterministic setting~\cite{jin2020local, rafique2018non, lu2020hybrid, lin2020gradient, lin2020near}. In the deterministic nonconvex-strongly concave problem, the notions of stationarity are equivalent in terms of the dependence on $\epsilon$ up to a logarithmic dependence~\cite{lin2020gradient}. Lower bounds for this problem have also been established~\cite{zhang2021complexity, li2021complexity}. In the stochastic nonconvex-strongly-concave problem, existing work has developed algorithms that are shown to obtain $\epsilon$--approximate stationary points of the function $\Phi(\cdot)$ in gradient complexities of $\widetilde{O}(\epsilon^{-4})$~\cite{rafique2018non, jin2020local, lin2020gradient} and $\widetilde{O}(\epsilon^{-3})$~\cite{luo2020stochastic} in terms of $\epsilon$ dependence.

In the deterministic nonconvex-concave problem, a number of algorithms with provable guarantees to $\epsilon$--approximate stationary points of the function $f(\cdot, \cdot)$ have been shown with gradient complexities of $\widetilde{O}(\epsilon^{-4})$~\cite{lu2020hybrid}, $\widetilde{O}(\epsilon^{-3.5})$~\cite{nouiehed2019solving}, and $\widetilde{O}(\epsilon^{-2.5})$~\cite{ostrovskii2020efficient, lin2020near}. Similarly, in this class of problems, there exist results on algorithms that guarantee convergence to an $\epsilon$--approximate stationary points of the function $\Phi(\cdot)$ with gradient complexities 
$\widetilde{O}(\epsilon^{-6})$~\cite{rafique2018non, jin2020local, lin2020gradient}
and $\widetilde{O}(\epsilon^{-3})$ \cite{thekumparampil2019efficient, zhao2020primal, kong2019accelerated, lin2020near}. Finally, existing results in the stochastic setting for achieving an $\epsilon$--approximate stationary point of $\Phi(\cdot)$ show gradient complexities of $\widetilde{O}(\epsilon^{-6})$~\cite{rafique2018non}
and $\widetilde{O}(\epsilon^{-8})$~\cite{lin2020gradient}.

In this work, we build on the developments for nonconvex-concave problems to obtain our results.

\textbf{Gradient-Based Learning with Opponent Modeling.} A number of gradient-based learning schemes have been derived is various classes of games based on the following idea: if a player knows how the opponents in a game are optimizing their cost functions, then it is natural to account for this behavior in the players own optimization procedure. The simultaneous gradient descent learning dynamics can be viewed as the simplest instantiation of this perspective, where each player is optimizing their own cost function assuming that all other players in the game will remain fixed. In general, the more sophisticated existing learning dynamics based on opponent modeling assume the opponents are doing gradient descent on their cost function and this prediction is incorporated into the objective being optimized in place of the current strategies of opponents. A key conceptual distinction between this approach and our work is that in existing opponent modeling methods the dynamics of the players are always updated simultaneously whereas the procedure we consider is sequential in nature with the opponent initializing again at each interaction. Moreover, the types of guarantees we prove are distinct compared to existing work in this realm.

In this modern literature, gradient-based learning with opponent modeling dates back to the work of~\citet{zhang2010multi}. They study simple two-player, two-action, general-sum matrix games, and analyze a set of learning dynamics called iterated descent descent with policy prediction (IGA-PP) and show asymptotic convergence to a Nash equilibrium. In this set of learning dynamics, each player assumes the other player is doing gradient descent and this prediction is used in the objective. In particular, each player $i$ has a choice variable $x^i$ and a cost function $f_i(x^i, x^{-i})$ that after incorporating the prediction becomes $f_i(x_t^i, x_t^{-i} - \gamma \nabla_{-i} f_{-i}(x_t^{i}, x_t^{-i}))$. To optimize the objective, each player takes a first-order Taylor expansion of their cost function to give the augmented objective \[f_i(x_t^i, x_t^{-i} - \gamma \nabla_{-i} f_{-i}(x_t^{i}, x_t^{-i}))\approx f_i(x_t^i, x_t^{-i})- \gamma \nabla_{-i}f_i(x_i^t, x_{-i}^t)^{\top} \nabla_{-i} f_{-i}(x_t^{i}, x_t^{-i}).\]  Each player in the game simultaneously follows the gradient of their augmented objective which is given by \[\nabla_if_i(x_t^i, x_t^{-i})- \gamma \nabla_{-i, i}f_i(x_i^t, x_{-i}^t)^{\top} \nabla_{-i} f_{-i}(x_t^{i}, x_t^{-i}).\] This gradient computation is derived based on the fact that the assumed update of the other player $\nabla_{-i} f_{-i}(x_t^{i}, x_t^{-i})$ does not depend on the optimization variable.

Similar ideas have recently been revisited in more general nonconvex multiplayer games~\cite{foerster2018learning, letcher2018stable}. In learning with opponent learning awareness (LOLA)~\cite{foerster2018learning}, players again assume the other players are doing gradient descent and take their objective to be $f_i(x_t^i, x_t^{-i} - \gamma \nabla_{-i} f_{-i}(x_t^{i}, x_t^{-i}))$. To derive the learning rule, an augmented objective is again formed by computing a first-order Taylor expansion, but now the term $\nabla_{-i} f_{-i}(x_t^{i}, x_t^{-i})$ in the augmented objective is treated as dependent on the optimization variable so that the gradient of the augmented objective is given by \[\nabla_if_i(x_t^i, x_t^{-i})- \gamma \nabla_{-i, i}f_i(x_i^t, x_{-i}^t)^{\top} \nabla_{-i} f_{-i}(x_t^{i}, x_t^{-i})- \gamma  \nabla_{-i, i} f_{-i}(x_t^{i}, x_t^{-i})^{\top}\nabla_{-i}f_i(x_i^t, x_{-i}^t).\] 
Finally, to arrive at the final gradient update for each player, the middle term in the equation above is removed and each player takes steps along the gradient update \[\nabla_if_i(x_t^i, x_t^{-i})- \gamma  \nabla_{-i, i} f_{-i}(x_t^{i}, x_t^{-i})^{\top}\nabla_{-i}f_i(x_i^t, x_{-i}^t).\]

While no convergence results are given for LOLA, a follow-up work shows local convergence guarantees to stable fixed points for IGA-PP and learning dynamics called stable opponent shaping (SOS) that interpolate between IGA-PP and LOLA~\cite{letcher2018stable}. A related work derives learning dynamic based on the idea that the opponent selects a best-response to the chosen strategy~\cite{fiez2020implicit}. The resulting learning dynamics can be viewed as LOLA with the opponent selecting a Newton learning rate. For nonconvex-nonconcave zero-sum games, local convergence guarantees to only local Stackelberg equilibrium are given in for this set of learning dynamics~\cite{fiez2020implicit}. It is worth remarking that gradient-based learning with opponent modeling is historically rooted in the general framework of consistent conjectural variations (see, e.g., \cite[Chapter 4.6]{bacsar1998dynamic}), a concept that is now being explored again and is closely related to the previously mentioned learning dynamics~\cite{chasnov2020opponent}. 

Perhaps the closest work on gradient-based learning with opponent modeling to this paper is that of unrolled generative adversarial networks~\cite{metz2016unrolled}. In unrolled generative adversarial networks, the generator simulates the discriminator doing a fixed number of gradient steps from the current parameter configurations of the generator and discriminator. The resulting discriminator parameters are then used in place of the current discriminator parameters in the generator objective. The generator then updates following the gradient of this objective, optimizing through the rolled out discriminator update by computing the total derivative. Simultaneously with the generator update, the discriminator updates its parameters by performing a gradient step on its objective. In our framework, for generative adversarial networks when the discriminator is modeled as performing $T$-steps of gradient ascent, the procedure we propose is similar but an important difference is that when the generator simulates the discriminator unrolling procedure the discriminator parameters are initialized from scratch and there is no explicit discriminator being trained simultaneously with the generator. 

\textbf{Games with computationally bounded adversaries}: There are also a few works in the game theory literature which consider resource/computationally bounded agents. For example~\cite{freund1995efficient} considers \emph{repeated} games between resource bounded agents,~\cite{sandhlom1997coalitions} considers coalition formation between resource bounded agents in cooperative games and~\cite{halpern2014decision} shows that resource constraints in otherwise rational players might lead to some commonly observed human behaviors while making decisions. However, the settings, models of limited computation and the focus of results considered in all of these prior works are distinct from those of this paper. 

\textbf{Stability of algorithms in numerical analysis}:
To our knowledge such results on the smoothness of the classes of algorithms we
study---i.e., gradient-based updates such as SGA and SNAG---with respect to
problem parameters (e.g., in this case, $x$) have not been shown in the
machine learning and optimization
literature. 
This being said, in the study of dynamical systems---more specifically
differential equations---the concept of continuity (and Lipschitzness) with
respect to parameters and initial data has been studied using a
\emph{variational} approach wherein the continuity of the solution of the
differential equation is shown to be continuous with respect to variations in
the parameters or initial data by appealing to nonlinear variation of parameters
results such as the Bellman-Grownwall inequality or Alekseev's theorem (see
classical references on differential equations such as 
\cite[Chapter 2]{coddington1955theory} or \cite[Chapter IV.2]{hartman2002odes}). 

In numerical methods, such results on the ``smoothness'' or continuity of the
differential equation with respect to initial data or problem parameters are used to understand stability of
particular numerical methods (see, e.g., \cite[Chapter 1.2]{atkinson2011numerical}). In particular, a initial value problem is only
considered well-posed if there is continuous dependence on initial data. For
instance, the simple scalar differential equation 
\[\dot{y}(t)=-y(t)+1, \ 0\leq t\leq T, \ y(0)=1\]
has solution $y(t)\equiv 1$, yet the perturbed problem, 
\[\dot{y}_\epsilon(t)=-y_\epsilon(t)+1, \ 0\leq t\leq T, \
y_\epsilon(0)=1+\epsilon,\]
has solution $y_\epsilon(t)=1+\epsilon e^{-t}$ so that
\[|y(t)-y_\epsilon(t)|\leq |\epsilon|, \ 0\leq t\leq T.\]
If the maximum error $\|y_\epsilon-y\|_\infty$ is (much) larger than $\epsilon$
then the initial value problem is \emph{ill-conditioned} and any typical attempt to
numerically solve such a problem will  lead to large errors in the computed
solution. In short, the stability properties of a numerical method (i.e.,
discretization of the differential equation) are fundamentally connected to the
continuity (smoothness) with respect to intial data.

Observe that methods such as gradient ascent can be viewed as a discretization
of an differential equation:
\[\dot{y}(t)=\nabla_yf(x,y(t)) \ \longrightarrow y_{k+1}=y_k+\eta
\nabla_yf(x,y_k).\]
As such, the techniques for showing continuity of the solution of a differential
equation with respect to initial data or other problem parameters (e.g., in this
case $x$) can be adopted to show smoothness of the $T$-step solution of the
discretized update.
Our approach to
showing smoothness, on the other hand, leverages the recursive nature of the
discrete time updates defining the classes of algorithms we study. This approach
simplifies the analysis by directly going after the smoothness parameters using
the udpate versus solving the difference (or differential) equation for $y_T(x)$
and then finding the smoothness parameters which is the method typically used in
numerical analysis of differential equations. 
An interesting direction of future research is to more formally connect the stability
analysis from numerical analysis of differential equations to 
robustness of adversarial learning to initial data and even variations in problem parameters.


\section{Proof of results in Section~\ref{sec:prelims}}
\begin{proof}[Proof of Lemma~\ref{lem:smooth-func}]
	For any fixed $z$, we note that $\cA(\cdot,z)$ is a deterministic algorithm. Consequently, it suffices to prove the lemma for a deterministic algorithm $\cA(\cdot)$.
	By chain rule, the derivative of $f(x,\cA(x))$ is given by:
    \begin{equation}
        \nabla f(x,\cA(x)) = \nabla_x f(x,\cA(x)) + D\cA(x) \cdot \nabla_y f(x,\cA(x)),
        \label{eq:totalderiv}
    \end{equation}
    where $D\cA(x) \in \R^{d_1 \times d_2}$ is the derivative of $\cA(\cdot): \R^{d_1} \rightarrow \R^{d_2}$ at $x$ and $\nabla_x f(x,\cA(x))$ and $\nabla_y f(x,\cA(x))$ denote the partial derivatives of $f$ with respect to the first and second variables respectively at $(x,\cA(x))$.
    An easy computation shows that
    \begin{align*}
        \norm{\nabla f(x,\cA(x))} &\leq \norm{\nabla_x f(x,\cA(x))} + \norm{D\cA(x)}  \cdot \norm{\nabla_y f(x,\cA(x))} \\
        &\leq G + G' \cdot G = (1+G')G.
    \end{align*}
    This shows that $f(x,\cA(x))$ is $(1+G')G$-Lipschitz. Similarly, we have:
    \begin{align*}
        &\norm{\nabla f(x_1,\cA(x_1)) - \nabla f(x_2,\cA(x_2))} \\ & \leq \norm{\nabla_x f(x_1,\cA(x_1)) - \nabla_x f(x_2,\cA(x_2))} + \norm{D\cA(x_1) \nabla_y f(x_1,\cA(x_1)) - D\cA(x_2) \nabla_y f(x_2,\cA(x_2))}.
    \end{align*}
    For the first term, we have:
    \begin{align*}
        &\norm{\nabla_x f(x_1,\cA(x_1)) - \nabla_x f(x_2,\cA(x_2))} \\
        &\leq \norm{\nabla_x f(x_1,\cA(x_1)) - \nabla_x f(x_2,\cA(x_1))} + \norm{\nabla_x f(x_2,\cA(x_1)) - \nabla_x f(x_2,\cA(x_2))} \\
        &\leq L \left(\norm{x_1 - x_2} + \norm{\cA(x_1)-\cA(x_2)}\right) \leq L \left(1+G'\right) \norm{x_1-x_2}.
    \end{align*}
    Similarly, for the second term we have:
    \begin{align*}
        &\norm{D\cA(x_1) \nabla_y f(x_1,\cA(x_1)) - D\cA(x_2) \nabla_y f(x_2,\cA(x_2))} \\
        &\leq \norm{D\cA(x_1)} \norm{\nabla_y f(x_1,\cA(x_1)) - \nabla_y f(x_2,\cA(x_2))} + \norm{\nabla_y f(x_2,\cA(x_2))} \norm{D\cA(x_2) - D\cA(x_1)} \\
        &\leq \left(LG'(1+G') + GL'\right) \norm{x_1-x_2}.
    \end{align*}
    This proves the lemma.
\end{proof}

\begin{proof}[Proof of Lemma~\ref{lem:argmax-weakconvex}]
	Given any $x$ and $y$, and any $\lambda$ such that $\lambda_j \geq 0$ and $\sum_{j\in S(x)}\lambda_j=1$, we have:
	\begin{align*}
		g(y) &= \max_{j \in [k]} g_j(y) \geq \sum_{j \in S(x)} \lambda_j g_j(y) \geq \sum_{j \in S(x)} \lambda_j \left(g_j(x) + \iprod{\nabla g_j(x)}{y-x} - \frac{1}{2L} \norm{x-y}^2\right) \\
		&= g(x) + \iprod{\sum_{j \in S(x)} \lambda_j \nabla g_j(x)}{y-x} - \frac{1}{2L} \norm{x-y}^2 ,
	\end{align*}
proving the lemma.
\end{proof}

\begin{proof}[Proof of Lemma~\ref{lem:moreau-properties}]
We re-write $f_{\lambda}(x)$ as minimum value of a $(\frac1{\lambda} -L)$-strong convex function $\phi_{\lambda, x}$, as $g$ is $L$-weakly convex (Definition~\ref{def:weak-convex}) and $\frac1{2\lambda} \|x-x'\|^2$ is differentiable and $\frac1{\lambda}$-strongly convex,
\begin{align} \label{eq:moreau-reformula}
    g_{\lambda}(x) = \min_{x' \in \R^{d_1}} \bigg[ \phi_{\lambda, x}(x') = g(x') + \frac{1}{2\lambda} \|x - x'\|^2 \bigg]\,.
\end{align}
Then first part of (a) follows trivially by the strong convexity. For the second part notice the following,
\begin{align}
    \min_x g_{\lambda}(x) &= \min_x \min_{x'} g(x') + \frac1{2\lambda} \|x-x' \|^2 \nonumber \\
    &= \min_{x'} \min_x  g(x') + \frac1{2\lambda} \|x-x' \|^2 \nonumber \\
    &= \min_{x'} g(x') \nonumber
\end{align}
Thus $\arg\min_x g_{\lambda}(x) = \arg\min_x g(x)$. For $(b)$ we can re-write the Moreau envelope $g_\lambda$ as,
\begin{align}
    g_\lambda(x) &= \min_{x'} g(x') + \frac1{2\lambda} \| x - x' \|^2 \nonumber \\
    &= \frac{\|x\|^2}{2\lambda} - \frac1{\lambda} \max_{x'} (x^Tx' -  \lambda g(x') - \frac{\|x'\|^2}2) \nonumber \\
    &= \frac{\|x\|^2}{2\lambda} - \frac1{\lambda} \bigg(\lambda g(\cdot) + \frac{\|\cdot\|^2}2\bigg)^*(x) \label{eq:moreau-conjugate}
\end{align}
where $(\cdot)^*$ is the Fenchel conjugation operator. Since $L < 1/\lambda $, using $L$-weak convexity of $g$, it is easy to see that $\lambda g(x') + \frac{\|x'\|^2}2$ is $(1 - \lambda L)$-strongly convex, therefore its Fenchel conjugate would be $\frac1{(1 - \lambda L)}$-smooth \cite[Theorem 6]{kakade2009duality}. This, along with $\frac1\lambda$-smoothness of first quadratic term \textit{}implies that $g_\lambda(x)$ is $\big(\frac1\lambda + \frac1{\lambda(1 - \lambda L)}\big)$-smooth, and thus differentiable.

For $(c)$ we again use the reformulation of $g_{\lambda}(x)$ as $\min_{x' \in \R^{d_1}} \phi_{\lambda, x}(x')$ \eqref{eq:moreau-reformula}. Then by first-order necessary condition for optimality of $\hat{x}_{\lambda}(x)$, we have that $x - \hat{x}_{\lambda}(x) \in \lambda \partial g(x)$. Further, from proof of part (a) we have that $\phi_{\lambda, x}(x')$ $(1 - \lambda L)$-strongly-convex in $x'$ and it is quadratic (and thus convex) in $x$. Then we can use Danskin's theorem \cite[Section 6.11]{bertsekas2009convex} to prove that, $\nabla g_\lambda (x) = (x - \hat{x}_\lambda(x))/\lambda \in \partial g(x)$.
\end{proof}

\section{Proofs of Results in Section~\ref{sec:conv-sgd}}
\label{app:sgd}
In order to prove convergence of this algorithm, we first recall the following result from~\cite{davis2018stochastic}.
\begin{theorem}[Corollary 2.2 from~\cite{davis2018stochastic}]\label{thm:sgd-dima}
	Suppose $g(\cdot)$ is $L$-weakly convex, and $\E_{z_1,\cdots,z_k}\left[\norm{\widehat{\nabla} g(x)}^2\right] \leq G^2$. Then, the output $\bar{x}$ of Algorithm~\ref{algo:sgd} with stepsize $\eta = \frac{\gamma}{\sqrt{S+1}}$ satisfies:
	\begin{align*}
		\E\left[\norm{\nabla g_{\frac{1}{2L}}(\bar{x})}^2\right] \leq 2 \cdot \frac{\left(g_{\frac{1}{2L}}(x_0)-\min_x g(x)\right) + L G^2 \gamma^2}{\gamma \sqrt{S+1}}.
	\end{align*}
\end{theorem}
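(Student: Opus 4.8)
The plan is to follow the Moreau-envelope potential argument of Davis--Drusvyatskiy, tracking $g_{1/2L}(x_s)$ along the iterates and showing that it decreases in expectation at each step by an amount proportional to $\norm{\nabla g_{1/2L}(x_s)}^2$, up to an $O(\eta^2)$ stochastic error. Throughout write $\lambda \defeq 1/(2L)$ and let $\hat{x}_s \defeq \hat{x}_{\lambda}(x_s)$ be the proximal point of Lemma~\ref{lem:moreau-properties}, so that $\nabla g_\lambda(x_s) = \lambda^{-1}(x_s - \hat{x}_s)$ and $g_\lambda(x_s) = g(\hat{x}_s) + \tfrac{1}{2\lambda}\norm{x_s - \hat{x}_s}^2$. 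Since $\lambda = 1/(2L) < 1/L$, part (b) of Lemma~\ref{lem:moreau-properties} guarantees $g_\lambda$ is well-defined and differentiable.

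First I would establish a one-step descent inequality. Because $g_\lambda(x_{s+1})$ is a minimum over all points, plugging in the suboptimal point $\hat{x}_s$ gives $g_\lambda(x_{s+1}) \le g(\hat{x}_s) + \tfrac{1}{2\lambda}\norm{x_{s+1}-\hat{x}_s}^2$. Substituting the update $x_{s+1} = x_s - \eta\,\widehat{\nabla}g(x_s)$, expanding the square, and taking conditional expectation given $x_s$ yields, with $\xi_s \defeq \E[\widehat{\nabla}g(x_s)\mid x_s] \in \partial g(x_s)$ and the second-moment bound $\E[\norm{\widehat{\nabla}g(x_s)}^2\mid x_s]\le G^2$,
\begin{align*}
\E[g_\lambda(x_{s+1})\mid x_s] \le g(\hat{x}_s) + \tfrac{1}{2\lambda}\norm{x_s-\hat{x}_s}^2 - \tfrac{\eta}{\lambda}\iprod{\xi_s}{x_s-\hat{x}_s} + \tfrac{\eta^2 G^2}{2\lambda}.
\end{align*}

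The crux is to lower-bound the cross term $\iprod{\xi_s}{x_s - \hat{x}_s}$, and here I would use both faces of the convexity structure. The $L$-weak convexity of $g$ (Definition~\ref{def:weak-convex}) applied with subgradient $\xi_s$ at $x_s$ gives $\iprod{\xi_s}{x_s - \hat{x}_s} \ge g(x_s) - g(\hat{x}_s) - \tfrac{L}{2}\norm{x_s-\hat{x}_s}^2$, while the $(\lambda^{-1}-L)$-strong convexity of the proximal objective $\phi_{\lambda,x_s}(\cdot) = g(\cdot) + \tfrac{1}{2\lambda}\norm{x_s - \cdot}^2$ (whose minimizer is $\hat{x}_s$) gives the sharp value gap $g(x_s) - g(\hat{x}_s) \ge (\lambda^{-1} - \tfrac{L}{2})\norm{x_s - \hat{x}_s}^2$. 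Combining the two, recalling $g(\hat{x}_s)+\tfrac{1}{2\lambda}\norm{x_s-\hat{x}_s}^2 = g_\lambda(x_s)$, and using $\norm{x_s - \hat{x}_s}^2 = \lambda^2 \norm{\nabla g_\lambda(x_s)}^2$ with $\lambda = 1/(2L)$ collapses the first three terms to $g_\lambda(x_s) - \tfrac{\eta}{2}\norm{\nabla g_\lambda(x_s)}^2$, so that $\E[g_\lambda(x_{s+1})\mid x_s] \le g_\lambda(x_s) - \tfrac{\eta}{2}\norm{\nabla g_\lambda(x_s)}^2 + \eta^2 L G^2$. I expect this to be the main obstacle: it is the one place where the stochasticity, the weak convexity, and the Moreau-envelope geometry must be reconciled at once, and it is precisely the strong-convexity gap (rather than the looser estimate $g(x_s)-g(\hat{x}_s)\ge \tfrac{1}{2\lambda}\norm{x_s-\hat{x}_s}^2$) that is needed to win the sharp constant $2$ in the final bound.

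Finally I would take total expectations, sum the one-step inequality over $s = 0,\dots,S$ so that the potential telescopes to $g_\lambda(x_0) - \E[g_\lambda(x_{S+1})]$, and use $g_\lambda(x_{S+1}) \ge \min_x g_\lambda(x) = \min_x g(x)$ from Lemma~\ref{lem:moreau-properties}(a) to drop the last term. Dividing by $S+1$ and recognizing that $\bar{x}$ is drawn uniformly from $\{x_0,\dots,x_S\}$ turns the averaged sum into $\E[\norm{\nabla g_\lambda(\bar{x})}^2]$, giving $\E[\norm{\nabla g_\lambda(\bar{x})}^2] \le \tfrac{2(g_\lambda(x_0)-\min_x g(x))}{\eta(S+1)} + 2\eta L G^2$. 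Substituting $\eta = \gamma/\sqrt{S+1}$ combines the two terms into the stated bound.
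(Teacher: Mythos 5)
Your proof is correct, but there is nothing in the paper to compare it against: the paper does not prove Theorem~\ref{thm:sgd-dima} at all---it imports it verbatim as Corollary~2.2 of \cite{davis2018stochastic} and uses it as a black box in the proof of Theorem~\ref{thm:main-sgd}. What you have written is a faithful, self-contained reconstruction of the Davis--Drusvyatskiy Moreau-envelope argument, and the steps check out: with $\lambda = 1/(2L)$, combining the weak-convexity inequality at $\xi_s$ with the $(\lambda^{-1}-L)$-strong-convexity value gap of $\phi_{\lambda,x_s}$ gives $\iprod{\xi_s}{x_s-\hat{x}_s} \geq (\lambda^{-1}-L)\norm{x_s-\hat{x}_s}^2 = L\norm{x_s-\hat{x}_s}^2$, which turns the cross term $-\tfrac{\eta}{\lambda}\iprod{\xi_s}{x_s-\hat{x}_s}$ into exactly $-\tfrac{\eta}{2}\norm{\nabla g_{\lambda}(x_s)}^2$ via $\norm{x_s-\hat{x}_s}^2 = \lambda^2\norm{\nabla g_\lambda(x_s)}^2$, while $\tfrac{\eta^2 G^2}{2\lambda} = \eta^2 L G^2$; telescoping against $g_\lambda(x_{S+1}) \geq \min_x g(x)$ and substituting $\eta = \gamma/\sqrt{S+1}$ then reproduces the stated bound with the constant $2$. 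Your side remark is also accurate: using only the coarser estimate $g(x_s)-g(\hat{x}_s) \geq \tfrac{1}{2\lambda}\norm{x_s-\hat{x}_s}^2$ from Lemma~\ref{lem:moreau-properties}(a) would degrade the final constant from $2$ to $4$, so the strong-convexity gap is genuinely needed. Two points you lean on implicitly and should flag explicitly: (i) $\xi_s \defeq \E[\widehat{\nabla} g(x_s)\mid x_s] \in \partial g(x_s)$, which the paper asserts after~\eqref{eqn:stoch-subgrad} and which requires that the seeds $z_1,\ldots,z_k$ drawn at step $s$ are fresh and independent of $x_s$ (so that conditioning on $x_s$, with $\hat{x}_s$ measurable with respect to $x_s$, is legitimate); and (ii) the hypothesis $\E\bigl[\norm{\widehat{\nabla} g(x)}^2\bigr] \leq G^2$ must hold uniformly in $x$ so that it applies conditionally at the random iterate $x_s$---both are consistent with the paper's setup, where the bound $\hG = G(1+G')$ holds pointwise by Lemma~\ref{lem:smooth-func}.
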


\begin{proof}[Proof of Theorem~\ref{thm:main-sgd}]
	Lemmas~\ref{lem:smooth-func} and~\ref{lem:argmax-weakconvex} tell us that $g(x)$ is $\Ltilde$-weakly convex and for any choice of $z$, the stochastic subgradient $\widehat{\nabla} g(x)$ is bounded in norm by $\hG$. Consequently, Theorem~\ref{thm:sgd-dima} with the stated choice of $S$ proves Theorem~\ref{thm:main-sgd}.
\end{proof}
\begin{algorithm}[t]
	\DontPrintSemicolon 
	\KwIn{point $x$, stochastic subgradient oracle for function $g$, error $\epsilon$, failure probability $\delta$}

	Find $\xtilde$ such that $g(\xtilde) + {L}{} \norm{x-\xtilde}^2 \leq \left(\min_{x'} g(x') + L \norm{x-x'}^2\right) + \frac{\epsilon^2}{4L}$ using \cite[Algorithm 1]{harvey2019simple}.
	
	\textbf{return} $2L{\left(x-\xtilde\right)}{}$.
	\caption{Estimating Moreau envelope's gradient for postprocessing}
	\label{algo:postprocessing}
\end{algorithm}

\begin{proposition}\label{prop:postproc}
	Given a point $x$, an $L$-weakly convex function $g$ and a $G$-norm bounded and a stochastic subgradient oracle to $g$ (i.e., given any point $x'$, which returns a stochastic vector $u$ such that $\E[u] \in \partial g(x')$ and $\norm{u} \leq G$), with probability at least $1-\delta$, Algorithm~\ref{algo:postprocessing} returns a vector $u$ satisfying	$\norm{u - \nabla g_{\lambda}(x)} \leq \epsilon$ with at most $\order{\frac{G^2 \log \frac{1}{\delta}}{\epsilon^2}}$ queries to the stochastic subgradient oracle of $g$.
\end{proposition}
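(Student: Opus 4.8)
The plan is to show that the vector returned by Algorithm~\ref{algo:postprocessing}, namely $u = 2L(x-\xtilde)$, is close to the exact Moreau-envelope gradient $\nabla g_\lambda(x)$ with $\lambda = 1/(2L)$ (the value implied by the penalty $L\norm{x-x'}^2$ in Step~1), by combining strong convexity of the proximal objective with the function-value guarantee of the subroutine, and then to account for the query complexity using the high-probability strongly-convex SGD rate of~\cite{harvey2019simple}.

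First I would set up the correspondence with the Moreau envelope. Writing $\phi(x') \defeq g(x') + L\norm{x-x'}^2$, the exact proximal point is $\hat{x} \defeq \hat{x}_\lambda(x) = \argmin_{x'} \phi(x')$ with $\lambda = 1/(2L)$, and Lemma~\ref{lem:moreau-properties}(c) gives $\nabla g_\lambda(x) = (x-\hat{x})/\lambda = 2L(x-\hat{x})$. Hence $\norm{u - \nabla g_\lambda(x)} = 2L\norm{\xtilde - \hat{x}}$, so it suffices to bound $\norm{\xtilde - \hat{x}}$.

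Second, I would establish that $\phi$ is $L$-strongly convex: since $g$ is $L$-weakly convex, $g + \tfrac{L}{2}\norm{\cdot}^2$ is convex, while the penalty $L\norm{x-\cdot}^2$ contributes curvature $2L$, so subtracting $\tfrac{L}{2}\norm{\cdot}^2$ from $\phi$ still leaves a convex function. Strong convexity converts the subroutine's accuracy $\phi(\xtilde) - \phi(\hat{x}) \leq \epsilon^2/(4L)$ into the iterate bound $\tfrac{L}{2}\norm{\xtilde-\hat{x}}^2 \leq \epsilon^2/(4L)$, i.e. $\norm{\xtilde-\hat{x}} \leq \epsilon/(L\sqrt{2})$. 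Plugging back in yields $\norm{u - \nabla g_\lambda(x)} = 2L\norm{\xtilde-\hat{x}} \leq \sqrt{2}\,\epsilon$; the clean bound $\epsilon$ then follows after absorbing the constant into the requested accuracy of Step~1 (equivalently, tightening the constant $4$ to $8$), which affects the final complexity only by a constant factor.

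Third, I would handle the query complexity. A stochastic subgradient of $\phi$ at $x'$ is obtained from the given oracle as $u_g + 2L(x'-x)$, whose expectation lies in $\partial \phi(x')$. To invoke the high-probability strongly-convex SGD guarantee of~\cite{harvey2019simple}, which requires a uniform second-moment bound on the stochastic subgradients, I would restrict the iterates to a Euclidean ball of radius $\order{G/L}$ around $x$: the optimality condition $x-\hat{x} \in \lambda \partial g(\hat{x})$ together with the $G$-bound on subgradients of $g$ gives $\norm{x-\hat{x}} \leq \lambda G = G/(2L)$, so the ball contains $\hat{x}$ and the constrained and unconstrained minimizers coincide. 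On this ball the stochastic subgradient norm of $\phi$ is $\order{G}$, so~\cite{harvey2019simple} reaches function-value accuracy $\epsilon^2/(4L)$ with probability $1-\delta$ using $\order{\frac{G^2 \log(1/\delta)}{L \cdot \epsilon^2/(4L)}} = \order{\frac{G^2 \log(1/\delta)}{\epsilon^2}}$ queries, as claimed. The main obstacle I anticipate is precisely this step rather than the deterministic error propagation: one must prevent the naive subgradient bound $G + 2L\norm{x'-x}$ of $\phi$ from growing with the iterate, which is why the containment argument placing $\hat{x}$ (and hence all iterates) in a ball of radius $\order{G/L}$ is essential before applying the high-probability bound. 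The deterministic part is a routine strong-convexity computation.
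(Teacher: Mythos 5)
Your proof is correct and takes essentially the same route as the paper's: identify the output $u=2L(x-\xtilde)$ with the Moreau gradient via $\nabla g_{1/2L}(x)=2L\left(x-\xhatL(x)\right)$, exploit $L$-strong convexity of the proximal objective $g(x')+L\norm{x-x'}^2$, localize to the ball of radius $G/(2L)$ around $x$ (which contains $\xhatL(x)$ by first-order optimality and the $G$-bound on subgradients) so that the stochastic subgradients of the proximal objective stay $\order{G}$-bounded, and invoke the high-probability strongly-convex SGD guarantee of Harvey et al.\ to get $\order{\frac{G^2\log\frac{1}{\delta}}{\epsilon^2}}$ queries. If anything, you are more explicit than the paper, which leaves the strong-convexity error propagation implicit; your observation that the accuracy $\epsilon^2/(4L)$ in Step~1 actually yields $\sqrt{2}\,\epsilon$ rather than $\epsilon$ (so the constant should be $8$, a harmless constant-factor adjustment) is a fair catch.
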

\begin{proof}[Proof of Proposition~\ref{prop:postproc}]
	Let $\Phi_{\frac{1}{2L}}(x',x) \defeq g(x') + L \norm{x-x'}^2$. Recall the notation of Lemma~\ref{lem:moreau-properties} $\xhatL(x) \defeq \argmin_{x'} \Phi_{\frac{1}{2L}}(x',x)$ and $g_{\lambda}(x) = \min_{x'} \Phi_{\frac{1}{2L}}(x',x)$. The proof of Lemma~\ref{lem:moreau-properties} tells us that $\nabla g_{\frac{1}{2L}}(x) = \frac{x-\xhatL(x)}{\lambda}$ and also that $\norm{\xhatL(x)-x} \leq \frac{G}{2L}$. Since $\Phi_{\frac{1}{2L}}(\cdot,x)$ is a $L$-strongly convex and $G$ Lipschitz function in the domain $\set{x': \norm{\xhatL(x)-x} \leq \frac{G}{2L}}$,~\cite[Theorem 3.1]{harvey2019simple} tells us that we can use SGD with $\order{\frac{G^2 \log\frac{1}{\delta}}{L \epsilontilde}}$ stochastic gradient oracle queries to implement Step 1 of Algorithm~\ref{algo:postprocessing} with success probability at least $1-\delta$, where $\epsilontilde \defeq \frac{\epsilon^2}{4L}$. Simplifying this expression gives us a stochastic gradient oracle complexity of $\order{\frac{G^2 \log \frac{1}{\delta}}{\epsilon^2}}$.
\end{proof}

\section{Proofs of Results in Section~\ref{sec:prox}}
\label{app:prox}
\begin{proof}[Proof of Theorem~\ref{thm:prox}]
	Letting $h(x,\lambda) \defeq \sum_{i\in[k]} \lambda_i g_i(x)$, we note that $\norm{\nabla_{xx} h(x,\lambda)} = \norm{\sum_{i\in[k]} \lambda_i \nabla^2 g_i(x)} \leq L(1+G')^2 + GL'$, where we used Lemma~\ref{lem:smooth-func} and the fact that $\sum_i \abs{\lambda_i} \leq 1$. On the other hand, again from Lemma~\ref{lem:smooth-func}, $\norm{\nabla_{x\lambda} h(x,\lambda) = \norm{\sum_{i\in[k]} \nabla g_i(x)}} \leq k G(1+G')$. Since $\nabla_{\lambda \lambda} h(x,\lambda)=0$, we can conclude that $h$ is an $\Ltilde$-gradient Lipschitz function with $\Ltilde \defeq L(1+G')^2 + GL' + k G(1+G')$. Consequently, $g(x) = \max_{{\lambda \in S}}h(x,\lambda)$, where $S \defeq \set{\lambda \in \R^k : \lambda_i \geq 0, \sum_{i\in[k]} \lambda_i = 1}$, is $\Ltilde$-weakly convex and the Moreau envelope $g_{\frac{1}{2\Ltilde}}$ is well defined.
	
	Denote $\ghat_(x,x_s) \defeq \max_{i\in[k]} g_i(x) + \Ltilde \norm{x-x_s}^2$.
	We now divide the analysis of each of iteration of Algorithm~\ref{algo:prox} into two cases.
	
	\textbf{Case I, $\ghat_(x_{s+1},x_s) \leq \max_{i \in [k]} g_i(x_s) - 3 \tepsilon/4$}: Since $\max_{i \in [k]} g_i(x_{s+1}) \leq \ghat_(x_{s+1},x_s) \leq \max_{i \in [k]} g_i(x_s) - 3 \tepsilon/4$, we see that in this case $g(x_s)$ decreases monotonically by at least $3\tepsilon/4$ in each iteration. Since by Assumption~\ref{ass:func-smooth}, $g$ is bounded by $B$ in magnitude, and the termination condition in Step $4$ guarantees monotonic decrease in every iteration, there can only be at most $2B/(3\tepsilon/4)=8B/\tepsilon$ such iterations in Case I.
	
	\textbf{Case II, $\ghat_(x_{s+1},x_s) \geq \max_{i \in [k]} g_i(x_s) - 3 \tepsilon/4$}: In this case, we claim that $x_s$ is an $\epsilon$-FOSP of $g = \max_{i\in[k]} g_i(x)$. To see this, we first note that
	\begin{align}\label{eq:less-decrease-smooth}
		g(x_s) - 3\tepsilon/4 \leq \ghat_(x_{s+1},x_s) \leq (\min_x g(x) + \Ltilde \norm{x-x_s}^2) + \tepsilon/4 \;\; \implies   \;\; g(x_s)  < \min_x \ghat_(x; x_s)  + {\tepsilon}.
	\end{align}
	Let $x^*_s \defeq \arg\min_x \ghat_(x; x_k)$. Since $g$ is $\Ltilde$-gradient Lipschitz, we note that $\ghat_(\cdot; x_s)$ is $\Ltilde$-strongly convex. We now use this to prove that $x_s$ is close to $x_s^*$:{\small 
		\begin{align}
			\ghat_(x^*_s;x_s) + \frac{\Ltilde}2 \|x_s-x^*_s\|^2 \; \leq \; \ghat_(x_{s};x_s) \; =\;  f(x_s) \; \overset{(a)}<  \; \ghat_(x^*_{k};x_s) + \tepsilon  
			\implies\|x_s-x^*_s\| < \sqrt{\frac{2\tepsilon}{\Ltilde}}  \label{eq:moreau-ball-ub-smooth}
	\end{align}}
	where $(a)$ uses \eqref{eq:less-decrease-smooth}. Now consider any $\tx$, such that $4\sqrt{\tepsilon/L} \leq \| \tx - x_s\|$. Then,
	\begin{align}
		g(\tx) + L \|\tx - x_s\|^2 &= \max_{i \in [k]} g_i(\tx) + L \|\tx - x_s\|^2 = \ghat_(\tx; x_s) \overset{(a)}= \ghat_(x_s^*; x_s) + \frac{L}2 \|\tx - x^*_s\|^2 \nonumber \\
		&\overset{(b)}\geq f(x_s) - \tepsilon + \frac{\Ltilde}2 (\|\tx - x_s\| - \|x_s - x^*_s\|)^2 \overset{(c)}\geq  f(x_s) + \tepsilon, \label{eq:moreau-ball-lb-smooth}
	\end{align}
	where $(a)$ uses uses $\Ltilde$-strong convexity of $\ghat_(\cdot; x_s)$ at its minimizer $x^*_s$, $(b)$ uses \eqref{eq:less-decrease-smooth}, and $(b)$ and $(c)$ use triangle inequality, \eqref{eq:moreau-ball-ub-smooth} and $4\sqrt{\tepsilon/\Ltilde} \leq \| \tx - x_s\|$.
	
	Now consider the Moreau envelope, $g_{\frac1{2\Ltilde}}(x) = \min_{x' } \phi_{\frac1{2\Ltilde}, x}(x')$ where $\phi_{\frac{1}{2\Ltilde},x}(x') = g(x') + L \|x - x'\|^2$. Then, we can see that $\phi_{\frac1{2\Ltilde}, x_s}(x')$ achieves its minimum in the ball $\{x' \,|\, \|x' - x_s\| \leq 4\sqrt{\tepsilon/\Ltilde} \}$ by \eqref{eq:moreau-ball-lb-smooth} and Lemma \ref{lem:moreau-properties}(a). Then, with Lemma \ref{lem:moreau-properties}(b,c) and $\tepsilon = \frac{\varepsilon^2}{64\,\Ltilde}$, we get that,
	\begin{align}
		\| \nabla g_{\frac1{2\Ltilde}}(x_s) \| \leq (2\Ltilde) \| x_s - \hat{x}_{\frac1{2\Ltilde}}(x_s) \| = 8 \sqrt{\Ltilde \tepsilon} = \varepsilon,
	\end{align}
	i.e., $x_s$ is an $\varepsilon$-FOSP of $g$.
	
	By combining the above two cases, we establish that ${\frac{8B}{3\tepsilon}}$ ``outer'' iterations ensure convergence to a $\varepsilon$-FOSP.
	
	We now compute the gradient call complexity of each of these ``outer" iterations, where we have two options for implementing Step $3$ of Algorithm~\ref{algo:prox}. Note that this step corresponds to solving $\min_x \max_{\lambda \in S} h(x,\lambda)$ up to an accuracy of $\tepsilon/4$.
	
	\textbf{Option I},~\cite[Algorithm 2]{thekumparampil2019efficient}:
	Since the minimax optimization problem here is $\Ltilde$-strongly-convex--concave and $2\Ltilde$-gradient Lipschitz,~\cite[Theorem 1]{thekumparampil2019efficient} tells us that this requires at most $m$ gradient calls for each $g_i$ where,
	\begin{align}
		\frac{6(2\Ltilde)^2 }{L m^2} \leq \frac{\tepsilon}{4} = \frac{\varepsilon^2}{2^8 \Ltilde} \;\implies\; O\bigg(\frac{\Ltilde }{\varepsilon}\bigg) \leq m
	\end{align}
	Therefore the number of gradient computations required for each iteration of inner problem is $ O\Big(\frac{\Ltilde }{\epsilon} \log^2 \Big(\frac{1}{\varepsilon} \Big) \Big)$.
	
	\textbf{Option II, Cutting plane method}~\cite{lee2015faster}: Let us consider $u(\lambda) \defeq \min_x h(x,\lambda) + \Ltilde \norm{x - x_s}^2$, which is a $\Ltilde$-Lipschitz, concave function of $\lambda$.~\cite{lee2015faster} tells us that we can use cutting plane algorithms to obtain $\lambdahat$ satisfying $u(\lambdahat) \geq \max_{{\lambda \in S}}u(\lambda) - \epsilontilde$ using $\order{k \log \frac{k\Ltilde}{\epsilontilde}}$ gradient queries to $u$ and $\poly(k\log\frac{\Ltilde}{\epsilontilde})$ computation. The gradient of $u$ is given by $\nabla u(\lambda) = \nabla_\lambda h(\xsl,\lambda)$, where $\xsl \defeq \argmin_x h(x,\lambda) + \Ltilde \norm{x - x_s}^2$. Since $h(x,\lambda) + \Ltilde \norm{x - x_s}^2$ is a $3\Ltilde$-smooth and $\Ltilde$-strongly convex function in $x$, $\xsl$ can be computed up to $\epsilontilde$ error using gradient descent in $\order{\log \frac{\Ltilde}{\epsilontilde}}$ iterations. If we choose $\epsilontilde = \tepsilon^2/\poly(k,\Ltilde/\mu)$, then Proposition~\ref{prop:dual-primal} tells us that $\xslh$ satisfies the requirements of Step (3) of Algorithm~\ref{algo:prox} and the total number of gradient calls to each $g_i$ is at most $\order{\poly(k) \log \frac{\Ltilde}{\epsilon}}$ in each outer iteration of Algorithm~\ref{algo:prox}.	
\end{proof}

\begin{proposition}\label{prop:dual-primal}
	Suppose $h:\R^{d_1} \times \cU \rightarrow \R$ be such that $h(\cdot,\lambda)$ is $\mu$-strongly convex for every $\lambda \in \cU$, $h(x,\cdot)$ is concave for every $x \in \R^{d_1}$ and $h$ is $L$-gradient Lipschitz.
	Let $\lambdahat$ be such that $\min_x h(x,\lambdahat) \geq \max_\lambda \min_x h(x,\lambda) - \epsilon$ and let $\xslh \defeq \argmin_x h(x,\lambdahat)$. Then, we have $\max_\lambda h(\xslh,\lambda) \leq \min_x \max_\lambda h(x,\lambda) + c \left(\frac{L}{\mu} \cdot \epsilon + \frac{LD_\cU}{\sqrt{\mu}} \cdot \sqrt{\epsilon}\right)$, where $D_\cU = \max_{\lambda_1,\lambda_2 \in \cU} \norm{\lambda_1 - \lambda_2}$ is the diameter of $\cU$.
\end{proposition}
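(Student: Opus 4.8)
The plan is to read this as a standard primal--dual recovery estimate: from an $\epsilon$-approximate maximizer $\lambdahat$ of the dual function I want to certify that the induced best response $\xslh$ is approximately primal optimal. First I would introduce the dual function $d(\lambda) \defeq \min_x h(x,\lambda)$ and record three facts. Since $h(\cdot,\lambda)$ is $\mu$-strongly convex, the inner minimizer $\xsl$ is unique, so $d$ is well defined; by Danskin's theorem it is differentiable with $\nabla d(\lambda) = \nabla_\lambda h(\xsl,\lambda)$, and it is concave, being a pointwise infimum of the concave maps $\lambda \mapsto h(x,\lambda)$. Because $\cU$ is convex and compact, Sion's minimax theorem gives strong duality, so $\min_x \max_{\lambda \in \cU} h(x,\lambda) = \max_{\lambda \in \cU} d(\lambda)$; call this common value $p^*$ and the dual optimum $d^* = p^*$. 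The hypothesis then reads $d(\lambdahat) \geq d^* - \epsilon$, and $\xslh$ being the minimizer of $h(\cdot,\lambdahat)$ means $d(\lambdahat) = h(\xslh,\lambdahat)$.

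The key structural ingredient is that $d$ is smooth. Using the first-order condition $\nabla_x h(\xsl,\lambda) = 0$ together with $\mu$-strong convexity and the $L$-gradient-Lipschitzness of $h$, I would first show the solution map is Lipschitz, $\norm{x^*(\lambda_1) - x^*(\lambda_2)} \leq (L/\mu)\norm{\lambda_1 - \lambda_2}$: lower-bound the gradient difference by $\mu$ times the displacement via strong convexity, then use that both gradients vanish and $h$ is $L$-gradient-Lipschitz. Feeding this into $\nabla d(\lambda) = \nabla_\lambda h(\xsl,\lambda)$ and applying joint gradient-Lipschitzness once more yields that $d$ is $\beta$-smooth with $\beta = L(1 + L/\mu) = \order{L^2/\mu}$.

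Next I would bound the primal suboptimality of $\xslh$ through the decomposition $\max_{\lambda} h(\xslh,\lambda) - p^* = \big[\max_{\lambda} h(\xslh,\lambda) - d(\lambdahat)\big] + \big[d(\lambdahat) - d^*\big]$, where the second bracket is $\leq 0$ and $d^* = p^*$. Since $d(\lambdahat) = h(\xslh,\lambdahat)$, for any $\lambda \in \cU$ the concavity of $h(\xslh,\cdot)$ gives $h(\xslh,\lambda) - h(\xslh,\lambdahat) \leq \iprod{\nabla_\lambda h(\xslh,\lambdahat)}{\lambda - \lambdahat} = \iprod{\nabla d(\lambdahat)}{\lambda - \lambdahat}$. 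To control this inner product on the constrained set I would interpolate: for $\lambda_t \defeq \lambdahat + t(\lambda - \lambdahat) \in \cU$ with $t \in (0,1]$, the $\beta$-smooth lower bound for the concave $d$ together with $d(\lambda_t) \leq d^* \leq d(\lambdahat) + \epsilon$ yields $t\,\iprod{\nabla d(\lambdahat)}{\lambda-\lambdahat} \leq \epsilon + \tfrac{\beta t^2}{2}D_\cU^2$. Optimizing over $t$ gives $\iprod{\nabla d(\lambdahat)}{\lambda - \lambdahat} = \order{D_\cU\sqrt{\beta\epsilon}} = \order{\tfrac{LD_\cU}{\sqrt\mu}\sqrt\epsilon}$, with a lower-order $\order{\epsilon}$ contribution in the regime where the unconstrained optimal $t$ exceeds $1$. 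Taking the supremum over $\lambda \in \cU$ then bounds the first bracket.

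Combining these steps yields $\max_\lambda h(\xslh,\lambda) \leq p^* + \order{\tfrac{LD_\cU}{\sqrt\mu}\sqrt\epsilon}$, which in particular is dominated by the claimed expression $c\big(\tfrac{L}{\mu}\epsilon + \tfrac{LD_\cU}{\sqrt\mu}\sqrt\epsilon\big)$ (the proposition is an upper bound, so the extra $\tfrac{L}{\mu}\epsilon$ term only makes it looser). I expect the main obstacle to be the constrained-domain gradient estimate in the third step: unlike the unconstrained setting, $\lambdahat$ being $\epsilon$-optimal does \emph{not} force $\norm{\nabla d(\lambdahat)}$ to be small, so one cannot simply invoke gradient domination. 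The interpolation/trade-off in $t$ is precisely what converts the $\epsilon$ function-value gap into the $\sqrt\epsilon$ inner-product bound, and is where the diameter $D_\cU$ and the $\sqrt\epsilon$ rate enter. The other load-bearing step is establishing the smoothness constant $\beta$ of $d$ via Lipschitzness of the solution map, which is where $\mu$-strong convexity is essential.
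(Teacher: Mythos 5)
Your proposal is correct, but it takes a genuinely different route from the paper's proof. The paper works on the primal side, anchored at the saddle point $(x^*,\lambda^*)$: it first converts the $\epsilon$-dual-optimality of $\lambdahat$ into the primal proximity bound $\norm{\xslh - x^*} \leq \sqrt{2\epsilon/\mu}$ via $\mu$-strong convexity, and then bounds $h(\xslh,\lb) - h(x^*,\lambda^*)$ by splitting at $\lambda^*$: concavity of $h(\xslh,\cdot)$ plus gradient-Lipschitzness controls the first piece (using $\nabla_\lambda h(x^*,\lambda^*)=0$), and $L$-smoothness around the stationary point $x^*$ controls the second, giving exactly $\frac{LD_\cU\sqrt{2\epsilon}}{\sqrt{\mu}} + \frac{L\epsilon}{\mu}$. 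You instead work on the dual side: you establish $\beta$-smoothness of $d(\lambda) \defeq \min_x h(x,\lambda)$ via the $(L/\mu)$-Lipschitz solution map (a lemma the paper never needs), and convert the $\epsilon$ value-gap into a bound on $\iprod{\nabla d(\lambdahat)}{\lambda - \lambdahat}$ by interpolating along the segment and optimizing over the step $t$; that interpolation is where the $\sqrt{\epsilon}$ rate and $D_\cU$ enter for you, playing the role that the distance bound $\norm{\xslh-x^*}\leq\sqrt{2\epsilon/\mu}$ plays in the paper. The trade-off: your route needs more machinery (Danskin plus the dual smoothness computation) and requires $\cU$ convex for the interpolation, but it is more robust on constrained domains---you never invoke first-order stationarity of $\lambda^*$, whereas the paper's last step asserts $\nabla_\lambda h(x^*,\lambda^*) = 0$, which can fail when $\lambda^*$ lies on the boundary of $\cU$ (e.g., a simplex vertex, which is exactly the paper's application); the paper's argument is repairable by replacing that identity with the variational inequality $\iprod{\nabla_\lambda h(x^*,\lambda^*)}{\lb - \lambda^*} \leq 0$, but your argument handles the constrained set correctly from the start. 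One small point to make explicit in your write-up: absorbing your $O(\epsilon)$ term (from the regime where the optimal $t$ exceeds $1$) into $c\,\frac{L}{\mu}\epsilon$, and simplifying $\sqrt{\beta}$ to $O(L/\sqrt{\mu})$, both use $\mu \leq L$, which holds automatically since $h$ is $\mu$-strongly convex and $L$-gradient Lipschitz in $x$.
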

\begin{proof}[Proof of Proposition~\ref{prop:dual-primal}]
	From the hypothesis, we have:
	\begin{align*}
		\epsilon \geq h(x^*,\lambda^*) - h(\xslh,\lh)
		\geq h(x^*,\lh) - h(\xslh,\lh) \geq \frac{\mu}{2} \norm{x^* - \xslh}^2,
	\end{align*}
where $(x^*,\lambda^*)$ is the Nash equilibrium and the second step follows from the fact that $\lambda^* = \argmax_{\lambda}h(x^*,\lambda)$ and the third step follows from the fact that $\xslh = \argmin_x h(x,\lh)$. Consequently, $\norm{x^* - \xslh} \leq \sqrt{2\epsilon/\mu}$. Let $\lb \defeq \argmax_{\lambda} h(\xslh,\lambda)$. We now have that:
\begin{align*}
	\max_\lambda h(\xslh,\lambda) - \max_\lambda \min_x h(x,\lambda) &= h(\xslh,\lb) - h(x^*,\lambda^*) \\
	&= h(\xslh,\lb) - h(\xslh,\lambda^*) + h(\xslh,\lambda^*)- h(x^*,\lambda^*) \\
	&\overset{(\zeta_1)}{\leq} \iprod{\nabla_\lambda h(\xslh,\lambda^*)}{\lb - \lambda^*} + \frac{L}{2}\norm{\xslh - x^*}^2 \\
	&\overset{(\zeta_2)}{\leq} \norm{\nabla_\lambda h(\xslh,\lambda^*)}\norm{\lb - \lambda^*} + \frac{L\epsilon}{\mu} \\
	&\overset{(\zeta_3)}{\leq} \left(\norm{\nabla_\lambda h(x^*,\lambda^*)} + L \norm{\xslh-x^*}\right) \cD_\cU + \frac{L\epsilon}{\mu} \\
	&\leq \frac{L \cD_\cU \sqrt{2\epsilon}}{\sqrt{\mu}} + \frac{L\epsilon}{\mu},
\end{align*}
where $(\zeta_1)$ follows from the fact that $h(\xslh,\cdot)$ is concave and $x^* = \argmin_x h(x,\lambda^*)$,
$(\zeta_2)$ follows from the bound $\norm{x^* - \xslh} \leq \sqrt{2\epsilon/\mu}$,
$(\zeta_3)$ follows from the $L$-gradient Lipschitz property of $h$, and the last step follows from the fact that $\nabla_\lambda h(x^*,\lambda^*)=0$. This proves the proposition.
\end{proof}


\section{Proofs of results in Section~\ref{sec:smoothness}}
\label{app:smoothness}

In this appendix, we present the proofs for the lemmas in
Section~\ref{sec:smoothness}. Recall the SGA update:
\begin{align} \label{eqn:SGA}
	y_{t+1} = y_t + \eta \nabla_y f_{\sigma(t)}(x,y_t).
\end{align}

Therefore, the Jacobian of the $T$-step SGA update is given by
\begin{align}\label{eqn:SGA-grad}
    D y_{t+1} = \left(I + \eta\nabla_{yy} f_{\sigma(t)}(x,y_t) \right) D y_t + \eta
    \nabla_{yx} f_{\sigma(t)}(x,y_t), \; \mbox{with} \; \cA(x,z) = y_T(x).
\end{align}

\subsection{Proof of Theorem~\ref{thm:sg-smooth}}
\sgsmooth*

\begin{proof}
\textbf{Lipschitz of $y_t(x)$.} We first show the Lipschitz claim. 
We have the following bound on
the Jacobian of the update equation given  in \eqref{eqn:SGA-grad}:
\begin{align*}
\norm{D y_{t+1}(x)} \le& \norm{(I + \eta\hess_{yy} f(x,y_t(x)) ) D y_t(x)} + \eta
    \norm{\hess_{yx} f(x,y_t(x))}\\
    \le& (1+\eta L) \norm{D y_t(x)} + \eta L.
\end{align*}
Since $Dy_0(x) = 0$, the above recursion implies that 
\begin{equation*}
\norm{D y_{t}(x)} \le \eta L \sum_{\tau=0}^{t-1} (1+\eta L)^\tau \le (1+\eta
L)^t.
\end{equation*}

\textbf{Gradient-Lipschitz of $y_t(x)$.} Next, we show the claimed gradient
Lipschitz constant. 
As above, using the update equation in \eqref{eqn:SGA-grad}, we have the
following bound on the Jacobian:
\begin{align*}
&\norm{D y_{t+1}(x_1)- Dy_{t+1}(x_2)}\\
\le& \norm{(I + \eta\hess_{yy} f(x_1,y_t(x_1)) )(D y_{t}(x_1)- D y_{t}(x_2))} + \eta \norm{\hess_{yx} f(x_1, y_t(x_1)) - \grad^2_{yx} f(x_2, y_t(x_2))}\\
&+ \eta \norm{[\grad^2_{yy} f(x_1, y_t(x_1)) - \grad^2_{yy} f(x_2, y_t(x_2))] Dy_t(x_2)}\\
\le& (1+\eta L) \norm{Dy_{t}(x_1)-Dy_{t}(x_2)} + \eta \rho (1+\norm{Dy_t(x_2)})(\norm{x_1 - x_2} + \norm{y_t(x_1) - y_t(x_2)})\\
 \le & (1+\eta L)\norm{Dy_t(x_1) - Dy_t(x_2)} + 4\eta\rho (1+\eta L)^{2t} \norm{x_1 - x_2}.
\end{align*}
The above recursion implies the claimed Lipschitz constant. Indeed,
\begin{equation*}
\norm{D y_{t}(x_1)- Dy_{t}(x_2)}
\le 4\eta \rho \sum_{\tau=0}^{t-1} (1+\eta L)^{t+\tau-1} \norm{x_1 - x_2} \le 4
(\rho/L)\cdot(1+\eta L)^{2t} \norm{x_1 - x_2}.
\end{equation*}
\end{proof}

\subsection{Proof of Theorem~\ref{thm:sg-concave}}
\sgconcave*

\begin{proof}
\textbf{Lipschitz of $y_t(x)$.} We first show the Lipschitz claim. 
Using the update equation in \eqref{eqn:SGA-grad}, we have the
following bound on the Jacobian:
\begin{align*}
\norm{D y_{t+1}(x)} \le& \norm{(I + \eta\hess_{yy} f(x,y_t(x)) ) D y_t(x)} + \eta
    \norm{\hess_{yx} f(x,y_t(x))}\\
    \le& \norm{D y_t(x)} + \eta L.
\end{align*}
Since $Dy_0(x) = 0$, the above recursive implies that $\norm{D y_{t}(x)} \le \eta L t$.

\textbf{Gradient-Lipschitz of $y_t(x)$.} Next, we show the claimed gradient
Lipschitz constant. Using the update equation in \eqref{eqn:SGA-grad}:, we have the
following bound on the Jacobian:
\begin{align*}
&\norm{D y_{t+1}(x_1)- Dy_{t+1}(x_2)}\\
\le& \norm{(I + \eta\hess_{yy} f(x_1,y_t(x_1)) )(D y_{t}(x_1)- D y_{t}(x_2))} + \eta \norm{\hess_{yx} f(x_1, y_t(x_1)) - \grad^2_{yx} f(x_2, y_t(x_2))}\\
&+ \eta \norm{[\grad^2_{yy} f(x_1, y_t(x_1)) - \grad^2_{yy} f(x_2, y_t(x_2))] Dy_t(x_2)}\\
\le&  \norm{Dy_{t}(x_1)-Dy_{t}(x_2)} + \eta \rho (1+\norm{Dy_t(x_2)})(\norm{x_1 - x_2} + \norm{y_t(x_1) - y_t(x_2)})\\
 \le & \norm{Dy_t(x_1) - Dy_t(x_2)} + \eta\rho (1+\eta L t)^{2} \norm{x_1 - x_2}.
\end{align*}
This recursion implies the following gradient Lipschitz constant:
\begin{equation*}
\norm{D y_{t}(x_1)- Dy_{t}(x_2)}
\le \eta \rho \sum_{\tau=0}^{t-1} (1+\eta L \tau)^2 \norm{x_1 - x_2} \le
(\rho/L)\cdot(1+\eta L t)^3  \norm{x_1 - x_2}.
\end{equation*}
\end{proof}

\subsection{Proof of Theorem~\ref{thm:sg-strconcave}}
\sgstrconcave*

\begin{proof}
Denote the condition number $\kappa = L/\alpha$.

\textbf{Lipschitz of $y_t(x)$.} We first show the claimed Lipschitz constant.
Using the update equation in  \eqref{eqn:SGA-grad}, we have that
\begin{align*}
\norm{D y_{t+1}(x)} \le& \norm{(I + \eta\hess_{yy} f(x,y_t(x)) ) D y_t(x)} + \eta
    \norm{\hess_{yx} f(x,y_t(x))}\\
    \le& (1-\eta \alpha)\norm{D y_t(x)} + \eta L.
\end{align*}
Since $Dy_0(x) = 0$, the above recursion gives the following bound:
\begin{equation*}
\norm{D y_{t}(x)} \le \eta L \sum_{\tau=0}^{t-1} (1-\eta \alpha)^\tau \le
\kappa.
\end{equation*}

\textbf{Gradient-Lipschitz of $y_t(x)$.} Next we show the claimed gradient
Lipschitz constant. Again, using the update equation, we have that
\begin{align*}
&\norm{D y_{t+1}(x_1)- Dy_{t+1}(x_2)}\\
\le& \norm{(I + \eta\hess_{yy} f(x_1,y_t(x_1)) )(D y_{t}(x_1)- D y_{t}(x_2))} + \eta \norm{\hess_{yx} f(x_1, y_t(x_1)) - \grad^2_{yx} f(x_2, y_t(x_2))}\\
&+ \eta \norm{[\grad^2_{yy} f(x_1, y_t(x_1)) - \grad^2_{yy} f(x_2, y_t(x_2))] Dy_t(x_2)}\\
\le&  (1-\eta \alpha)\norm{Dy_{t}(x_1)-Dy_{t}(x_2)} + \eta \rho (1+\norm{Dy_t(x_2)})(\norm{x_1 - x_2} + \norm{y_t(x_1) - y_t(x_2)})\\
 \le & (1-\eta\alpha)\norm{Dy_t(x_1) - Dy_t(x_2)} + 4\eta\rho \kappa^{2} \norm{x_1 - x_2}.
\end{align*}
This recursion implies that
\begin{equation*}
\norm{D y_{t}(x_1)- Dy_{t}(x_2)}
\le 4\eta \rho \kappa^2 \sum_{\tau=0}^{t-1} (1-\eta \alpha)^\tau  \norm{x_1 -
x_2} \le  4(\rho/L)\cdot \kappa^3  \norm{x_1 - x_2}.
\end{equation*}
\end{proof}

\subsection{Proof of Theorem~\ref{thm:ag-smooth}}
\agsmooth*

\begin{proof}
Recall the SNAG update
\begin{align} \label{eqn:nesterov}
	\ty_{t}&=y_{t}+ (1-\theta) (y_{t}-y_{t-1})\\
	y_{t+1}&=\ty_t+\eta \nabla_yf_{\sigma(t)}(x,\ty_t).
\end{align}
Observe that the update equation for $T$-step SNAG implies that
\begin{equation}
    \begin{split}
            D\ty_{t} &= Dy_{t}+(1-\theta)(Dy_{t}-Dy_{t-1})\\
    Dy_{t+1} &=(I+\eta \nabla_{yy} f_{\sigma(t)}(x,\ty_t) ) D \ty_t+\eta
    \nabla_{yx}f_{\sigma(t)}(x,\ty_t)
    \end{split}
    \label{eq:jacsnag}
\end{equation}

\textbf{Lipschitz of $y_t(x), v_t(x)$.} We first show the claimed Lipschitz
constant.
By the update equations in \eqref{eq:jacsnag}, we have that
\begin{equation*}
    Dy_{t+1} =(I+\eta \nabla_{yy} f_{\sigma(t)}(x,\ty_t) )
    (Dy_{t}+(1-\theta)(Dy_{t}-Dy_{t-1}))+\eta\nabla_{yx}f_{\sigma(t)}(x,\ty_t).
\end{equation*}
Denote $\delta_t = \norm{Dy_t - Dy_{t-1}}$, and note that $Dy_0 = Dy_{-1} = 0$
so that $\delta_0 = 0$. By the equation above, we have that
\begin{align*}
\delta_{t+1} \le& \eta L \norm{D y_t} + (1+\eta L) (1-\theta) \delta_t + \eta L\\
\le & \eta L \sum_{\tau = 1}^t \delta_\tau + (1+\eta L) (1-\theta) \delta_t +
\eta L.
\end{align*}
In the following, we use induction to prove that
\begin{equation} \label{eq:induction_AG}
\delta_t \le (1+\eta L/\theta)^t.
\end{equation}
It is easy to verify that this is true for the base case $\delta_0 = 0 \le 1$.
Suppose the claim is true for all $\tau \le t$, then we have that
\begin{align*}
\delta_{t+1} \le& \eta L \sum_{\tau = 1}^t (1+\eta L/\theta)^{\tau} + (1+\eta L) (1-\theta) (1+\eta L/\theta)^t + \eta L \\
\le& \eta L \sum_{\tau = 0}^t (1+\eta L/\theta)^{\tau} + (1-\theta) (1+\eta L/\theta)^{t+1} \\
=& \theta [(1+\eta L/\theta)^{t+1} - 1] + (1-\theta) (1+\eta L/\theta)^{t+1}
\le (1+\eta L/\theta)^{t+1}.
\end{align*}
This proves the induction claim. Therefore, by \eqref{eq:induction_AG}, we have
the following two bounds:
\begin{align*}
\norm{D y_t(x)} \le& \sum_{\tau = 1}^t \delta_\tau \le t (1+\eta L/\theta)^t, \\
\norm{D \ty_t(x)} \le& (2-\theta)\norm{D y_t(x)} + (1-\theta)\norm{D y_{t-1}(x)}
\le 3t (1+\eta L/\theta)^t.
\end{align*}

\textbf{Gradient-Lipschitz of $y_t(x)$.} Next, we show the claimed gradient
Lipschitz constant. 
For any fixed $x_1, x_2$, denote $w_t = D y_{t}(x_1)- Dy_{t}(x_2)$, we have
\begin{align*}
w_{t+1} =& (I+\eta \nabla_{yy} f_{\sigma(t)}(x_1,\ty_t(x_1)) ) (w_t + (1-\theta)(w_t - w_{t-1})) \\
&+ \underbrace{\eta (\nabla_{yx}f_{\sigma(t)}(x_1,\ty_t(x_1)) - \nabla_{yx}f_{\sigma(t)}(x_2,\ty_t(x_2)))}_{T_1} \\
&+ \underbrace{\eta (\nabla_{yy}f_{\sigma(t)}(x_1,\ty_t(x_1)) -
\nabla_{yy}f_{\sigma(t)}(x_2,\ty_t(x_2)))(Dy_{t}(x_2) + (1-\theta)(Dy_{t}(x_2) -
Dy_{t-1}(x_2)))}_{T_2}.
\end{align*}
We note that we can upper bound the last two terms above as follows:
\begin{align*}
T_1 + T_2 \le&
\eta\rho(\norm{x_1 - x_2} + \norm{\ty_t(x_1) - \ty_t(x_2)})\\
&+ \eta \rho(\norm{x_1 - x_2} + \norm{\ty_t(x_1) - \ty_t(x_2)})(2\norm{Dy_t(x_2)} + \norm{Dy_{t-1}(x_2)})\\
\le& 24 \eta \rho t^2 (1+\eta L/\theta)^{2t} \norm{x_1 - x_2}.
\end{align*}
Therefore, let $\zeta_t  = \norm{w_t - w_{t-1}}$, and $\Delta = \norm{x_1 - x_2}$, we have the following:
\begin{align*}
\zeta_{t+1} \le& \eta L \norm{w_t} + (1+\eta L) (1-\theta) \zeta_{t} + 24 \eta \rho t^2(1+\eta L/\theta)^{2t} \Delta\\
\le&  \eta L \sum_{\tau = 1}^t \zeta_\tau + (1+\eta L) (1-\theta) \zeta_{t} + 24
\eta \rho t^2(1+\eta L/\theta)^{2t} \Delta.
\end{align*}
In the following, we use induction to prove that
\begin{equation} \label{eq:induction_AG_GL}
\zeta_t \le 50(\rho/L)\cdot t^2(1+\eta L/\theta)^{2t} \Delta :=\psi(t).
\end{equation}
It is easy to verify that this is true for the base case $\zeta_0 = 0 $. Suppose
the claim is true for all $\tau \le t$, then we have that
\begin{align*}
\zeta_{t+1} \le& 50\eta \rho \sum_{\tau = 1}^t \tau^2(1+\eta L/\theta)^{2\tau} \Delta + (1+\eta L) (1-\theta) \psi(t) + 24 \eta \rho t^2(1+\eta L/\theta)^{2t} \Delta\\
\le& \theta(\rho/L)\cdot [50  t^2  \frac{(1+\eta L/\theta)^{2(t+1)} -1}{ (1+\eta L/\theta)^2 - 1}+ 24(\eta L/\theta) t^2 (1+\eta L/\theta)^{2t}]\Delta + (1-\theta) \psi(t+1) \\
\le& \theta(\rho/L)\cdot [25  t^2  (1+\eta L/\theta)^{2(t+1)} + 24 t^2 (1+\eta L/\theta)^{2(t+1)}]\Delta + (1-\theta) \psi(t+1) \\
\le& \theta (\rho/L)\cdot[50 t^2 (1+\eta L/\theta)^{2(t+1)}]\Delta +
(1-\theta)\psi(t+1) \le \psi(t+1).
\end{align*}
This proves the induction claim. Therefore, by \eqref{eq:induction_AG_GL}, we
have that
\begin{equation*}
\norm{D y_{t}(x_1)- Dy_{t}(x_2)} = \norm{w_t} \le \sum_{\tau = 1}^t \zeta_\tau
\le 50(\rho/L)t^3 (1+\eta L/\theta)^{2t} \norm{x_1 - x_2}.
\end{equation*}
\end{proof}

\subsection{Projected gradient ascent is not gradient-Lipschitz}\label{sec:proj}
\begin{proposition}\label{prop:proj}
	Consider $f(x,y)=xy$ for $(x,y) \in \cX \times \cY$, where $\cX = [0,10]$ and $\cY = [0,1]$. $1$-step projected gradient ascent given by:
	\begin{align*}
		y_{1}(x) = \cP_\cY(y_0 + \eta \nabla_y f(x,y_0)) \quad y_0 = 0,
	\end{align*}
where $\eta > 1/10$ is not a gradient-Lipschitz algorithm.
\end{proposition}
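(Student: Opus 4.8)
The plan is to write the one-step update in closed form, observe that it is a piecewise-linear map with a kink strictly inside $\cX$, and show that the one-sided derivatives at the kink differ by a fixed positive amount, which rules out any finite gradient-Lipschitz constant.

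First I would compute the update explicitly. Since $f(x,y)=xy$ we have $\nabla_y f(x,y_0)=x$, and with $y_0=0$ the update becomes $y_1(x)=\cP_{\cY}(\eta x)=\min(\eta x,1)$ for $x\in[0,10]$; the lower clip to $0$ is never active because $\eta x\ge 0$ on $\cX$. Thus $\cA(\cdot)=y_1(\cdot)$ is the scalar piecewise-linear map $y_1(x)=\eta x$ for $x\le 1/\eta$ and $y_1(x)=1$ for $x\ge 1/\eta$.

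Next I would locate the breakpoint $x^\ast=1/\eta$ and invoke the hypothesis $\eta>1/10$, which gives $x^\ast=1/\eta<10$, so $x^\ast\in(0,10)$ lies strictly in the interior of $\cX$. This is precisely where the assumption $\eta>1/10$ is used: it guarantees that points on both sides of the kink are available inside the domain. Since $d_1=d_2=1$, the Jacobian $D\cA(x)$ is just the scalar derivative, equal to $\eta$ on the left branch ($x<x^\ast$) and to $0$ on the right branch ($x>x^\ast$).

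Finally I would derive the contradiction. For any small $\delta>0$ set $x_1=x^\ast-\delta$ and $x_2=x^\ast+\delta$; for $\delta<\min\{x^\ast,\,10-x^\ast\}$ both lie in $(0,10)$ and lie on opposite branches, so $\norm{D\cA(x_1)-D\cA(x_2)}=\abs{\eta-0}=\eta$ while $\norm{x_1-x_2}=2\delta$. If $\cA$ were $L'$-gradient Lipschitz we would need $\eta\le 2L'\delta$ for every such $\delta$, which fails as soon as $\delta<\eta/(2L')$; hence no finite $L'$ exists and $\cA$ is not gradient Lipschitz. There is no genuine obstacle in this argument; the only point requiring care is confirming that $\eta>1/10$ places the kink strictly inside $[0,10]$, so that the jump discontinuity of the derivative is actually realized within the domain.
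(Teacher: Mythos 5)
Your proof is correct, and it differs from the paper's in a small but meaningful way. The paper's own proof does the same initial computation — $y_1(x)=\min(1,\eta x)$, with the kink at $x=1/\eta$ placed strictly inside $(0,10)$ by the hypothesis on $\eta$ (the paper's proof actually writes ``for $\eta<1/10$'', an evident typo for $\eta>1/10$) — but then concludes by observing that the \emph{composite} function $f(x,y_1(x))=x\min(1,\eta x)$ is not gradient-Lipschitz at $x=1/\eta$ (its derivative jumps from $2$ to $1$ across the kink). You instead work directly with the algorithm map: you compute the Jacobian $D\cA$ on the two branches ($\eta$ versus $0$) and show that the jump is incompatible with any finite Lipschitz constant for $D\cA$, via the points $x^\ast\pm\delta$ with $\delta\to 0$. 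Your version proves the literal claim of the proposition, which per Definition~\ref{def:smooth}(b) concerns $D\cA$ itself; the paper's version establishes the downstream consequence (the composed objective loses gradient-Lipschitzness), and to recover the stated claim from it one would invoke the contrapositive of Lemma~\ref{lem:smooth-func}, using that $y_1$ is $\eta$-Lipschitz. Both routes are sound and hinge on the same observation that $\eta>1/10$ places the kink inside the domain; yours is the more direct one, needing no appeal to Lemma~\ref{lem:smooth-func}.
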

\begin{proof}
	We see that $y_1(x) = \min(1,\eta x)$ and $f(x,y_1(x)) = x\min(1,\eta x)$. For $\eta < 1/10$, we see that $f(x,y_1(x))$ is not gradient-Lipschitz at $x=1/\eta \in (0,10)$.
\end{proof}

\section{Additional Experiments and Details}
\label{app:experiments}

In this appendix section, we provide additional experimental results and details.

\textbf{Dirac-GAN.}  In the results presented in Section~\ref{sec:empirical} for this problem, the discriminator sampled its initialization uniformly from the interval $[-0.1, 0.1]$ and performed $T=10$ steps of gradient ascent. For the results given in Figure~\ref{fig:dirac_app}, we allow the discriminator to sample uniformly from the interval $[-0.5, 1]$ and consider the discriminator performing $T=100$ (Figure~\ref{fig:dirac_app_a}) and $T=1000$ (Figure~\ref{fig:dirac_app_b}) gradient ascent steps. The rest of the experimental setup is equivalent to that described in Section~\ref{sec:empirical}.

For the result presented in Figure~\ref{fig:dirac_app_a}, we see that with this distribution of initializations for the discriminator and $T=100$ gradient ascent steps, the generator is not able to converge to the optimal parameter of $\theta^{\ast}=0$ to recreate the underlying data distribution using our algorithm which descends $\nabla f(\theta, \mathcal{A}(\theta))$ or the algorithm that descends $\nabla_\theta f(\theta, \mathcal{A}(\theta))$. However, we see that our algorithm converges significantly closer to the optimal parameter configuration. Furthermore, we still observe stability and convergence from our training method, whereas standard training methods using simultaneous or alternating gradient descent-ascent always cycle. This example highlights that the optimization through the algorithm of the adversary is important not only for the rate of convergence, but it also influences what the training method converges to and gives improved results in this regard. 

Finally, in the result presented in Figure~\ref{fig:dirac_app_a}, we see that with this distribution of initializations for the discriminator and $T=1000$ gradient ascent steps, the generator is able to converge to the optimal parameter of $\theta^{\ast}=0$ to recreate the underlying data distribution using our algorithm which descends $\nabla f(\theta, \mathcal{A}(\theta))$ or the algorithm that descends $\nabla_\theta f(\theta, \mathcal{A}(\theta))$. Thus, while with $T=100$ we did not observe convergence to the optimal generator parameter, with a stronger adversary we do see convergence to the optimal generator parameter. This behavior can be explained by the fact that when the discriminator is able to perform enough gradient ascent steps to nearly converge, the gradients $\nabla f(\theta, \mathcal{A}(\theta))$ and $\nabla_\theta f(\theta, \mathcal{A}(\theta))$ are nearly equivalent.

We remark that we repeated the experiments 5 times with different random seeds and show the mean generator parameters during the training with a window around the mean of a standard deviation. The results were very similar between runs so the window around the mean is not visible.

\begin{figure}[t!]
  \centering
  \subfloat{\includegraphics[width=.35\textwidth]{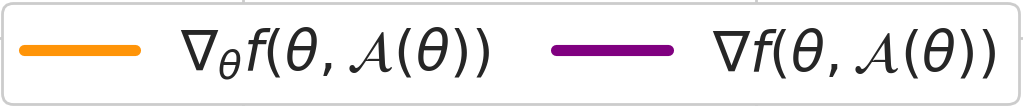}}

  \subfloat[][]{\includegraphics[width=.35\textwidth]{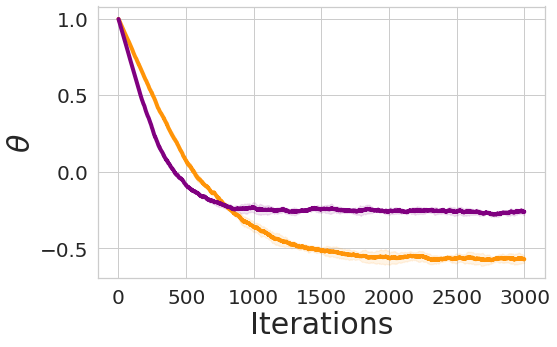}\label{fig:dirac_app_a}}
  \subfloat[][]{\includegraphics[width=.35\textwidth]{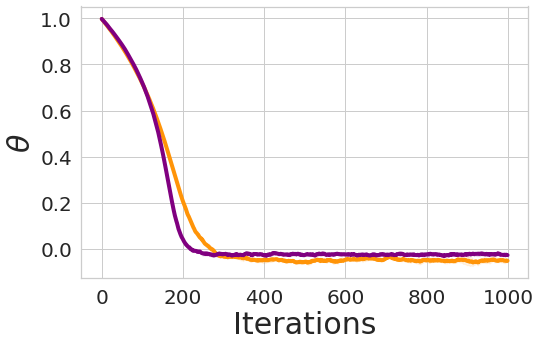}\label{fig:dirac_app_b}}
  \caption{\textbf{Dirac-GAN}: Generator parameters while training using our framework with and without optimizing through the discriminator where between each generator update the discriminator samples an initial parameter choice uniformly at random from the interval $[-0.5, 1]$ and then performs $T=100$ (Figure~\ref{fig:dirac_app_a}) and $T=1000$ (Figure~\ref{fig:dirac_app_b}) steps of gradient ascent.}
  \label{fig:dirac_app}
\end{figure}

\begin{figure}[t!]
  \centering
  \begin{minipage}{.9\textwidth}
  \begin{minipage}{.2\textwidth}
  \subfloat[][Real Data]{\includegraphics[width=.9\textwidth]{Figs/samples_real_gaussian.png}\label{fig:gan_real_app}}
 \end{minipage}%
  \begin{minipage}{.8\textwidth}
    \subfloat[][]{\includegraphics[width=.19\textwidth]{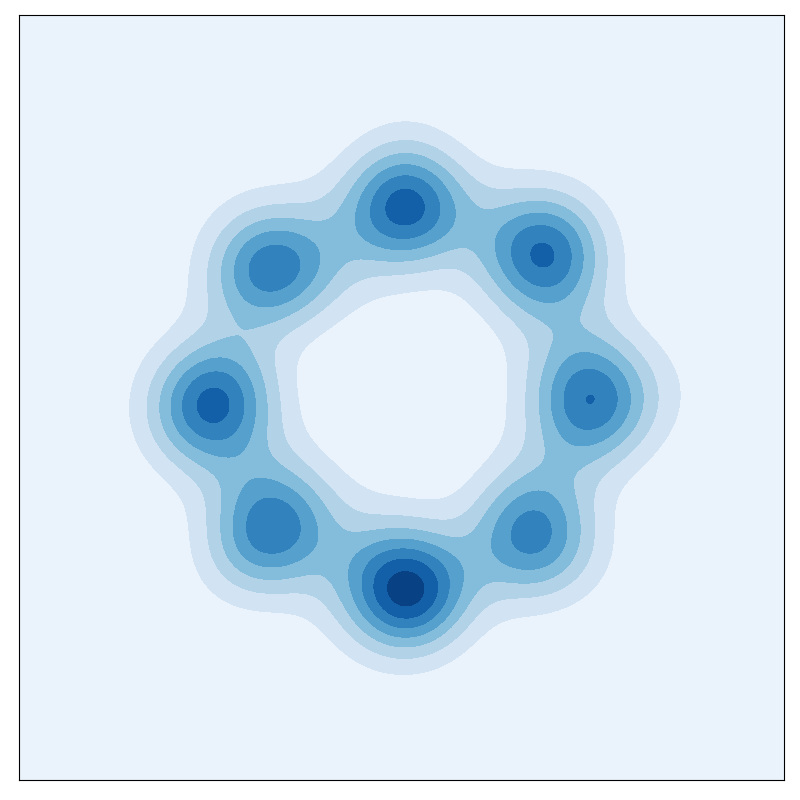}\label{fig:samples_final1}} 
    \subfloat[][]{\includegraphics[width=.19\textwidth]{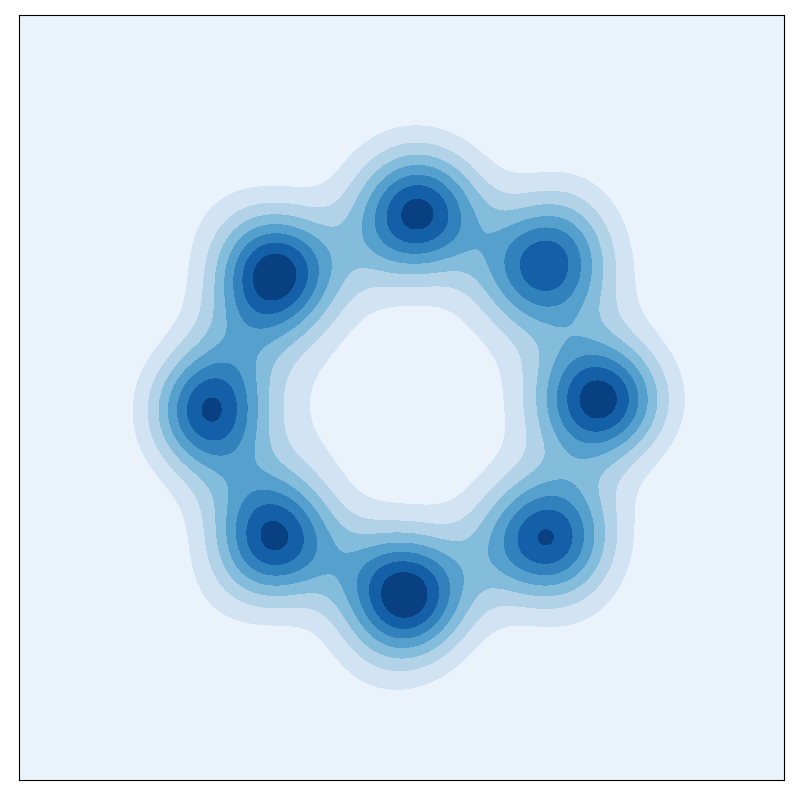}\label{fig:samples_final2}} 
    \subfloat[][]{\includegraphics[width=.19\textwidth]{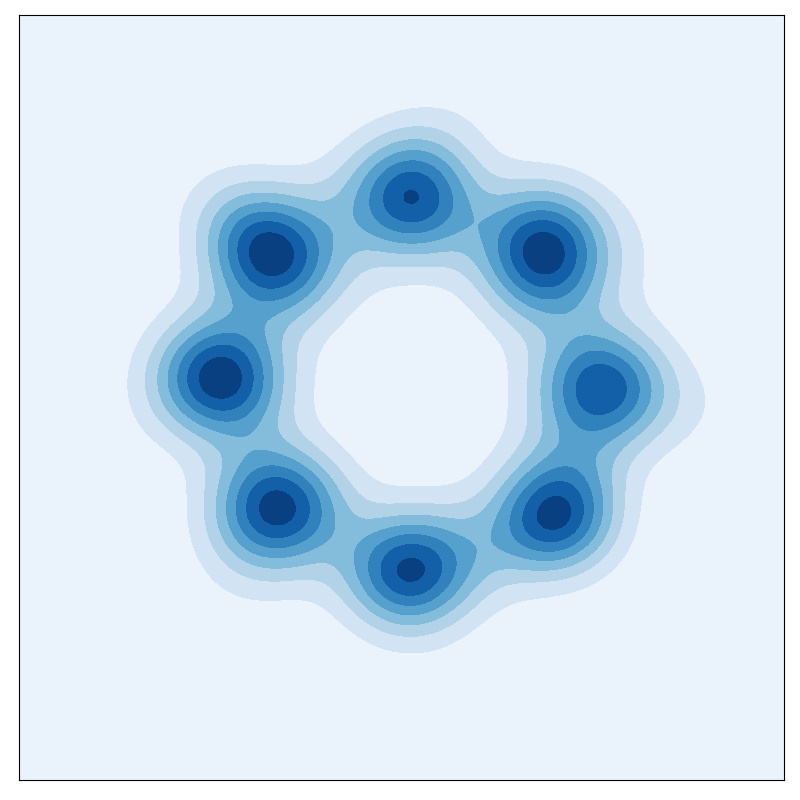}\label{fig:samples_final3}} 
    \subfloat[][]{\includegraphics[width=.19\textwidth]{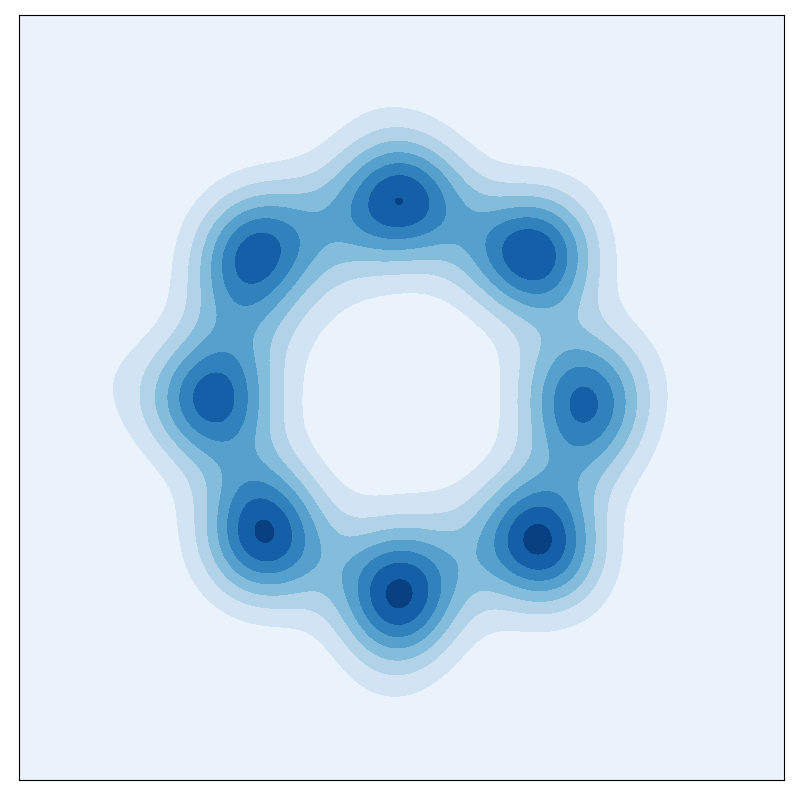}\label{fig:samples_final4}} 
    \subfloat[][]{\includegraphics[width=.19\textwidth]{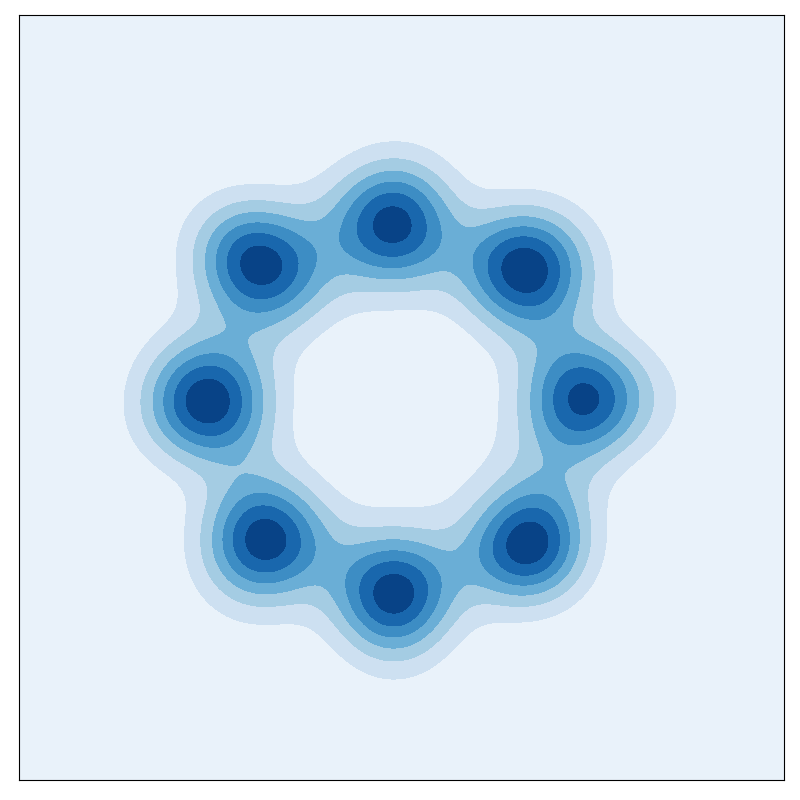}\label{fig:samples_final5}} 

    \subfloat[][]{\includegraphics[width=.19\textwidth]{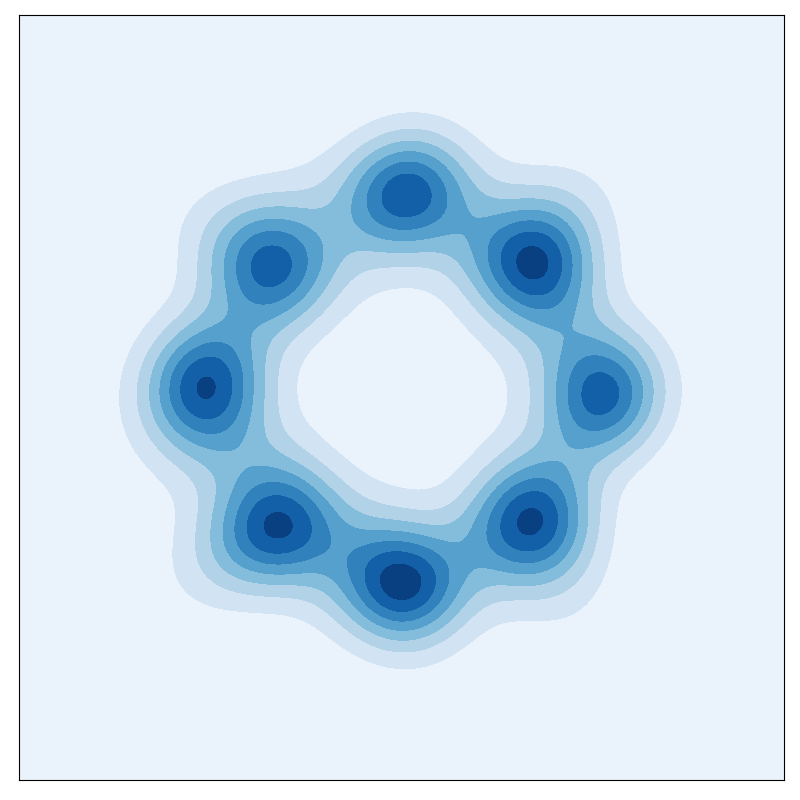}\label{fig:samples_final6}} 
    \subfloat[][]{\includegraphics[width=.19\textwidth]{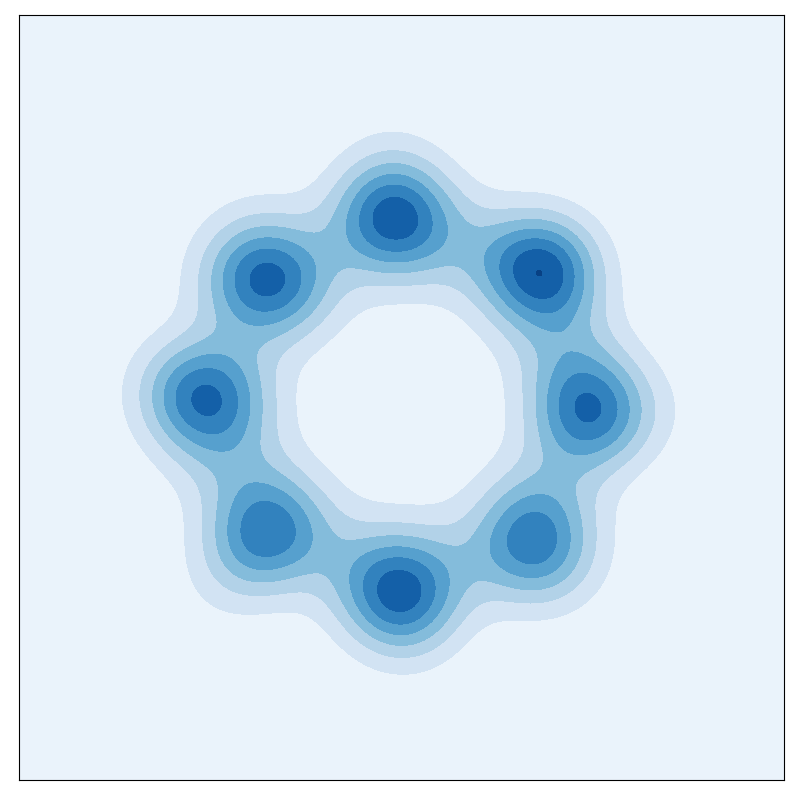}\label{fig:samples_final7}} 
    \subfloat[][]{\includegraphics[width=.19\textwidth]{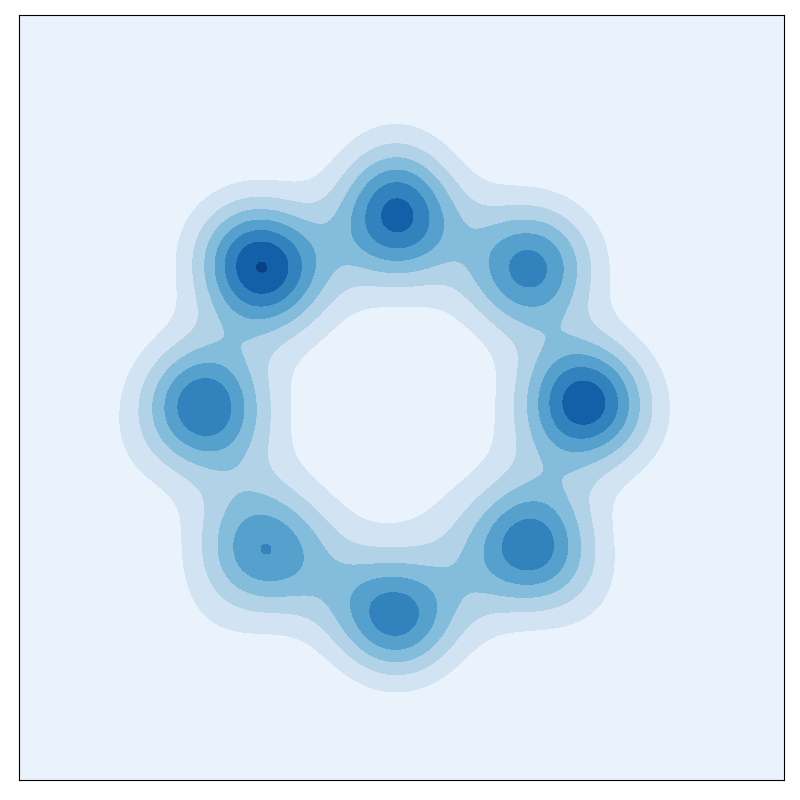}\label{fig:samples_final8}} 
    \subfloat[][]{\includegraphics[width=.19\textwidth]{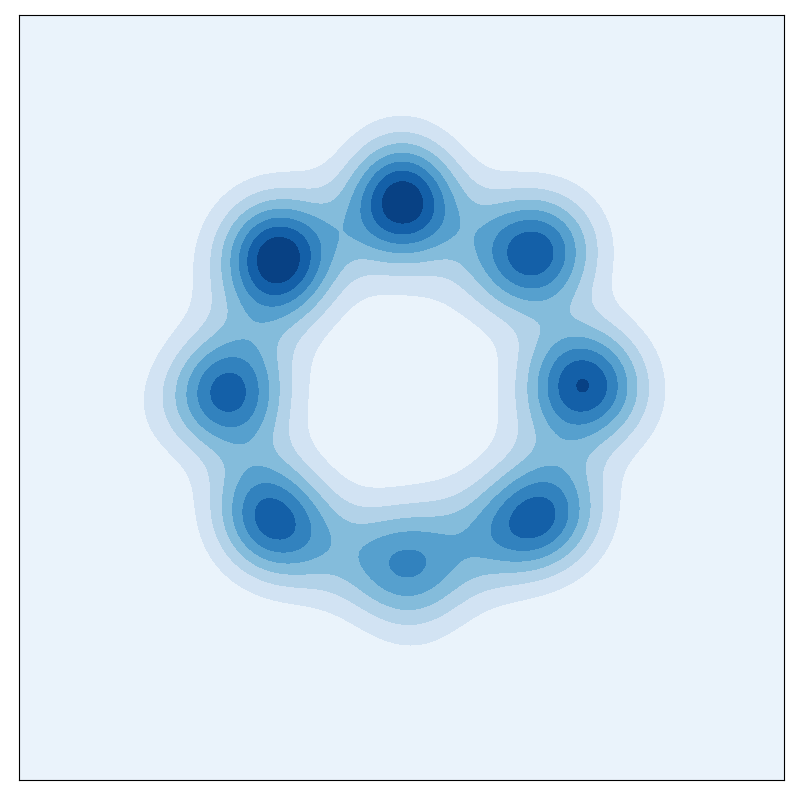}\label{fig:samples_final9}} 
    \subfloat[][]{\includegraphics[width=.19\textwidth]{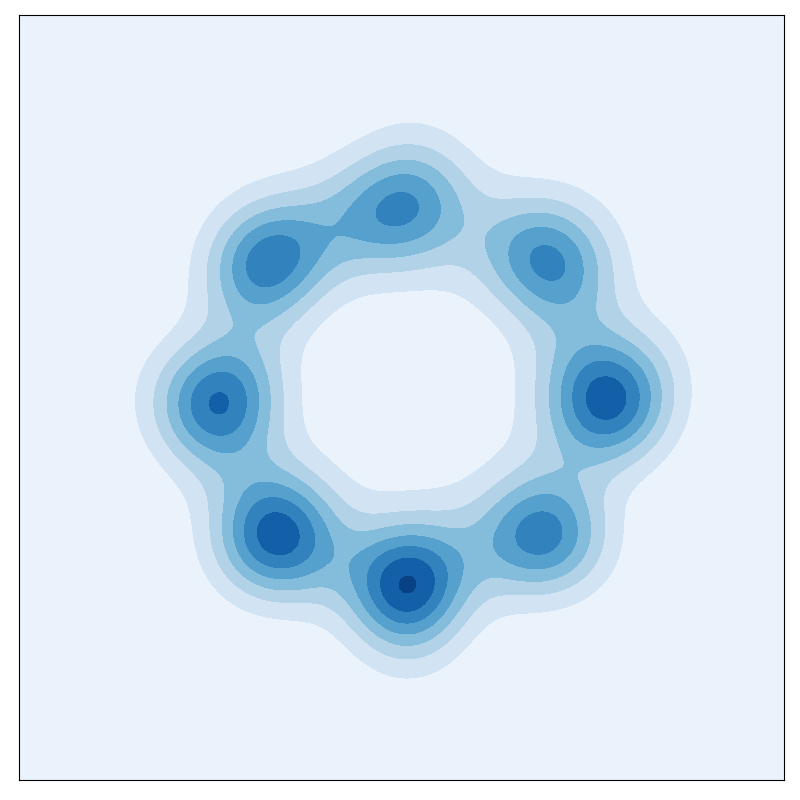}\label{fig:samples_final10}} 
 \end{minipage}
\end{minipage}
  \caption{\textbf{Mixture of Gaussians}: Figure~\ref{fig:gan_real_app} shows the real data distribution and Figures~\ref{fig:samples_final1}--\ref{fig:samples_final10} show the final generated distributions after 150k generator updates from 10 separate runs of the training procedure described in Section~\ref{sec:empirical} using gradient ascent for the discriminator. Each run recovers a generator distribution closely resembling the underlying data distribution.}
\label{fig:gan_final_samples}
\end{figure}

\textbf{Mixture of Gaussians.}
We noted in Section~\ref{sec:empirical} that we repeated our experiment training a generative adversarial network to learn a mixture of Gaussians 10 times and observed that for each run of the experiment our training algorithm recovered all modes of the distribution. We now show those results in Figure~\ref{fig:gan_final_samples}. In particular, in Figure~\ref{fig:gan_real_app} we show the real data distribution and in Figures~\ref{fig:samples_final1}--\ref{fig:samples_final10} we show the final generated distribution from 10 separate runs of the training procedure after 150k generator updates. Notably, we observe that each run of the training algorithm is able to generate a distribution that closely resembles the underlying data distribution, showing the stability and robustness of our training method.

We also performed an experiment on the mixture of Gaussian problem in which the discriminator algorithm was the Adam optimization procedure with parameters $(\beta_1, \beta_2)=(0.99, 0.999)$ and learning rate $\eta_2=0.004$ and the generator learning rate was $\eta_1=0.05$. The rest of the experimental setup remained the same. We ran this experiment 10 times and observed that for 7 out of the 10 runs of the final generated distribution was reasonably close to the real data distribution, while for 3 out of the 10 runs the generator did not learn the proper distribution. This is to say that we found the training algorithm was not as stable when the discriminator used Adam versus normal gradient ascent. The final generated distribution from the 7 of 10 runs with reasonable distributions are shown in Figure~\ref{fig:adam_gan_final_samples}.

\begin{figure}[t!]
  \centering
  \begin{minipage}{.9\textwidth}
  \centering
  \begin{minipage}{.2\textwidth}
  \subfloat[][Real Data]{\includegraphics[width=.9\textwidth]{Figs/samples_real_gaussian.png}\label{fig:adam_gan_real_app}}
 \end{minipage}%
  \begin{minipage}{.45\textwidth}
    \subfloat[][]{\includegraphics[width=.33\textwidth]{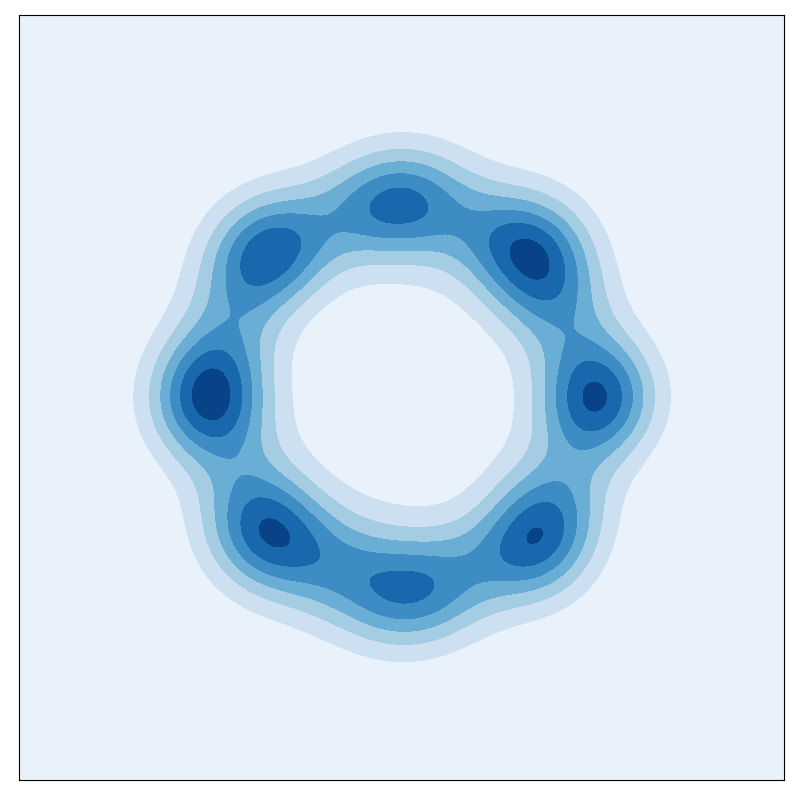}\label{fig:adam_samples_final1}} 
    \subfloat[][]{\includegraphics[width=.33\textwidth]{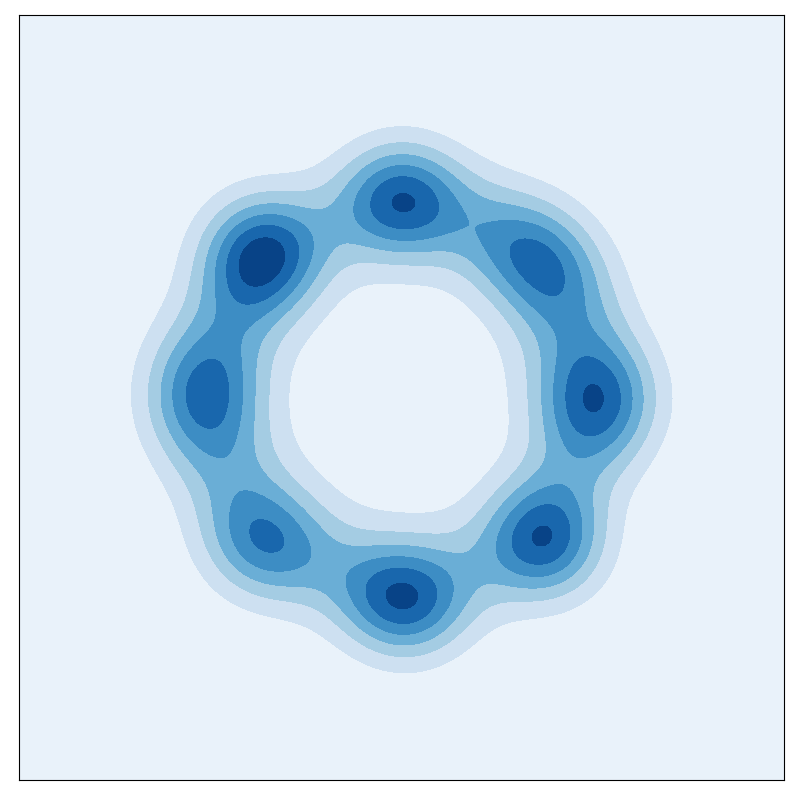}\label{fig:adam_samples_final4}} 
    \subfloat[][]{\includegraphics[width=.33\textwidth]{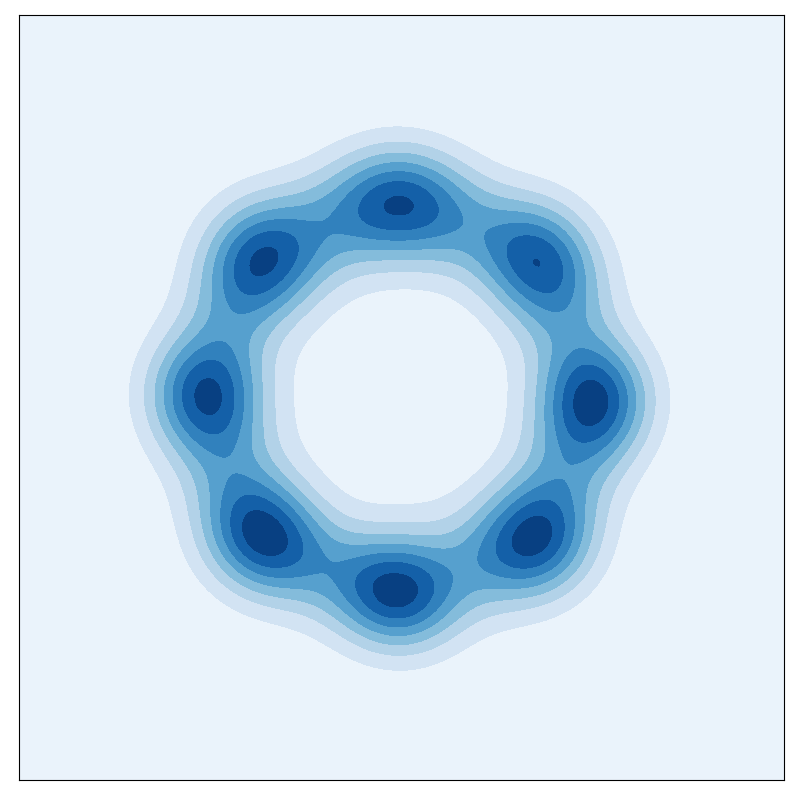}\label{fig:adam_samples_final6}} 

    \subfloat[][]{\includegraphics[width=.33\textwidth]{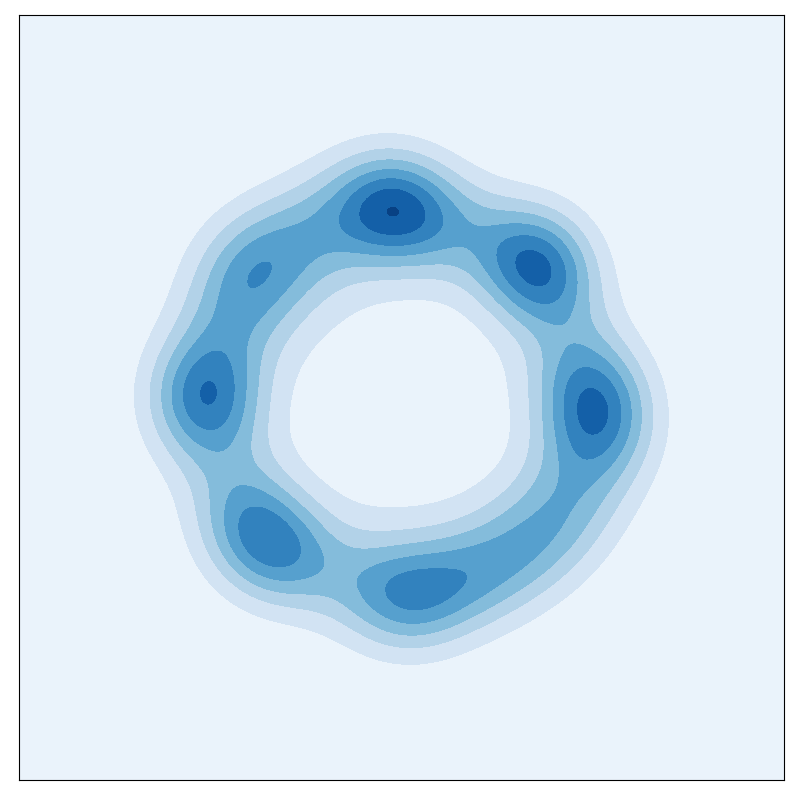}\label{fig:adam_samples_final7}} 
    \subfloat[][]{\includegraphics[width=.33\textwidth]{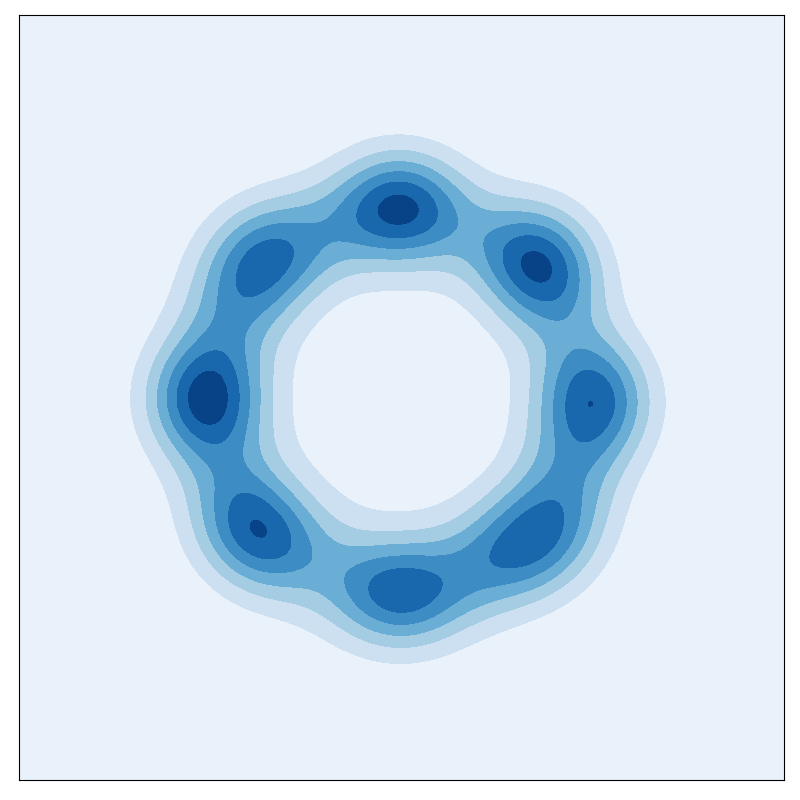}\label{fig:adam_samples_final8}} 
    \subfloat[][]{\includegraphics[width=.33\textwidth]{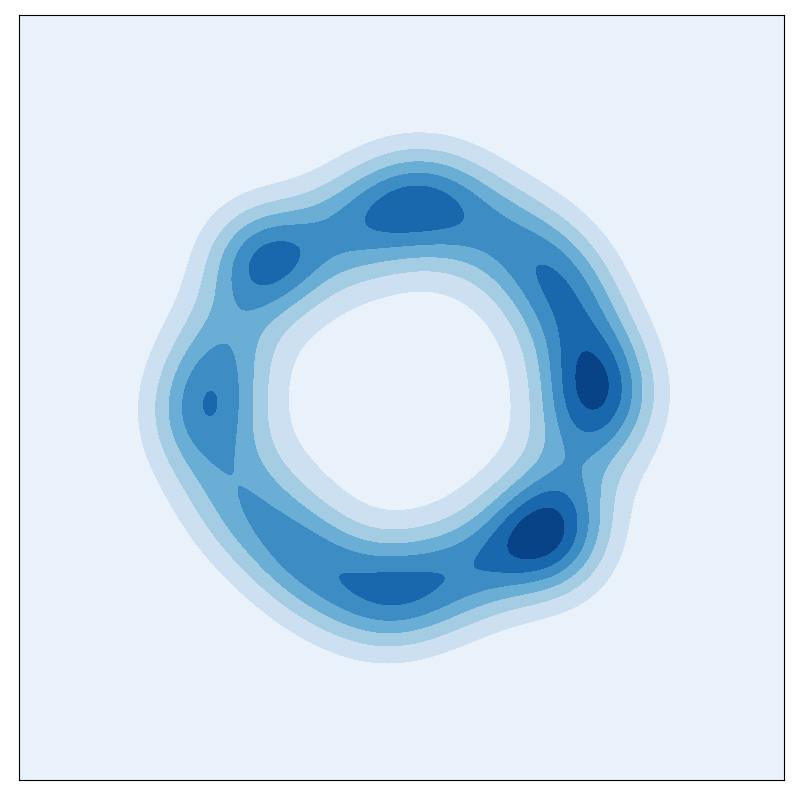}\label{fig:adam_samples_final9}} 
 \end{minipage}%
  \begin{minipage}{.155\textwidth}
    \subfloat[][]{\includegraphics[width=\textwidth]{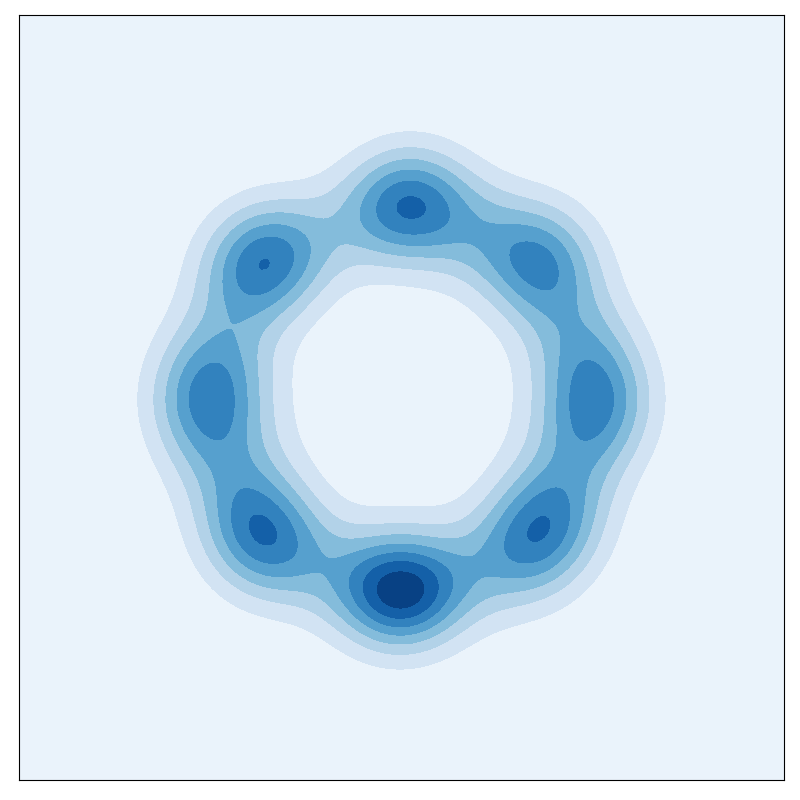}\label{fig:adam_samples_final10}} 
 \end{minipage}
\end{minipage}
  \caption{\textbf{Mixture of Gaussians}: Figure~\ref{fig:adam_gan_real_app} shows the real data distribution and Figures~\ref{fig:adam_samples_final1}--\ref{fig:adam_samples_final10} show the final generated distributions after 150k generator updates from the 7 out of 10 separate runs of the training procedure using Adam optimization for the discriminator that produced reasonable distributions.}
\label{fig:adam_gan_final_samples}
\end{figure}

\textbf{Adversarial Training.} We now provide some further background on the adversarial training experiment and additional results. It is now well-documented that the effectiveness of deep learning classification models can be vulnerable to adversarial attacks that perturb the input data
(see, e.g., \cite{biggio2013evasion, szegedy2013intriguing, kurakin2017adversarial, madry2017towards}). A common approach toward remedying this vulnerability is by training the classification model against adversarial perturbations. Recall from Section~\ref{sec:empirical} that given a data distribution $\mathcal{D}$ over pairs of examples $x\in \mathbb{R}^d$ and labels $y\in [k]$, parameters $\theta$ of a neural network, a set $\mathcal{S}\subset \mathbb{R}^d$ of allowable adversarial perturbations, and a loss function $\ell(\cdot, \cdot, \cdot)$ dependent on the network parameters and the data, adversarial training amounts to considering a minmax optimization problem of the form
$\min_{\theta}\mathbb{E}_{(x, y)\sim \mathcal{D}}[\max_{\delta\in \mathcal{S}}\ell(\theta, x+\delta, y)]$.

\begin{figure}[t!]
  \centering
  \includegraphics[width=.9\textwidth]{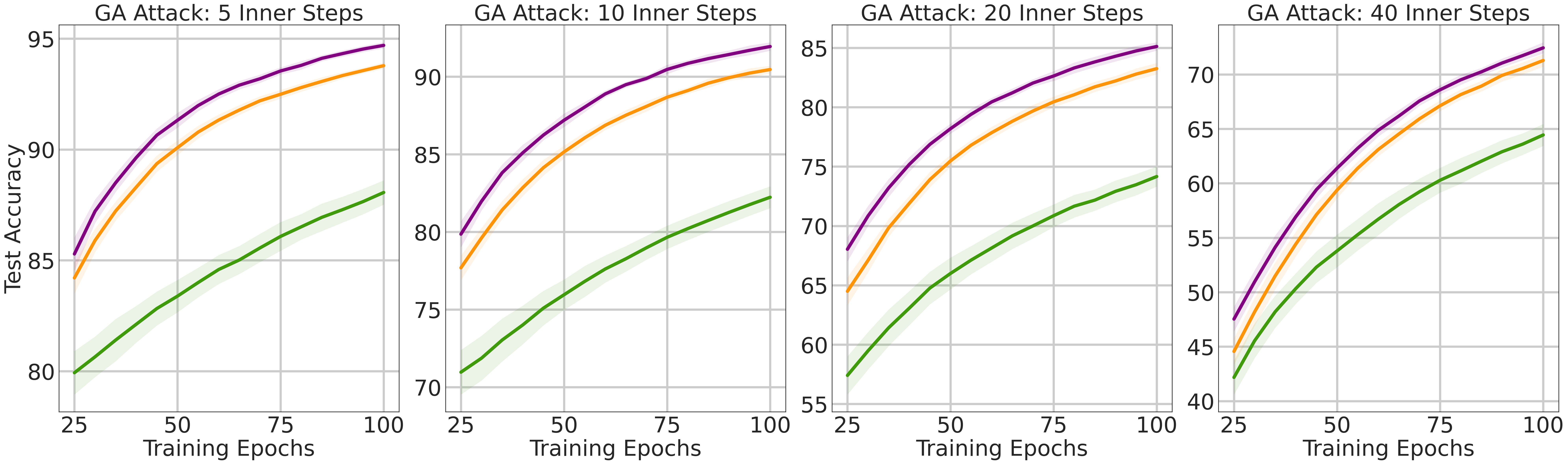}
  \includegraphics[width=.55\textwidth]{Figs/adversarial_legend.png}
  \caption{\textbf{Adversarial Training}: Test accuracy during the course of training against gradient ascent attacks with a fixed learning rate of $\eta_2=4$ and the number of steps $T\in \{5, 10, 20, 40\}$.}
  \label{fig:attack1_app}
\end{figure}
\begin{figure}[t!]
  \centering
  \includegraphics[width=.9\textwidth]{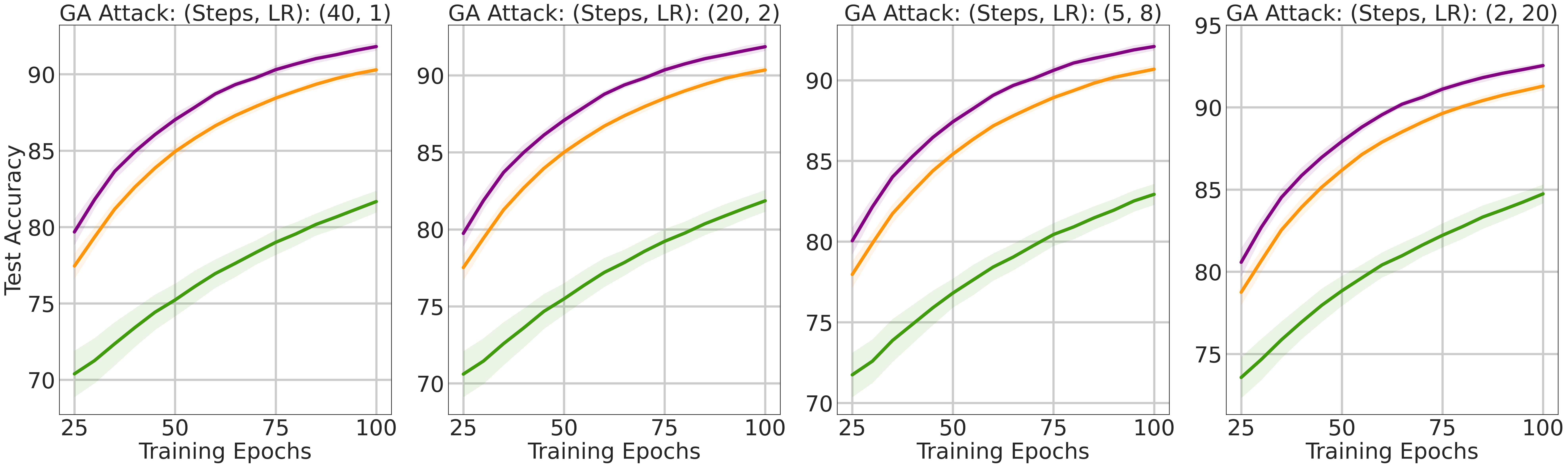}
  \includegraphics[width=.55\textwidth]{Figs/adversarial_legend.png}
  \caption{\textbf{Adversarial Training}: Test accuracy during the course of training against gradient ascent attacks with a fixed attack budget of $T\eta_2 = 40$ where $T$ is the number of attack steps and $\eta_2$ is the learning rate (LR).}
  \label{fig:attack_app}
\end{figure}
\begin{figure}[t!]
  \centering
  \includegraphics[width=.9\textwidth]{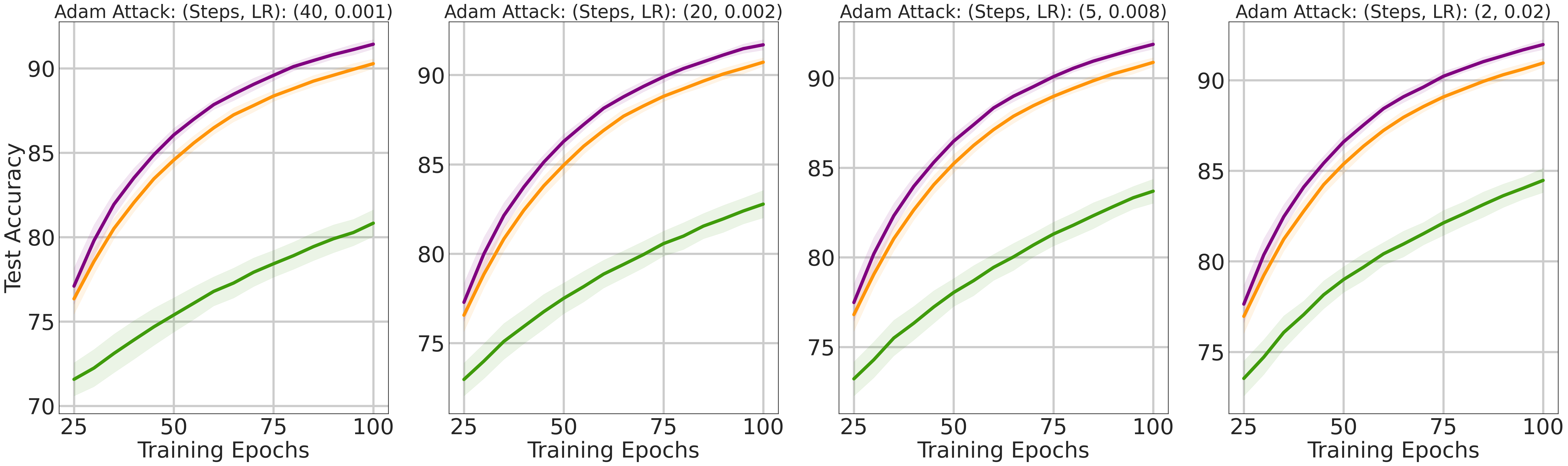}
  \includegraphics[width=.55\textwidth]{Figs/adversarial_legend.png}
  \caption{\textbf{Adversarial Training}: Test accuracy during the course of training against Adam optimization attacks with a fixed attack budget of $T\eta_2 = 0.04$ where $T$ is the number of attack steps and $\eta_2$ is the learning rate (LR).}
  \label{fig:attack_app_adam}
\end{figure}

A typical approach to solving this problem is an alternating optimization approach~\cite{madry2017towards}. 
In particular, each time a batch of data is drawn from the distribution, $T$-steps of projected gradient ascent are performed by ascending along the sign of the gradient of the loss function with respect to the data and projecting back onto the set of allowable perturbations, then the parameters of the neural network are updated by descending along the gradient of the loss function with the perturbed examples. The experimental setup we consider is analogous but the inner maximization loop performs $T$-steps of regular gradient ascent (not using the sign of the gradient and without projections). 

\begin{figure}[t!]
  \centering
    \subfloat[][]{\includegraphics[width=.4\textwidth]{Figs/grad_norm_small.png}\label{fig:grad_norm1}} 
    \subfloat[][]{\includegraphics[width=.4\textwidth]{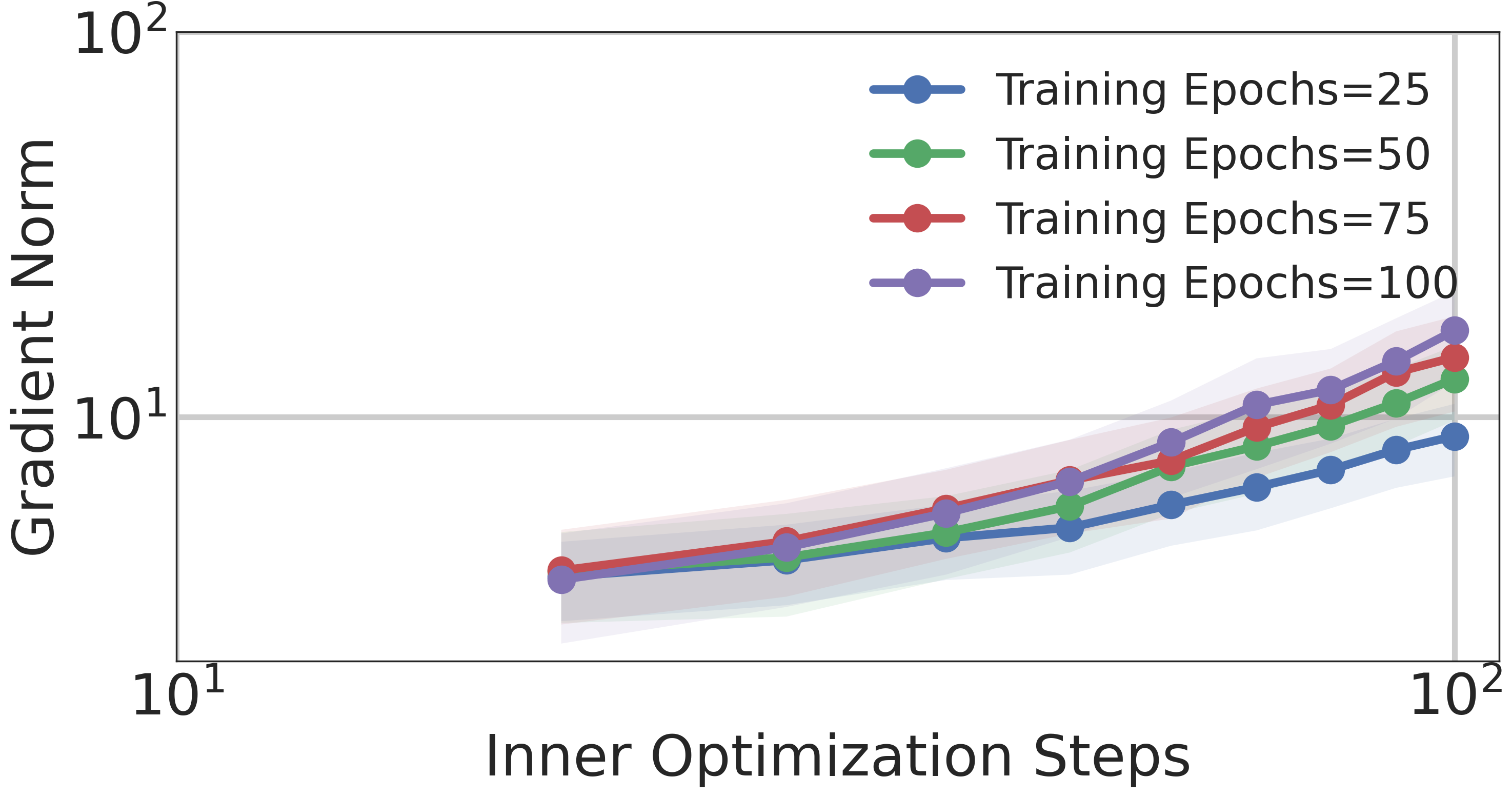}\label{fig:grad_norm2}} 
  \caption{\textbf{Adversarial Training}: $\|\nabla f(\theta, \mathcal{A}(\theta))\|$ as a function of the number of steps $T$ taken by the gradient ascent algorithm $\mathcal{A}$ evaluated at multiple points in the training procedure. Figure~\ref{fig:grad_norm1} corresponds to using $\eta_2=4$ in the gradient ascent procedure and Figure~\ref{fig:grad_norm2} corresponds to using $\eta_2=1$ in the gradient ascent procedure.}
  \label{fig:grad_norm_app}
\end{figure}

\begin{table}[t!]
\begin{center}
 \begin{tabular}{|c c|} 
 \hline
 Layer Type & Shape  \\ 
 \hline\hline
 Convolution + ReLU & $5\times 5\times 20$  \\ 
 \hline
 Max Pooling & $2\times 2$  \\
 \hline
 Convolution + ReLU & $5\times 5\times 20$  \\
 \hline
 Max Pooling & $2\times 2$  \\
  \hline
 Fully Connected + ReLU & $800$  \\
  \hline
Fully Connected + ReLU& $500$  \\
 \hline
 Softmax & $10$ \\
 \hline
\end{tabular}
\end{center}
\caption{Convolutional neural network model for the adversarial training experiments.}
\label{table:cnn}
\end{table}

For the adversarial training experiment considered in Section~\ref{sec:empirical}, we also evaluated the trained models against various other attacks. Recall that the models were training using $T=10$ steps of gradient ascent in the inner optimization loop with a learning rate of $\eta_2 =4$. To begin, we evaluated the trained models against gradient ascent attacks with a fixed learning rate of $\eta_2=4$ and a number of steps $T\in \{5, 10, 20, 40\}$. 
We also evaluated the trained models against gradient ascent attacks with a fixed budget of $T\eta_2=40$ and various choices of $T$ and $\eta_2$. These results are presented in Figure~\ref{fig:attack_app}. Finally, we evaluated the trained models against attacks using the Adam optimization method with a fixed budget of $T\eta_2=0.04$ and various choices of $T$ and $\eta_2$. These results are presented in Figure~\ref{fig:attack_app_adam}.
Notably, we see that our algorithm outperforms the baselines and similar conclusions can be drawn as from the experiments for adversarial training presented in Section~\ref{sec:empirical}. The additional experiments highlight that our method of adversarial training is robust against attacks that the algorithm did not use in training when of comparable computational power and also that it improves robustness against attacks of greater computational power than used during training.

In Section~\ref{sec:empirical}, we showed the results of evaluating the gradient norms $\|\nabla f(\theta, \mathcal{A}(\theta))\|$ as a function of the number of gradient ascent steps $T$ in the adversary algorithm $\mathcal{A}$ and observed that it grows much slower than exponentially. Here we provide more details on the setup. We took a run of our algorithm trained with the setup described in Section~\ref{sec:empirical} and retrieved the models that were saved after 25, 50, 75, and 100 training epochs. For each model, we then sampled 100 minibatches of data and for each minibatch performed $T\in \{20,30,40,50,60,70,80,90, 100\}$ steps of gradient ascent with learning rate $\eta_2=4$ (the learning rate from training) and then computed the norm of the gradient $\nabla f(\theta, \mathcal{A}(\theta))$ where $\mathcal{A}$ corresponds to the gradient ascent procedure with the given number of steps and learning rate. In Figure~\ref{fig:grad_norm}, which is reproduced here in Figure~\ref{fig:grad_norm1}, the mean of the norm of the gradients over the sampled minibatches are shown with the shaded window indicating a standard deviation around the mean. We also repeated this procedure using $\eta_2=1$ and show the results in Figure~\ref{fig:grad_norm2} from which similar conclusions can be drawn.

Finally, we provide details on the convolutional neural network model for the adversarial training experiments. In particular, this model is exactly the same as considered in~\cite{nouiehed2019solving} and we reproduce it in Table~\ref{table:cnn}.

\textbf{Experimental Details.}
For the experiments with neural network models we used two Nvidia GeForce GTX 1080 Ti GPU and the PyTorch higher library\cite{deleu2019torchmeta} to compute $\nabla f(\theta, \mathcal{A}(\theta))$. In total, running all the experiments in the paper takes about half of a day with this computational setup. The code for the experiments is available at \url{https://github.com/fiezt/minmax-opt-smooth-adversary}.

\end{document}